\ifdefined\pdfoutput\pdfoutput=1\fi
\documentclass{article}

\PassOptionsToPackage{numbers,compress}{natbib}

\usepackage[preprint]{neurips_2026}

\usepackage[utf8]{inputenc} %
\usepackage[T1]{fontenc}    %
\usepackage{hyperref}       %
\usepackage{url}            %
\usepackage{booktabs}       %
\usepackage{amsfonts}       %
\usepackage{nicefrac}       %
\usepackage{microtype}      %
\usepackage{xcolor}         %

\usepackage{amsmath,amssymb,amsthm,graphicx,enumitem,mathrsfs}
\usepackage{subcaption}

\usepackage{amsthm}
\usepackage{dsfont}
\theoremstyle{plain}
\newtheorem{theorem}{Theorem}[section]
\newtheorem{lemma}{Lemma}[section]

\newtheorem{remark}{Remark}[section]
\usepackage{tcolorbox}
\usepackage{booktabs}  
\usepackage{placeins}
\usepackage{bm} 
\theoremstyle{remark}

\newtheoremstyle{bolddefinition}%
  {3pt}%
  {3pt}%
  {\itshape}%
  {}%
  {\bfseries\boldmath}%
  {.}%
  {.5em}%
  {\thmname{#1}\thmnumber{ #2}\textbf{\thmnote{ [#3]}}}
\theoremstyle{bolddefinition}
\newtheorem{definition}{Definition}[section]
\newtheorem{assumption}[definition]{Assumption}
\newtheorem{proposition}[theorem]{Proposition}
\usepackage{algorithm}
\usepackage{algorithmic}
\usepackage{wrapfig}
 \usepackage{pifont}

\newcommand{\Inf}{\mathcal{I}}
\newcommand{\D}{\mathcal{D}}
\newcommand{\A}{\mathcal{A}}
\newcommand{\prior}{\mathcal{P}}
\newcommand{\prob}{\mathbb{P}}
\newcommand{\E}{\mathbb{E}}
\newcommand{\cpi}{\{\pi^i\}}
\newcommand{\bpi}{\{\bar{\pi}^i\}}
\newcommand{\lstm}{\text{LSTM}}
\newcommand{\emb}{\text{Embed}}
\newcommand{\yes}{\ding{51}}
\newcommand{\no}{\ding{55}}

\definecolor{citeblue}{RGB}{33,99,181}
\definecolor{refpink}{RGB}{214,51,132}

\definecolor{TD3_color}{HTML}{3C9BC9}
\definecolor{VND_color}{HTML}{A6CA68}
\definecolor{UNIFORM_color}{HTML}{FFB83D}
\definecolor{INDEP_color}{HTML}{FC757B}

\DeclareMathOperator*{\argmax}{arg\,max}

\hypersetup{
  colorlinks=true,
  citecolor=citeblue,
  linkcolor=refpink,
  urlcolor=black
}

\title{ICMAPE: In-Context Multiagent Pure Exploration}

\author{%
  Xinyi Hu \\
  Boston University \\
  \texttt{xhu07@bu.edu} \\
  \And
  Alessio Russo \\
  Boston University \\
  \texttt{arusso2@bu.edu} \\
  \And
  Aldo Pacchiano \\
  Boston University \\
  Broad Institute of MIT and Harvard \\
  \texttt{pacchian@bu.edu} \\
}

\begin{document}

\maketitle

\begin{abstract}
In some multi-agent systems, the quantity to be optimized is not an externally specified reward but the information acquired about unknown properties of the environment as done in active sequential hypothesis testing (ASHT) problems. However, the ASHT literature tends to focus on finite single-agent problems with well-specified models, while there is currently a gap for practical multi-agent methods that can perform active sequential testing. 
We fill this gap with ICMAPE, a Bayesian learning-based framework for decentralized multi-agent pure-exploration driven by inference objectives. ICMAPE converts the fixed-confidence identification objective into a reward derived from inference confidence, so that standard reinforcement learning machinery can be applied to decentralized pure exploration.  It jointly learns a centralized neural inference network that estimates a posterior distribution over hypotheses from global trajectory data, and decentralized policies that select actions from local observation histories and learn when to stop collecting data once the target confidence is reached.
On two synthetic benchmarks and a Maryland nitrate concentration monitoring task based on real-world data, ICMAPE-TD3 achieves target accuracy with fewer exploration steps.
Code is available at \url{https://github.com/xinyihu3721/ICMAPE}.
\end{abstract}

\section{Introduction}
The majority of existing multi-agent reinforcement learning (MARL) algorithms are designed to maximize cumulative reward \citep{ming1993, matignon2012,son2019,lowe2017,foerster2018,iqbal2019,ackermann2019,sheikh2020,zhang2024, wei2025lero}. 
 However, in some multi-agent systems the goal is to collaboratively collect informative data in order to identify unknown properties of the environment while minimizing data collection cost.
For example, in distributed sensor localization, sensors combine local measurements to infer an unknown target or source location \citep{patwari2005locating,MAO2007,jsan6040024}. 
We study an active setting in which agents share a common hypothesis identification objective but, due to practical constraints such as limited bandwidth, organizational boundaries, confidentiality requirements, or reporting delays, must adaptively select what information to collect using only their local observation histories, while a centralized inference module aggregates the collected data to make the final inference. Such a perspective is closely related to active sequential hypothesis testing and pure exploration, where actions are selected adaptively to identify the true hypotheses with minimal sampling cost \citep{naghshvar2013Active, kaufmann2016,garivier2016,russo2016Simple,russo2025pure,vannella2024}. Even though multi-agent and federated variants exist, they often retain centralized action selection or focus primarily on communication constraints rather than decentralized data collection~\cite{vannella2023,mitra2021, kota2023, Chen_2024}.

Decentralized multi-agent pure exploration poses three challenges. First, each agent must make decisions from only its local trajectory, without observing the full joint exploration history. Second, because the hypothesis is identified from the collective data, the decentralized policies must induce coordinated exploration patterns that are informative at the global level rather than redundant. Third, agents must learn when to stop, since sampling past identifiability only adds additional cost.

Standard cooperative MARL approaches under centralized training with decentralized execution (CTDE) are typically formulated for maximizing cumulative environment reward ~\citep{sunehag2017, rashid2020, lowe2017, foerster2018, ackermann2019, zhang2024}, so the high reward behaviors they learn need not be informative about the underlying hypothesis. Pure exploration methods optimize the right objective, but are tailored to specific bandit models and do not transfer to decentralized multi-agent inference problems \citep{kaufmann2016, garivier2016}. Neither line therefore provides a general learning-based method for decentralized data collection driven by an inference objective.

We propose \emph{In-Context Multi-Agent Pure Exploration} (ICMAPE), a learning-based framework for decentralized multi-agent exploration. ICMAPE jointly learns a centralized inference network $\Inf$, which maps global trajectory data to a posterior over hypotheses, and decentralized policies that collect data from local observation histories. Here, a hypothesis denotes a possible answer to the inference task, such as a target location or latent environment parameter. ICMAPE is inspired by In-Context Pure Exploration (ICPE)~\citep{russo2025}, which learns exploration policies across problem instances in the single-agent setting. Our work contributes to study what changes when the data-collection policy is restricted to the decentralized class
$\Pi_{\mathrm{dec}} = \left\{ \pi=(\pi^1,\ldots,\pi^N): \pi_t(a_t\mid \D_t) = \prod_{i=1}^{N} \pi_t^i(a_t^i\mid \D_t^i) \right\},$ 
while the hypothesis is still identified from the collective trajectory 
$\D_t=\{\D_t^i\}_{i=1}^{N}.$
This creates a coordination problem absent from the single-agent formulation: no agent observes the full evidence, yet the local policies must jointly produce informative and non-redundant data. Concretely, we (i) extend the fixed-confidence pure-exploration formulation from centralized to decentralized policy classes; (ii) show that the posterior over hypotheses admits a policy-independent likelihood kernel even when trajectories are generated by decentralized history-dependent policies (Lemma~\ref{lemma:posterior kernel}); (iii) incorporate stopping into the decentralized setting through augmented action spaces under a common stopping time (Lemma~\ref{lemma:stop_action} Proposition~\ref{prop:optimal_policy}); and (iv) give a CTDE learning procedure in which decentralized actors are optimized against the global inference objective (Section~\ref{sec:appendix_vdn} and ~\ref{sec:policy_learn}). 
MATD3 is one realization; other CTDE optimizers can be used with the same inference reward, dual-variable schedule, and stopping mechanism.

\section{Related Work}

\paragraph{Active Sequential Testing.}
Our work builds on active sequential hypothesis testing \citep{naghshvar2013Active} and fixed-confidence pure exploration \citep{kaufmann2016,garivier2016}, which adaptively collect data to identify a target hypothesis while minimizing expected stopping time subject to a correctness constraint. Classical approaches, including Chernoff-style testing, Track-and-Stop, and Top-Two sampling, derive sampling and stopping rules for specific statistical models \citep{naghshvar2013Active, kaufmann2016,garivier2016,russo2016Simple,vannella2024,degenne2019Nonasymptotic,jourdan2022top,wang2021Fast,russo2025Adaptive}. Recent work on In-Context Pure Exploration (ICPE) treats exploration as a learnable object across problem instances~\citep{russo2025pure}.

Closest to our setting, \citet{szostak2024deep} study decentralized active hypothesis testing with deep MARL, where agents select sampling and stopping actions and rewards derive from Bayesian risk. Their method requires posterior beliefs computed from known observation models, and neither the stopping action nor the learning objective is theoretically justified. We instead learn the inference rule, and derive both the stopping action and the objective from the fixed-confidence formulation (Proposition~\ref{prop:optimal inf}, Lemma~\ref{lemma:stop_action}, Proposition~\ref{prop:optimal_policy}).

\paragraph{Federated and Distributed Pure Exploration.}
In federated BAI, multiple agents explore bandit arms and communicate with a central server that aggregates information and identifies the arm with the highest mean reward across agents, while minimizing data exposure \citep{mitra2021, kota2023, Chen_2024}. Another related line studies multi-agent best arm identification with centralized action selection under structured reward models such as factor graphs \citep{vannella2023}. 
However, these formulations assume centralized action selection and target an optimal joint action. Our setting instead requires decentralized action selection from local histories, with agents jointly collecting data for broader inference tasks.

\paragraph{Cooperative Multi-Agent Reinforcement Learning.}
 We adopt the CTDE paradigm, where agents are trained in a centralized manner but execute using only local observations.
Under CTDE, value factorization methods (VDN \citep{sunehag2017}, QMIX \citep{rashid2020}, QTRAN \citep{son2019}) decompose a joint value function into per-agent components, while policy-gradient methods (MADDPG \citep{lowe2017}, COMA \citep{foerster2018}, MAAC \citep{iqbal2019}, MATD3 \citep{ackermann2019}) learn decentralized policies with centralized critics. Both are designed for reward maximization and do not address inference objectives.

\section{Background}

\paragraph{Environment Modeling.}
We consider a system with $N$ agents interacting with a common environment drawn from a model class $\mathcal{M}$, under a distribution $\mathcal{P} \in \Delta(\mathcal{M})$. Each environment is modeled as a tuple $M = (\mathcal{X}, \mathcal{A}, P, \rho)$, where $\mathcal{X} = \bigcup_{i=1}^N \mathcal{X}^i$ is the union of agent-specific observation spaces, and $\mathcal{A} = \bigcup_{i=1}^N \mathcal{A}^i$ is the union of agent-specific action sets. The transition function is 
$P = (P_t)_{t \in \mathbb{N}}$, where $P_t : (\mathcal{X} \times \mathcal{A})^t \to \Delta(\mathcal{X})$, and 
$\rho\in\Delta(\mathcal{X})$ is the initial observation distribution.

\paragraph{Agents and Decentralized Policies.}
Agents act simultaneously at each time step. Each agent $i \in \{1,\ldots,N\}$ receives a local observation $x_t^i \in \mathcal{X}^i$, selects an action $a_t^i \in \mathcal{A}^i$, and then receives the next local observation 
\(
x_{t+1}^i.
\)
Denote the joint action by $a_t=(a_t^1,\ldots,a_t^N)$. The local trajectory available to agent $i$ is
\(
\mathcal{D}_t^i = (x_1^i,a_1^i,\ldots,a_{t-1}^i,x_t^i).
\)
We denote the joint observation by
\(
x_t=(x_t^1,\ldots,x_t^N),
\)
and the full joint trajectory by
\(
\mathcal{D}_t=\{\mathcal{D}_t^i\}_{i=1}^N.
\)
We restrict attention to decentralized policies. Each agent follows a policy
$\pi^i=(\pi_t^i)_{t\in\mathbb{N}}$ satisfying
\(
\pi_t^i:\mathcal{D}_t^i \to \Delta(\mathcal{A}^i).
\)
Thus, agent $i$ samples actions according to
\(
a_t^i \sim \pi_t^i(\cdot \mid \mathcal{D}_t^i),
\)
using only its own local trajectory.

\paragraph{Hypothesis.} Within the collective observation space $\mathcal{X}$, we consider a predefined task-specific hypothesis class $\mathcal{H}$ and restrict our attention to the finite case (extensions to continuous classes are possible \cite{russo2026_c}). For any environment $M \in \mathcal{M}$, there exists a ground-truth hypothesis $H_M^\star \in \mathcal{H}$.  For example, in a source localization task, $H^\star_M$ may be the hidden location of the signal source. We drop the subscript $M$ and write $H^\star$ when the dependence is clear from context.

\paragraph{Objective.} Given a trajectory of observations collected by the agents, the goal is to accurately identify the ground-truth hypothesis. To formalize this, we introduce an inference function that maps observed data to a distribution over hypotheses. The inference function operates on the full joint trajectory $\mathcal{D}_t = \{\mathcal{D}_t^i\}_{i=1}^N$ and is defined as
\(
\Inf: \mathcal{D}_t \mapsto \Delta(\mathcal{H}).
\) 
Given a target error probability $\delta\in(0,1)$, the decentralized data-collection policies $\pi$ and the inference rule $\Inf$ induce a stopping time $\tau$, which denotes the number of joint interaction rounds before termination. The objective is to minimize the expected stopping time while ensuring that the final prediction is correct with probability at least $1-\delta$:
\begin{equation}
\label{eq:optimization_eq}
\min_{\pi, \tau, \Inf} \mathbb{E}^\pi [\tau]
\quad \textbf{s.t.} \quad 
\mathbb{P}^\pi\big(\Inf(\mathcal{D}_\tau) = H^\star\big) \geq 1 - \delta,
\end{equation}
in which $\mathbb{P}^\pi(\cdot)$ denotes the probability measure over trajectories induced by first sampling an environment $M \sim \mathcal{P}$ from the model class $\mathcal{M}$ and then executing policy $\pi$.

\section{Method}

Our method begins by reformulating the constrained identification problem through a Lagrangian relaxation. We then optimize over the inference rule, showing that the optimal rule selects the hypothesis with maximal posterior confidence. This yields a reward-based sequential decision problem with an auxiliary stopping action, which we solve using an actor–critic reinforcement learning framework under a CTDE paradigm.

\subsection{Lagrangian Relaxation and RL Reformulation}
\label{sec:rl_reformulation}

Introducing a multiplier $\lambda \geq 0$, the Lagrangian relaxation of Eq.~\ref{eq:optimization_eq} is given by

\begin{equation}
\label{eq:dual-relax1}
\inf_{\lambda \ge 0} \; \sup_{\pi, \mathcal{I}}
\left[
- \mathbb{E}_{\pi}[\tau]
+
\lambda \Big(
\mathbb{P}_{\pi}\big( \mathcal{I}(\mathcal{D}_\tau) = H^\star \big)
- 1 + \delta
\Big)
\right].
\end{equation}

For any fixed policy $\pi$, we can optimize over the inference rule. At any time $t \in \mathbb{N}$, given the observed trajectory $\mathcal{D}_t$, the optimal inference rule selects the hypothesis with the largest posterior probability (Proposition \ref{prop:optimal inf}):
\[
\Inf_t(\mathcal{D}_t)
\in
\arg\max_{H \in \mathcal{H}}
\mathbb{P}_t(H^\star = H \mid \mathcal{D}_t),\]
where $\mathbb{P}_t(\cdot \mid \mathcal{D}_t)$ denotes the posterior distribution over hypotheses. This induces the posterior confidence
\(
r_t(\mathcal{D}_t)
:=
\max_{H \in \mathcal{H}}
\mathbb{P}_t(H^\star = H \mid \mathcal{D}_t)
\), which represents the probability of correct identification under the optimal inference rule. Substituting it into the Eq.~\ref{eq:dual-relax1} yields
\(
\inf_{\lambda \ge 0}\ \sup_{\pi}
\mathbb{E}^\pi\!\left[
-\tau + \lambda\bigl(r_\tau(\mathcal{D}_\tau) - 1 + \delta\bigr)
\right].
\)

To express this objective as a sequential decision problem, we augment each agent's action space with an auxiliary stopping action $a_{\mathrm{stop}}$. At each time step, agents may either continue collecting data or trigger termination. We assume that stopping is shared across agents, meaning if any agent selects the stopping action, the data collection process terminates and the stopping signal is observed by all agents. This induces a common stopping time in Assumption~\ref{assumption:common_stopping} concerns the filtration rather than the agents' information sets, and it can be satisfied with no exchange of
observations or histories (Remark~\ref{rem:public_termination}).
\begin{assumption}[Common stopping time]
There exists a universal stopping time $\tau$ such that for every agent $i$, $\tau$ is a stopping time with respect to the local filtration $\mathcal{F}^i_t = \sigma(\mathcal{D}^i_t)$; i.e., for all $t \ge 1$, the event $\{\tau \leq t\} \in \mathcal{F}^i_t$. Equivalently,
\(
\{\tau \leq t\} \in \bigcap_{i=1}^N \mathcal{F}^i_t .
\)
\label{assumption:common_stopping}
\end{assumption}
Under this formulation, continuing exploration incurs a unit cost, while stopping receives a terminal reward based on the posterior confidence of the inferred hypothesis. We define the reward function as
\[
r_{t,\lambda}(\mathcal{D}_t, a_t)
=
-\mathbf{1}\{a_t \neq a_{\mathrm{stop}}\} +
\lambda\bigl(r_t(\mathcal{D}_t) - 1 + \delta\bigr)
\mathbf{1}\{a_t = a_{\mathrm{stop}}\}.
\]
The corresponding action-value function satisfies a Bellman recursion in which continuing incurs unit cost and stopping terminates with the confidence reward; the dual formulation and the resulting reward-maximization problem are derived in Appendix~\ref{pf:dual_formulation}.

\subsection{Inference Network}
\label{sec:inference_network}

We introduce an inference network that extracts task-relevant information from the global trajectory data collected by all agents.
Given the trajectory up to time $t$, denoted by $\mathcal{D}_t$, the inference network $\Inf_\psi$, parameterized by $\psi$, outputs a posterior distribution over hypotheses,
\(
p_\psi(H \mid \mathcal{D}_t).
\)
We obtain a point estimate via maximum a posteriori (MAP) inference, given by
\(
\hat{H}_t = \arg\max_{H\in\mathcal{H}} \; p_\psi(H \mid \mathcal{D}_t).
\)
The inference network is trained using the true hypothesis $H^\star$ sampled with each environment instance by minimizing the negative log-likelihood loss
\begin{equation}
\mathcal{L}_{\mathrm{inf}}
=
-\log p_\psi(H^\star \mid \mathcal{D}_t).
\label{eq:inf_loss}
\end{equation}

We define the policy-learning reward as the posterior confidence of the inferred hypothesis,
\(
r_t = p_\psi(\hat{H}_t \mid \mathcal{D}_t)
\).

\subsection{Critics}

We build on MATD3~\citep{ackermann2019}, which extends MADDPG~\citep{lowe2017} with the twin-critic architecture of TD3~\citep{fujimoto2018}. During training, the centralized critic conditions on the global trajectory and joint action, while the actor remains decentralized and depends only on the agent's local trajectory. The two Q-functions, $Q_{\phi_1}$ and $Q_{\phi_2}$, are used through a minimum operator in the Bellman target to reduce overestimation bias.  Since multiplication by the positive constant $\lambda$ does not change the optimal policy for fixed $\lambda$, we parameterize the tradeoff by a learnable per-step cost $\zeta \approx 1/\lambda$. Therefore, continuing exploration incurs cost of $-\zeta$, while stopping receives the terminal confidence reward.

For each agent $i$, the critic maps the global trajectory and the other
agents' actions to a vector of values over agent $i$'s action space, and
$Q_{\phi_j}^i(\mathcal{D}_t, a_t^i, a_t^{-i})$, $j \in \{1,2\}$, denotes
the coordinate selected by $a_t^i$.

We define the immediate cost as
\begin{equation}
c_t =
\begin{cases}
-\zeta, & \text{if no agent stops at time } t, \\[4pt]
r_t(\mathcal{D}_t) - 1+ \delta, & \text{otherwise,}
\end{cases}
\label{eq:cost}
\end{equation}
where $r_t(\mathcal{D}_t)$ is the posterior confidence of the inferred hypothesis. 
Let
\(
s_t=\mathbf{1}\{\exists i,\ a_t^i=a_{\mathrm{stop}}\}
\)
denote whether any agent stops at time $t$. Then the TD3 target for agent $i$ is
\[
y_t^i =
 c_t
+
\gamma(1-s_t)
\min_{\ell \in \{1,2\}}
Q_{\phi_\ell^{\mathrm{tar}}}^i
\bigl(
\mathcal{D}_{t+1},
\tilde  a_{t+1}^{i,\text{tgt}},
\tilde  a_{t+1}^{-i,\text{tgt}}
\bigr),
\]
where $\gamma$ is the discount factor, $Q_{\phi_1^{\mathrm{tar}}}$ and $Q_{\phi_2^{\mathrm{tar}}}$ are target critics, and $\tilde a_{t+1}^{\text{tgt}}$ is obtained from the target actor.
Given the batch-level posterior confidence $\widehat r_t$, we update $\zeta$ using the dual-variable rule $e_\zeta = \widehat r_t - (1-\delta)$ and $\zeta \leftarrow \max\{0,\zeta+\eta e_\zeta\}$, where $\eta$ is the learning rate. We initialize $\zeta=0$, so the exploration cost is initially inactive and increases only when the posterior confidence exceeds the target level $1-\delta$.

Finally, the critic parameters are updated by minimizing the temporal-difference loss
\begin{equation}
\mathcal{L}_{\mathrm{critic}}
=
\mathbb{E}\!\left[
\sum_{i=1}^N \sum_{\ell=1}^2
\bigl(
Q_{\phi_\ell}^i(\mathcal{D}_t,a_t^i,a_t^{-i}) - y_t^i
\bigr)^2
\right].
\label{eq:critic_loss}
\end{equation}

\subsection{Actors}

For each agent $i$, the actor parameterizes a decentralized policy on the local trajectory $\mathcal{D}_t^i$. 
The actor maps a learned representation of $\mathcal{D}_t^i$ to unnormalized action scores $g_t^i=(g_t^i(a))_{a\in\mathcal{A}^i}$.
At execution time, we use the deterministic policy
\(
a_t^i
=
\bm{\mu}_{\theta_i}(\mathcal{D}_t^i)
=
\arg\max_{a\in\mathcal{A}^i} g_t^i(a).
\)
During actor training, however, we use a Gumbel--Softmax relaxation to obtain differentiable action representations $\widetilde a_t^i$, allowing gradients to pass through the discrete action-selection process.

The decentralized actors are trained to maximize the centralized critic value under the relaxed joint action $\widetilde a_t=(\widetilde a_t^1,\ldots,\widetilde a_t^N)$. We define the actor objective
\begin{equation}
J(\theta)
=
\mathbb{E}\!\left[
\sum_{i=1}^N
Q_{\phi_1}^i
\bigl(
\mathcal{D}_t,
\widetilde a_t^i,
\widetilde a_t^{-i}
\bigr)
\right],
\label{eq:actor_obj}
\end{equation}
and optimize the actors by minimizing $\mathcal{L}_{\mathrm{actor}}(\theta)=-J(\theta)$.

\paragraph{Information structure.} During training, each actor
conditions only on its own local trajectory $\mathcal{D}_t^i$, the critic
on the global trajectory and joint action, and the inference network on
the global trajectory, supervised by the true hypothesis $H^\star$ of the
sampled environment. At execution the critic is discarded and $H^\star$ is
not required: agents act from local trajectories, and the inference
network aggregates the collected data once, after termination.
Pseudocode is in Algorithms~\ref{alg:multicpe_td3_training} and~\ref{alg:multicpe_inference};
architectures in Appendix~\ref{sec:architecture_appendix}.

\section{Experiments}
\label{sec:exp}

We evaluate ICMAPE-TD3 in two synthetic benchmark environments designed to isolate key challenges in decentralized information acquisition. First, we study bandit capacity identification, where agents must infer how many agents each arm can support before becoming overloaded. This task requires coordinated joint actions to reveal meaningful information about the hypothesis and is motivated by shared resource systems such as communication channels with unknown user capacity. We consider two bandit settings: a 3-arm-5-player setting (A3P5) and a 5-arm-3-player setting (A5P3).  
Second, we study signal source localization with distractors, motivated by distributed sensor localization  applications such as environmental monitoring, bush fire surveillance, and precision agriculture, where agents coordinate exploration to identify the true signal source~\cite{MAO2007,patwari2005locating}. Third, we evaluate on a Maryland nitrate monitoring task built from real measurements, which tests transfer to real observation noise. In all experiments, we set the confidence level to $\delta = 0.1$, corresponding to a target accuracy of $1-\delta = 0.9$.
\subsection{Multi-armed Bandits Capacity Identification}
\label{subsec:bandit_id}

We consider a setting with \(K\) arms and \(N\) players. Inspired by multiplayer multi-armed bandit models with non-unit arm capacities~\citep{wang2022, wang2022a,hu2026}, we study the problem of identifying the capacity of all arms \(k\in[K]\). 

Each arm $k$ has an unknown capacity $C_k^\star \in \{1, \dots, N\}$, which is sampled independently from a uniform distribution over $\{1, \dots, N\}$, defined as the maximum number of agents that can simultaneously select arm $k$ without violating its constraint.   At each time step $t$, each agent $i$ chooses either an arm to pull or the stopping action, so that
\(
a_t^i \in [K] \cup \{a_{\mathrm{stop}}\}.
\)
The objective is to identify the true capacity vector $C^\star = (C_1^\star, \dots, C_K^\star)$  with confidence level $1 - \delta$. The hypothesis space is defined as $\mathcal{H} = \{1, \dots, N\}^K$, where each hypothesis corresponds to a possible capacity configuration over the $K$ arms.

Let $n_t^k = \sum_{i=1}^N \mathbb{I}\{a_t^i = k\}$ denote the number of agents selecting arm $k$ at time $t$. When $n_t^k > C_k$, the arm is said to be overloaded, and the resulting observations are zero environment rewards on that arm. Otherwise, each agent $i$ pulling arm $k$ receives an outcome
\(
x_t^i \sim \mathrm{Bernoulli}(\mu_k),
\)
where $\mu_k \in (0,1]$. To ensure that the problem remains tractable and avoid an impractically large number of samples would be required to reliably estimate the capacity, we assume the mean reward is sampled as
\(
\mu_k \sim \mathrm{Unif}[0.3,1]
\) for all arms.

\subsection{Signal Source Localization with Distractors}
\label{subsec:signal_localization}
In the second environment, we model a cooperative signal-localization problem in which heterogeneous agents seek to identify an unknown signal source through decentralized exploration. We consider a discrete grid world of size $n\times n$, with $N=3$ agents. The objective is to identify the true signal source location $s^\star\in[n]\times[n]$ with confidence level $1-\delta$, where $\delta\in[0,1)$.

The environment contains one true signal source \(s^\star=(r^\star,c^\star)\in[n]\times[n]\) and one distractor source \(s'=(r',c')\in[n]\times[n]\). The hypothesis space is \(\mathcal{H}=[n]\times[n]\), where each hypothesis corresponds to a possible true source location. Both sources generate higher signal intensity than nearby cells. The true source has peak intensity \(1\), while the distractor has peak intensity \(\xi\), where \(\xi \sim \mathrm{Unif}[0,0.7]\). To ensure that the two sources are spatially distinguishable, we impose \(d_{\mathrm{Man}}(s^\star,s') \geq r\).

At each time step \(t\), agent \(i\) occupies a location \(p_t^i \in [n]\times[n]\) and chooses either a movement action or the stopping action. A movement action displaces the agent by \(\Delta_{\mathrm{move}}\). 
Each agent observes only through its own sensing modality, with no access to other agents' measurements.
Agent \(i\) observes an \(m\times m\) local window centered at \(p_t^i\), clipped at the grid boundary. For Agents 1 and 2, the observation at a queried cell \(u\) in this window follows
\[
Y^{(i)}(u) = B^{(i)}(u) \cdot \mathbb{I}\{u = s^\star\},
\qquad B^{(i)}(u) \sim \mathrm{Bernoulli}(q_i),
\]
where \(s^\star\) is the target cell and \(q_i\) is the probability that agent \(i\) registers a detection when it queries the target cell. Agent  follows a different observation model, described below.

Agent~3 observes a different and noisy version of the latent signal field. For each cell $u\in[n]\times[n]$, let $d_{\min}(u)=\min\{d_{\mathrm{Man}}(u,s^\star),d_{\mathrm{Man}}(u,s')\}$ and define $s(u)=I(u)\exp(-\lambda d_{\min}(u)),$ where $I(u)=1$ for cells closer to $s^\star$ and $I(u)=\xi$ otherwise, and decay rate $\lambda>0$. Agent~3 observes
\[
Y^{(3)}(u)=\mathrm{clip}_{[0,1]}\bigl(s(u)+\epsilon(u)\bigr),
\text{ where }
\epsilon(u)\sim\mathcal{N}(0,\sigma^2).
\] Examples of noisy observations and environment parameters are in Appendix  Figure~\ref{fig:signal_map_noise} and Table~\ref{tab:signal_env_params}.

\subsection{Maryland Nitrate Concentration Monitoring}
\label{subsec:nitrate}

We evaluate ICMAPE-TD3 on a real-world environmental monitoring task using Maryland nitrate concentration data spanning 2001--2023, with training on 2001--2018 and evaluation on 2019--2023. We obtained Maryland nitrate concentration measurements from the Water Quality Portal (WQP)~\citep{waterqualityportal2026marylandnitrate}, a public database integrating water-quality observations from USGS, EPA, and state/local monitoring agencies. 
Elevated nitrate levels contaminate drinking water sources and pose direct health risks, particularly for infants~\citep{epa2026nitrate_groundwater}, while also contributing to nutrient pollution, algal blooms, and oxygen depletion that threaten aquatic ecosystems~\citep{usgs_nutrients_eutrophication}. Therefore, it is  important to develop methods that can quickly identify high-concentration regions and support timely intervention. 

For each year, a subset of real monitoring measurements is first used to fit a Gaussian process, producing a spatial concentration field $f: \mathcal{X} \to \mathbb{R}$ over a discretized map $\mathcal{X}$ of the region. The true hypothesis $\theta^\star \in \mathcal{H} = \mathcal{X}$ is defined as the location of the highest concentration, i.e., $\theta^\star = \arg\max_{x \in \mathcal{X}} f(x)$. 
Full experimental details and results are provided in Appendix~\ref{app:nitrate}.

\subsection{Baselines}
We compare against three baselines used across all experiments: random exploration, Independent Q-learning, and VDN. These baselines span different coordination assumptions and contrast with our policy-gradient actor--critic approach. In addition, we include experiment-specific baselines: environment-reward-guided exploration and centralized coordination for the bandit task, Thompson Sampling for signal-source localization, and GP Prior Map Guided Sampling for the nitrate monitoring task. All baselines are described in Appendix~\ref{baseline:gp_prior}.

\section{Results}

\subsection{ Multi-armed Bandit Capacity Identification}

\paragraph{Training.}  In the A3P5 setting, both ICMAPE-TD3 and ICAPE-VDN are able to reach the target accuracy early in training. However, once the reward structure begins to favor shorter trajectories more strongly, ICMAPE-TD3 adapts more effectively by reducing the average stopping time.  This suggests that ICMAPE-TD3 learns an effective sampling strategy that reduces stopping time after reaching the target accuracy. 
In the A5P3 setting, the capacity identification problem becomes more challenging due to the larger hypothesis space, and only ICMAPE-TD3 is able to reliably reach the target accuracy and improve the sampling strategy (Figure~\ref{fig:training_dynamics_bandit}).
\begin{figure}[ht]
    \centering
        \includegraphics[width=\linewidth]{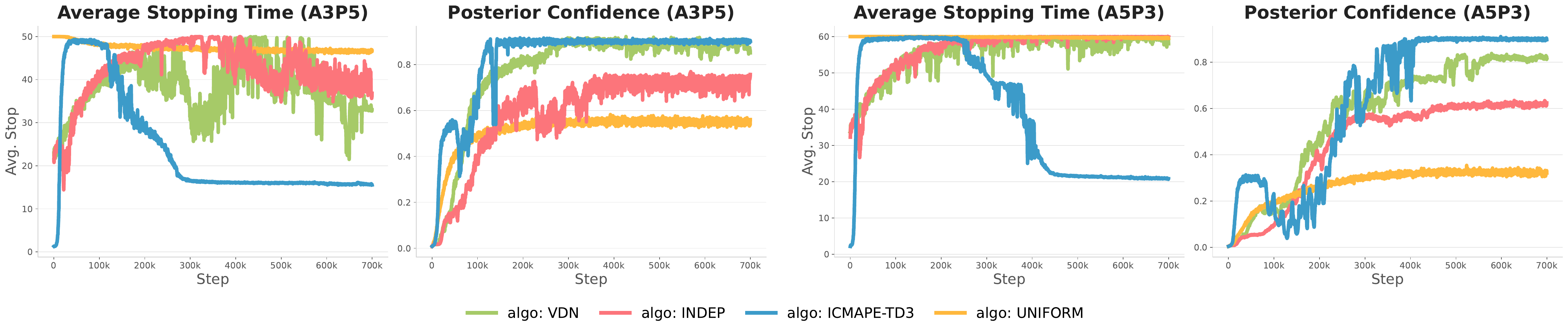}
    \caption{Training dynamics in A3P5 and A5P3.
        Curves are averaged over three random seeds and smoothed with an exponential moving average parameter of $0.2$.
    }
    \label{fig:training_dynamics_bandit}
\end{figure}

\paragraph{Evaluation.} We evaluate the final performance of each method in terms of estimation accuracy and average stopping time across 10,000 sampled environments over three seeds. The results are summarized in Figure~\ref{fig:final_performance_bandit}. ICMAPE-TD3 achieves stable accuracy around $1-\delta$ while maintaining significantly lower stopping times compared to baselines.

\begin{figure}[t]
    \centering
    \includegraphics[width=\linewidth]{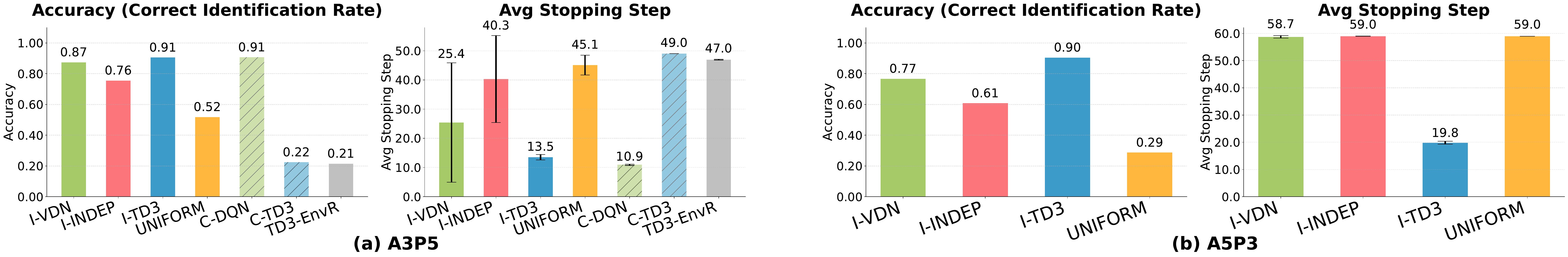}
    \caption{
   Bandit  Capacity Accuracy and Average Stopping Step. ``I'' denotes ICMAPE. ``C'' denotes centralized baselines.
    In (a) A3P5, the TD3-EnvR bar represents TD3 trained to maximize the
    environmental reward Error bars indicate one standard deviation.
    }
    \label{fig:final_performance_bandit}
\end{figure}

\paragraph{Coordination Strategy.}
The learned policy follows a highly structured probing pattern. Early joint actions are identical across active environments, first concentrating players on one arm and then reallocating subsets across arms (Appendix Table~\ref{tab:joint_action_freq}). This suggests that agents learn to test capacity constraints through coordinated joint allocations rather than independent random exploration. Once sufficient evidence is collected, one or more agents select the stop action (Appendix Table~\ref{tab:stop_combinations}).

\paragraph{Robustness Analysis.}
We evaluate the robustness of ICMAPE-TD3 under different priors over arm means and arm capacities. For the arm-mean prior, we replace the uniform prior with a scaled Beta distribution parameterized by $(\alpha,\beta)$, where $\alpha,\beta\in\{0.5,1,3,5,7\}$. When $\alpha>\beta$, the prior places more mass on larger arm means, yielding more informative feedback and average accuracy above $0.95$; when $\alpha\leq\beta$, the accuracy ranges from $0.67$ to $0.96$.
For the arm-capacity prior, we sample capacities from a distribution induced by $p\sim\mathrm{Dir}(\theta\mathbf{1}_K)$, with $\theta\in\{0.1,0.5,1,5,10\}$. Smaller $\theta$ values produce more sparse capacity distributions, while larger values approach a more balanced distribution. ICMAPE-TD3 remains robust across these capacity priors. Figures~\ref{fig:beta_heatmap} and~\ref{fig:dirichlet_robustness} in the appendix report the average inference accuracy under different arm-mean and arm-capacity priors.

\paragraph{Ablation Studies.}

Ablations in Table~\ref{tab:ablation} isolate the contributions of the inference-driven reward,
the learned stopping mechanism, and the per-step sampling cost. Together,
they support the paper's main claim that ICMAPE achieves the target identification accuracy with fewer samples by jointly learning \emph{what}
information to collect and \emph{when} continued exploration is no longer
necessary.

\begin{table}[t]
  \caption{Ablation study on ICMAPE-TD3 in A3P5.}
  \label{tab:ablation}
  \centering
  \small
  \begin{tabular}{lcccrr}
    \toprule
    Variant & Inf.\ Reward & Learned Stop & Step Cost & Accuracy & Avg.\ Stop Time \\
    \midrule
    Full ICMAPE-TD3               & \yes & \yes & \yes & \textbf{0.91} & \textbf{13.5} \\
    Environment-reward TD3 & \no  & \yes & \yes & 0.21 & 47.0 \\
    Fixed-threshold stopping      & \yes & \no  & \yes & 0.70 & 37.3 \\
    No sampling cost              & \yes & \yes & \no  & 0.90 & 50.0 \\
    \bottomrule
  \end{tabular}
\end{table}
Replacing the inference-driven reward with the environment reward
substantially reduces identification accuracy. This result shows that
maximizing the environment reward does not necessarily produce trajectories
that are informative for identifying the underlying environment hypothesis.
After replacing learned stopping with a fixed threshold also  degrades performance and produces a policy that fails to reach the target accuracy, this indicates stopping should be optimized jointly with exploration. The result is consistent with our theoretical formulation, in which the optimal policy compares the $Q$-value of stopping with those of continued exploration.
Finally, without the per-step sampling cost, the policy has no incentive to stop and learns unnecessary exploration.

\paragraph{Posterior Calibration}
To assess how well the inference module's output probabilities align with
empirical accuracy, we compute the Expected Calibration Error \citep{naeini2015obtaining} on
A3P5 over $10{,}000$ trajectories, pooling the posterior predictions from all time steps, obtaining $\mathrm{ECE} = 0.0065$. The gap between mean
predicted confidence and empirical accuracy is at most $0.02$ in every bin,
see  Appendix~\ref{sec:posterior_cal} Table~\ref{tab:calibration}.

\subsection{Signal Source Localization with Distractors}

\paragraph{Evaluation.}
 Training dynamics of this settings is in Appendix Figure~\ref{fig:training_dynamics_signal}. ICMAPE-TD3 is the only method that reliably reaches the target accuracy while further improving efficiency by reducing the average stopping time. We evaluate the performance of each method across 10,000 sampled environments. The results are summarized in Figure~\ref{fig:final_performance_signal}.  ICMAPE-TD3 achieves accuracy close to the target confidence level $1-\delta$ while uses fewer samples than the baselines.

\paragraph{Coordination Strategy.} In this task, the learned policy exhibits an implicit division of roles. Agents 1 and 2 follow deterministic trajectories across environments, while Agent 3 adapts its actions based on its local observations of the noisy signal field (Appendix Table~\ref{tab:robot_per_player_action_freq}). Stopping is also role-specific, with more than one agent stopping simultaneously in fewer than $10\%$ of sampled environments (Appendix Table~\ref{tab:robot_stop_combinations}).

\begin{figure}[t]
\centering
\includegraphics[width=0.6\linewidth]{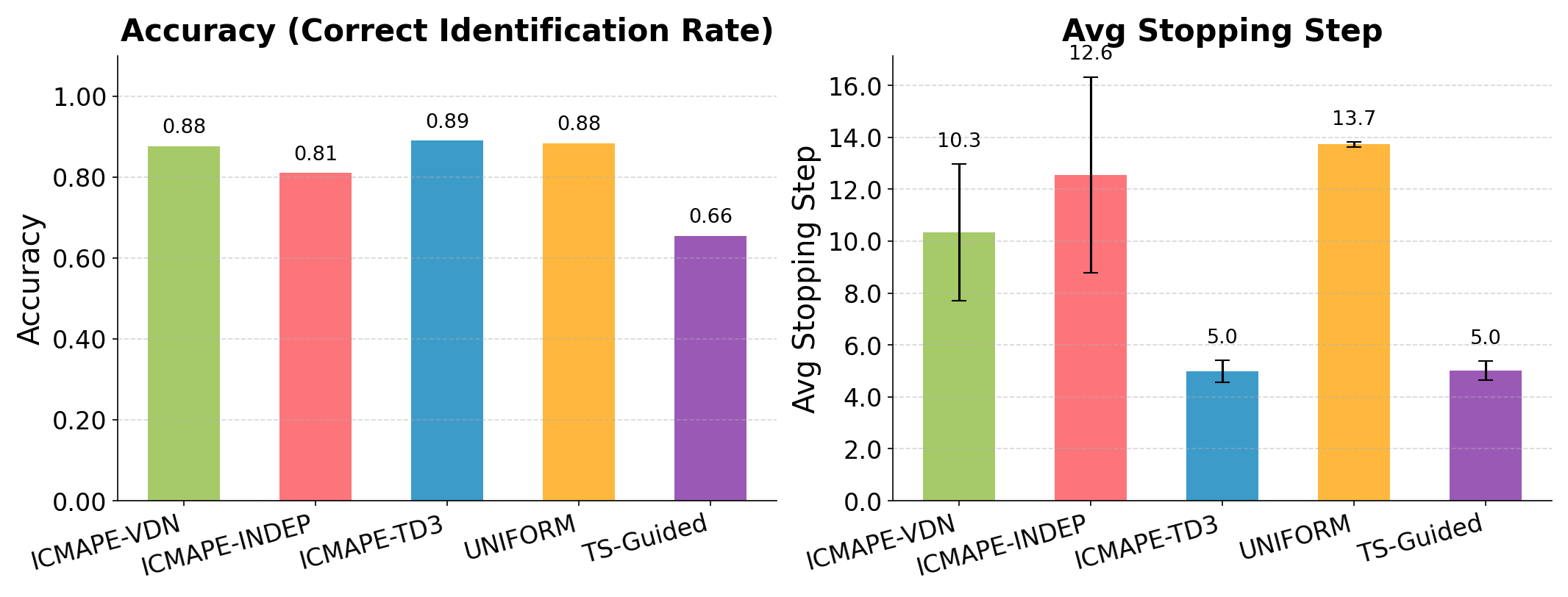}
\caption{Signal Source Localization Accuracy and Average Stopping Step. The TS bar is the Thompson Sampling guiding exploration strategy. Error bars indicate one standard deviation.}
\label{fig:final_performance_signal}
\end{figure}

\paragraph{Robustness Analysis.}
We evaluate the robustness of ICMAPE-TD3 under different priors over distractor intensity and different distances between the distractor and target location; see Appendix Figure~\ref{fig:dist_acc} and ~\ref{fig:signal_intensity_beta}. For distractor intensity, we replace the uniform prior with a scaled Beta distribution parameterized by $(\alpha,\beta)$, using the same set of parameter values as above. When $\alpha>\beta$, the prior places more mass on higher distractor intensities, leading to stronger distractions while still achieving an average accuracy between $0.8$ and $0.9$; when $\alpha\leq\beta$, the accuracy remains around $0.9$.
Accuracy also improves as the Manhattan distance between the distractor and the true source grows, staying above 
$0.85$ throughout

\subsection{Nitrate Concentration }
We evaluate the proposed method on the Maryland nitrate concentration environment, where annual nitrate fields are generated using Gaussian Process (GP) models fitted to historical nitrate observations. For each year, we sample 100 GP-generated nitrate maps with 1000 spatial points per map, and report the average performance for identifying the highest-concentration region.

Figure~\ref{fig:nitrate_results} reports the correct identification accuracy and the average number of samples. ICMAPE-TD3 achieves accuracy close to the target level on both training and test years, with only a small performance gap between them. This suggests that the learned policy is robust to GP-generated nitrate fields from unseen years. In contrast, the independent, VDN, and uniform baselines show larger drops in test-year performance, indicating weaker generalization. The GP-prior baseline achieves the highest accuracy by directly using spatial prior information.

In terms of sample efficiency, ICMAPE-TD3 requires the fewest samples among all compared methods on both training and test years. Although the GP-prior baseline reaches slightly higher accuracy, it uses more samples than ICMAPE-TD3. Overall, ICMAPE-TD3 provides the best trade-off between accuracy and sampling cost, achieving near-target identification performance while maintaining the lowest sample complexity and stable generalization to held-out years.

\begin{figure}[t]
    \centering
    \includegraphics[width=0.6\linewidth]{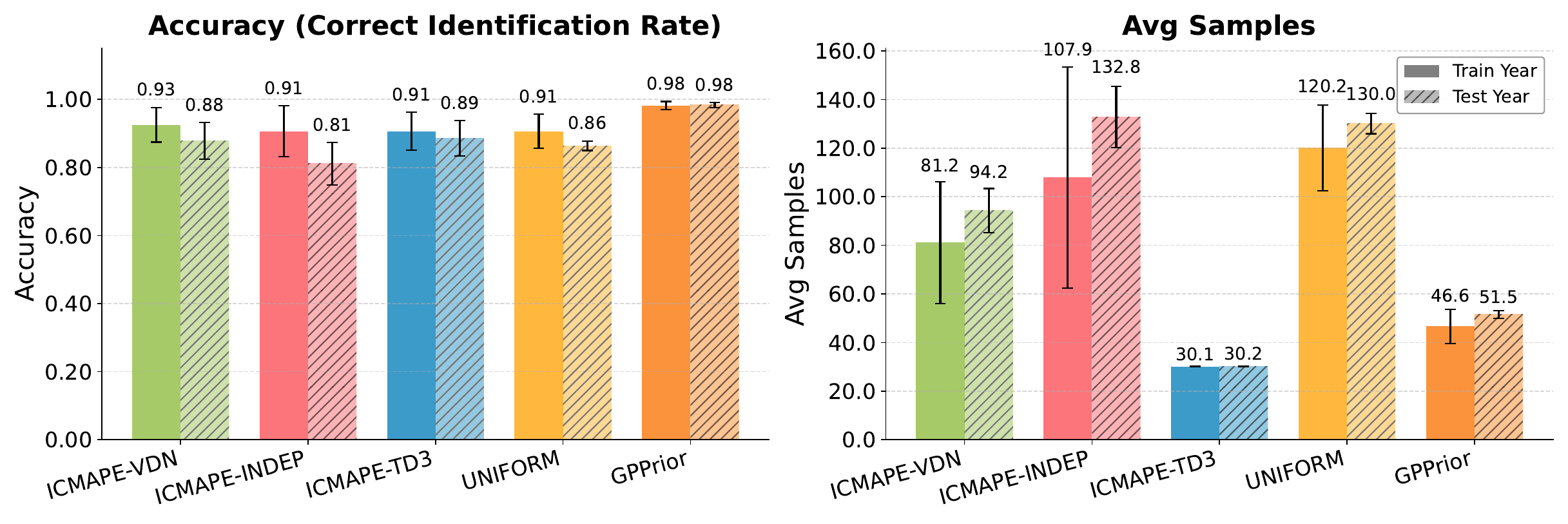}
    \caption{
   Nitrate Concentration Accuracy and Average Samples. Error bars indicate one standard deviation.
    }
    \label{fig:nitrate_results}
\end{figure}

\vspace{-0.01em}
\section{Conclusion}
In this work, we study a decentralized multi-agent pure exploration problem, where the objective is not reward maximization, but efficient information acquisition for identifying unknown properties of the environment. While existing pure exploration methods provide strong foundations, they are often tailored to specific statistical models or assume centralized action selection, leaving a gap in general learning-based methods for decentralized multi-agent data collection. We propose ICMAPE to fill this gap, a method that combines a centralized inference network with decentralized data collection policies. The inference network estimates a posterior distribution over hypotheses from global trajectory data, while each agent selects actions using only its local observation history. We evaluate ICMAPE on two synthetic benchmarks---bandit capacity identification and signal source localization---and on a real-world Maryland nitrate concentration monitoring task, demonstrating its ability to transfer to real-world observation noise and non-stationary dynamics.
We reformulate the fixed-confidence optimization problem as a learnable reinforcement learning objective and train the resulting model under the CTDE paradigm.   Future research could extend this setting toward more general forms of environment understanding. We discuss limitations and broader impacts in Section~\ref{sec:limitations_broader_impact}.

\clearpage
\bibliographystyle{unsrtnat} 
\bibliography{ref}

\newpage
\tableofcontents
\newpage
\appendix

\section{Theoretical Details}
\label{apx:theory_detail}

Our theoretical formulation follows from the In-Context Pure Exploration~\citep{russo2025}.

\subsection{Preliminaries}

\begin{assumption}[Common dominating measures]
There exist reference measures \(\lambda_0\) and \(\lambda\) on \((\mathcal{X},\mathcal{B(\mathcal{X}))}\) such that, for all $(\rho,P)\in \mathcal{M}, s\in \mathbb{N}$, and $(z,a)\in\mathcal{Z}\times \A $,
\[
\rho(\cdot)\ll\lambda_0(\cdot) \quad and \quad P_s(\cdot \mid z,a) \ll\lambda(\cdot). 
\]
That is, all initial distributions and transition kernels have densities with respect to common reference measures:
\[
\rho^\omega(dx)=p^\omega_0(x)\lambda_0(dx),
\qquad
P^\omega_s(dx\mid z,a)=p^\omega_s(x\mid z,a)\lambda(dx).
\]
\label{assumption: domination measure}
\end{assumption}
\begin{lemma}[Posterior kernel under decentralized policies]
\label{lemma:posterior kernel}
Fix a time \(t\in\mathbb{N}\). Consider a model class \(\mathcal{M}\) with prior \(\mathcal{P}\), and let \(M\sim\mathcal{P}\) denote the unknown environment. Under a decentralized joint policy
\[
\pi = (\pi^1,\ldots,\pi^N),
\qquad
\pi_s^i(\cdot \mid \mathcal{D}_s^i)\in \Delta(\mathcal{A}^i),
\]
each agent \(i\) selects action \(a_s^i\) using only its local trajectory \(\mathcal{D}_s^i\). Let the global trajectory up to time \(t\) be
\[
\mathcal{D}_t = (x_1,a_1,x_2,a_2,\ldots,x_t),
\]
where \(x_s=(x_s^1,\ldots,x_s^N)\) and \(a_s=(a_s^1,\ldots,a_s^N)\).

Then there exists a probability kernel
\[
R_t:\mathcal{D}_t \times \mathcal{B}(\mathcal{M}) \to [0,1],
\]
independent of the decentralized policy \(\pi\), such that for every measurable set \(A\subseteq\mathcal{M}\) and every measurable trajectory event \(Z\),
\[
\mathbb{P}^{\pi}(M\in A,\mathcal{D}_t\in Z)
=
\int_Z R_t(A\mid z)\,
\mathbb{P}^{\pi}(\mathcal{D}_t\in dz).
\]
Consequently, for any hypothesis set \(B\subseteq\mathcal{H}\),
\[
\mathbb{P}^{\pi}(H^\star\in B\mid \mathcal{D}_t=z)
=
R_t\bigl(\{M\in\mathcal{M}:h^\star(M)\in B\}\mid z\bigr),
\]
for \(\mathbb{P}^{\pi}\)-almost every trajectory \(z\).
\end{lemma}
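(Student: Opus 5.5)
The plan is to write the joint law of $(M,\mathcal{D}_t)$ explicitly as a density with respect to a policy-independent reference measure, and observe that it factorizes into a policy term and a model term. The policy term is then common to numerator and denominator of Bayes' formula and cancels. This is the classical argument that adaptive action selection does not enter the likelihood. The only new point is to check that the decentralized product structure $\prod_i \pi_s^i(a_s^i\mid\mathcal{D}_s^i)$ is still a measurable function of the global trajectory $z$ alone, and does not depend on $M$.

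\textbf{Step 1: joint law.} Using Assumption~\ref{assumption: domination measure}, I will build on the trajectory space the reference measure
\[
\nu_t(dz)=\lambda_0(dx_1)\,\kappa(da_1)\,\lambda(dx_2)\cdots\kappa(da_{t-1})\,\lambda(dx_t),
\]
where $\kappa$ is a dominating measure on $\mathcal{A}$ (counting measure in the finite case; in general, any measure dominating all $\pi_s^i$). By the Ionescu--Tulcea construction of $\mathbb{P}^\pi$ ($M\sim\mathcal{P}$, then alternating observation and action kernels), for measurable $A$ and $Z$,
\[
\mathbb{P}^\pi(M\in A,\mathcal{D}_t\in Z)=\int_A\int_Z L_t(M,z)\,\Pi_t(z)\,\nu_t(dz)\,\mathcal{P}(dM),
\]
with model likelihood $L_t(M,z)=p_0^M(x_1)\prod_{s=1}^{t-1}p_s^M(x_{s+1}\mid z_s,a_s)$, where $z_s$ is the prefix of $z$, and policy factor $\Pi_t(z)=\prod_{s=1}^{t-1}\prod_{i=1}^N\pi_s^i(a_s^i\mid\mathcal{D}_s^i)$. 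Since each $\mathcal{D}_s^i$ is a coordinate projection of $z$, $\Pi_t$ is a measurable function of $z$ only and is independent of $M$. Here I need joint measurability of $(M,z)\mapsto L_t(M,z)$, which I will assume as part of the measurable structure of $\mathcal{M}$.

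\textbf{Step 2: define the kernel.} I will set $c_t(z)=\int_{\mathcal{M}}L_t(M',z)\,\mathcal{P}(dM')$ and
\[
R_t(A\mid z)=\frac{\int_A L_t(M,z)\,\mathcal{P}(dM)}{c_t(z)}\quad\text{if } 0<c_t(z)<\infty,
\]
and $R_t(\cdot\mid z)=\mathcal{P}$ otherwise. This is a probability kernel (measurability follows from Fubini), and it involves no $\pi$. To verify the disintegration, I will use Fubini to compute the marginal of $\mathcal{D}_t$ as $\mathbb{P}^\pi(\mathcal{D}_t\in dz)=c_t(z)\Pi_t(z)\nu_t(dz)$. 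Then
\[
\int_Z R_t(A\mid z)\,c_t(z)\Pi_t(z)\,\nu_t(dz)=\int_Z\int_A L_t\,d\mathcal{P}\,\Pi_t\,d\nu_t ,
\]
which equals the joint probability from Step 1. The exceptional set $\{c_t=0\}\cup\{c_t=\infty\}$ has $\mathbb{P}^\pi$-marginal measure zero: the first part because the marginal density vanishes there, the second because $c_t$ is $\nu_t$-a.e.\ finite where $\Pi_t>0$, as the marginal is a probability measure. So the arbitrary choice made there does not matter.

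\textbf{Step 3: hypotheses.} The displayed identity says $R_t(\cdot\mid z)$ is a version of the regular conditional distribution of $M$ given $\mathcal{D}_t=z$. Taking $A=\{M:h^\star(M)\in B\}$, which is measurable as long as $h^\star$ is measurable (finite $\mathcal{H}$), gives the conditional probability of $H^\star\in B$ for $\mathbb{P}^\pi$-a.e.\ $z$.

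The main obstacle is Step 1: rigorously justifying the product-density representation, in particular the dominating measure on actions and the measurability of $\Pi_t$ under decentralized, history-dependent policies. I would handle it by induction on $t$, peeling off one observation kernel and one joint-action kernel at a time. At the action step I use conditional independence across agents given $\mathcal{D}_s$, which by construction gives the product form.
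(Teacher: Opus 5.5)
Your proposal is correct and is essentially the paper's argument. Both write the law of $(M,\mathcal{D}_t)$ with a density whose model-dependent part $r_M(x_1)\prod_{s}p_{M,s}(x_{s+1}\mid \mathcal{D}_s,a_s)$ does not involve $\pi$, and both then read off $R_t$ from Bayes' formula. The only difference is bookkeeping: the paper absorbs the decentralized action kernels into a policy-dependent reference measure $\nu_t^\pi$, so it needs no dominating measure on actions, whereas you fix a policy-free $\nu_t$ with an action-dominating $\kappa$ (unproblematic for the finite action sets used here) and let $\Pi_t(z)$ cancel. Your treatment of the set where the normalizer is $0$ or $\infty$ supplies a detail the paper omits.
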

\begin{proof}

Let \(z=(x_1,a_1,x_2,\ldots,a_{t-1},x_t)\in\mathcal{Z}_t\) denote a realized global trajectory. Under a decentralized policy \(\pi\), the trajectory law induced by model \(M\) factorizes as
\[
\mathbb{P}^{\pi}_{M,t}(dz)
=
\rho_M(dx_1)
\prod_{s=1}^{t-1}
\left[
\prod_{i=1}^N
\pi_s^i(da_s^i\mid \mathcal{D}_s^i)
\right]
P_{M,s}(\mathrm{d}x_{s+1}\mid \mathcal{D}_s,a_s).
\]
The policy-dependent factor
\[
\prod_{s=1}^{t-1}\prod_{i=1}^N
\pi_s^i(a_s^i\mid \mathcal{D}_s^i)
\]
depends on the realized trajectory \(z\), but not on the model \(M\). Hence, once we condition on the trajectory, the action-selection probabilities do not provide additional likelihood information about \(M\)

We choose reference measures \(\lambda_0\) and \(\lambda\) such that
\[
\rho_M(dx_1)=r_M(x_1)\lambda_0(dx_1),
\qquad
P_{M,s}(dx_{s+1}\mid \mathcal{D}_s,a_s)
=
p_{M,s}(x_{s+1}\mid \mathcal{D}_s,a_s)\lambda(dx_{s+1}).
\]
Define a reference measure on the trajectory space by
\[
\nu_t^\pi(dz)
:=
\lambda_0(dx_1)
\prod_{s=1}^{t-1}
\left[
\prod_{i=1}^N
\pi_s^i(da_s^i\mid \mathcal{D}_s^i)
\right]
\lambda(dx_{s+1}).
\]
Then \(\mathbb{P}^{\pi}_{M,t}\ll \nu_t^\pi\) by Assumption~\ref{assumption: domination measure}, and the Radon--Nikodym derivative is
\[
\ell_t(M,z)
:=
\frac{d\mathbb{P}^{\pi}_{M,t}}{d\nu_t^\pi}(z)
=
r_M(x_1)
\prod_{s=1}^{t-1}
p_{M,s}(x_{s+1}\mid \mathcal{D}_s,a_s).
\]
The policy terms are absorbed into the reference measure \(\nu_t^\pi\), so the likelihood \(\ell_t(M,z)\) depends only on the model-dependent initial density, transition densities, and observed trajectory, but not on the decentralized policy.

Therefore, for any measurable \(A\subseteq\mathcal{M}\) and \(Z\subseteq\mathcal{Z}_t\),
\[
\mathbb{P}^{\pi}(M\in A,\mathcal{D}_t\in Z)
=
\int_A\int_Z
\ell_t(M,z)\,\nu_t^\pi(dz)\,\mathcal{P}(dM),
\]
and
\[
\mathbb{P}^{\pi}(\mathcal{D}_t\in Z)
=
\int_Z
\int_{\mathcal{M}}
\ell_t(M,z)\,\mathcal{P}(dM)\,\nu_t^\pi(dz).
\]
Thus the posterior kernel is
\[
R_t(A\mid z)
:=
\frac{
\int_A \ell_t(M,z)\,\mathcal{P}(dM)
}{
\int_{\mathcal{M}} \ell_t(M,z)\,\mathcal{P}(dM)
}.
\]
This kernel is independent of \(\pi\). Equivalently,
\[
\mathbb{P}^{\pi}(M\in A\mid \mathcal{D}_t=z)
=
R_t(A\mid z),
\]
for \(\mathbb{P}^{\pi}\)-almost every \(z\). The desired conclusion follows by taking
\[
A=\{M\in\mathcal{M}:h^\star(M)\in B\}.
\]
\
\end{proof}

\begin{proposition}
    Consider a fixed decentralized policy $\pi \in \prod_{\text{dec}}$. Let $t\in \mathbb{N}$, and $z\sim \prob_t^\pi$. For any $t$, the optimal inference rule of $\sup_{\Inf_t}\prob_t^\pi(H^\star = \Inf_t(\D_t))$ is given by $\Inf_t^*(z)=\argmax_{H\in \mathcal{H}}\prob^\pi_t(H^\star=H\mid \D_t=z)$
    \label{prop:optimal inf}
\end{proposition}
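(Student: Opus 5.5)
The plan is the standard Bayes-optimal classification argument: condition on the trajectory, use the tower property, and maximize the resulting conditional probability pointwise in $z$. We allow inference rules $\Inf_t$ that may be randomized measurable maps from trajectories to $\Delta(\mathcal{H})$; deterministic rules are a special case.

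\textbf{Step 1 (conditioning).} For any such rule, write the success probability as
\[
\prob_t^\pi\bigl(H^\star=\Inf_t(\D_t)\bigr)=\E^\pi\Bigl[\sum_{H\in\mathcal{H}}\Inf_t(H\mid\D_t)\,\prob_t^\pi(H^\star=H\mid \D_t)\Bigr].
\]
This uses that, given $\D_t$, the output of $\Inf_t$ is independent of $H^\star$, since the rule sees only the data. The conditional probabilities $\prob_t^\pi(H^\star=H\mid\D_t=z)$ are well defined and jointly measurable in $z$ by Lemma~\ref{lemma:posterior kernel}. That lemma gives them as $R_t(\{M: h^\star(M)=H\}\mid z)$ for a regular kernel $R_t$, and this kernel is the same for every decentralized $\pi\in\Pi_{\mathrm{dec}}$. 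Finiteness of $\mathcal{H}$ means the sum has finitely many terms, so interchanging sum and expectation is immediate.

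\textbf{Step 2 (pointwise bound).} For every $z$, the integrand is a convex combination of the numbers $\prob_t^\pi(H^\star=H\mid\D_t=z)$. It is therefore at most $\max_{H}\prob_t^\pi(H^\star=H\mid\D_t=z)=r_t(z)$, with equality iff $\Inf_t(\cdot\mid z)$ puts all its mass on maximizers. Taking expectations gives $\sup_{\Inf_t}\prob_t^\pi(H^\star=\Inf_t(\D_t))\le\E^\pi[r_t(\D_t)]$.

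\textbf{Step 3 (attainment).} Define $\Inf_t^*(z)$ as a maximizer of $H\mapsto\prob_t^\pi(H^\star=H\mid\D_t=z)$, breaking ties by a fixed ordering of the finite set $\mathcal{H}$. This choice is measurable because it is the first index achieving a maximum of finitely many measurable functions. The rule $\Inf_t^*$ attains the bound in Step 2, so it is optimal. The only places needing care are measurability and the version of the conditional probability. Both are handled by Lemma~\ref{lemma:posterior kernel}: the optimizer is defined $\prob^\pi$-a.s., and changing it on null sets does not affect the objective.
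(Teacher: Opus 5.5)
Your proposal is correct and takes essentially the same route as the paper. The paper proves this by deferring to Proposition~B.2 of \citet{russo2025}, which uses tower-rule conditioning on $\D_t$, with the posterior made well defined and policy-independent by Lemma~\ref{lemma:posterior kernel}, followed by pointwise maximization, so that the MAP rule attains $\E^\pi[r_t(\D_t)]$; your treatment of randomized rules and of a measurable tie-broken argmax is a harmless refinement of that argument.
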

The proof of Proposition~\ref{prop:optimal inf} follows from Proposition~B.2 of \citet{russo2025}.

\subsection{Optimization Problem}
\paragraph{Fixed Confidence:} Given a target error $\delta\in(0,1)$, the learner chooses: 
a stopping time $\tau\in\mathbb{N}$, which is the total number of queries; a data collection 
policy $\pi$; and an inference rule $\Inf$ that minimize the expected total number of queries 
$\tau$ (see Assumption~\ref{assumption:common_stopping}), while meeting the correctness 
guarantee. $\tau$ marks the random moment when the data collection process stops.
We define the following optimization problem:
\[\min_{\cpi,\tau,\Inf}\E^\pi [\tau]\quad \textbf{s.t.} \quad 
\prob^{\cpi}(\Inf_\tau(\D_\tau)=H^\star)\geq 1-\delta,\]
where \(\mathbb{P}^\pi(\cdot)\) is the probability of the underlying data collection process 
when $\pi$ gathers data from environment $M$, and $M$ is sampled from the prior $\prior$. 
In the appendix, we use the notation $\cpi$ to represent policies in the decentralized 
policy class $\Pi_\text{dec}$.

\paragraph{Decentralized policy class.}
Since agents operate under decentralized observation, we restrict to decentralized policies 
$\Pi_{\mathrm{dec}}$ such that for every agent $i$ and time $t$, $a_i(t)$ depends only on 
its own local observation history $\mathcal{D}_{t-1}^i$ and is conditionally independent of 
all other agents' histories given its own:
\[
    a_i(t) \sim \pi_i(\cdot \mid \mathcal{D}_{t-1}^i).
\]
Furthermore, $\pi_i$ is measurable with respect to $\sigma(\mathcal{D}_{t-1}^i)$.

\subsection{Dual Problem Formulation and RL Conversion}
\label{pf:dual_formulation}

\paragraph{Dual Problem Formulation}
In the fixed confidence setting, we are solving the following optimization problem. 
\begin{equation}
\begin{aligned}
\inf_{\pi\in\Pi_{\mathrm{dec}},\,\tau,\,I}\quad & \mathbb{E}^{\pi}[\tau] \\
\text{s.t.}\quad 
& \mathbb{P}^{\pi}\!\left(I_\tau(\mathcal D_\tau)=H^\star\right)\ge 1-\delta,\\
& \mathbb{E}^{\pi}[\tau] < \infty,
\end{aligned}
\label{eq:fixed_optim_appendix}
\end{equation}
where we assumed \(\tau\) to be locally adapted to \((\sigma(\D_t^i))_t\) according to Assumption~\ref{assumption:common_stopping}. Here  \(\D_t^i = \{(i,a_s^i,x_s^i)\}_{s=1}^t\), denotes the local state of player 
$i$, i.e., the sequence of tuples containing the player index together with its local observation–action trajectory up to time 
$t$.

\begin{lemma}
    The Lagrangian dual of the problem in Eq.~\ref{eq:fixed_optim_appendix} is given by 
    \[
    \sup_{\lambda\geq0} \inf_{\cpi,\tau,\Inf} \mathcal{L}(\cpi,\tau,\Inf;\lambda) = \sup_{\lambda\geq0} \inf_{\pi,\tau,\Inf} \lambda(1-\delta) +\mathbb{E}^{\cpi} \Bigl[\tau-\lambda\prob_{\cpi}(H^\star=\Inf_\tau(\D_\tau)\mid \D_\tau\Bigr],
    \]
    where  $\lambda$ is the Lagrange multiplier.
\end{lemma}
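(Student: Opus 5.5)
The plan is to write down the standard Lagrangian for the constrained problem in Eq.~\ref{eq:fixed_optim_appendix} and then rewrite the constraint term with the tower property. The multiplier $\lambda\ge 0$ attaches to the correctness constraint, written as $(1-\delta)-\mathbb{P}^{\cpi}(\Inf_\tau(\D_\tau)=H^\star)\le 0$. The finiteness condition $\mathbb{E}^{\cpi}[\tau]<\infty$ is kept as a domain restriction on the primal variables $(\cpi,\tau,\Inf)$, not dualized. This gives
\[
\mathcal{L}(\cpi,\tau,\Inf;\lambda)=\mathbb{E}^{\cpi}[\tau]+\lambda\bigl(1-\delta-\mathbb{P}^{\cpi}(\Inf_\tau(\D_\tau)=H^\star)\bigr).
\]
By weak duality, the dual function is $\inf_{\cpi,\tau,\Inf}\mathcal{L}$ and the dual problem is its supremum over $\lambda\ge0$. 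What remains is to express $\mathbb{P}^{\cpi}(\Inf_\tau(\D_\tau)=H^\star)$ as an expectation over trajectories.

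I would write the correctness probability as $\mathbb{E}^{\cpi}\bigl[\mathbf{1}\{H^\star=\Inf_\tau(\D_\tau)\}\bigr]$ and condition on $\D_\tau$. The stopped trajectory $\D_\tau=\sum_t \mathbf{1}\{\tau=t\}\D_t$ is measurable with respect to the stopped $\sigma$-algebra. This holds because, by Assumption~\ref{assumption:common_stopping}, $\tau$ is a stopping time for each local filtration $\mathcal{F}^i_t$, hence also for the global filtration $\sigma(\D_t)\supseteq\mathcal{F}^i_t$. Moreover $\Inf_\tau(\D_\tau)$ is a function of $\D_\tau$. The tower property then gives
\[
\mathbb{P}^{\cpi}(\Inf_\tau(\D_\tau)=H^\star)=\mathbb{E}^{\cpi}\bigl[\mathbb{P}^{\cpi}(H^\star=\Inf_\tau(\D_\tau)\mid \D_\tau)\bigr].
\]
To make the conditional probability concrete, I would decompose over $\{\tau=t\}$. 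On each such event I would use Lemma~\ref{lemma:posterior kernel} to identify $\mathbb{P}^{\cpi}(H^\star=H\mid \D_t=z)$ with the policy-independent kernel $R_t$. Two facts justify this: $\{\tau=t\}$ is $\sigma(\D_t)$-measurable, and on that event conditioning on $\D_\tau$ coincides with conditioning on $\D_t$. Summing over $t$ is legitimate because $\mathbb{E}^{\cpi}[\tau]<\infty$ forces $\tau<\infty$ almost surely.

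Substituting this identity into $\mathcal{L}$ and using linearity of expectation to merge $\mathbb{E}^{\cpi}[\tau]$ with the conditional-probability term yields
\[
\mathcal{L}=\lambda(1-\delta)+\mathbb{E}^{\cpi}\bigl[\tau-\lambda\,\mathbb{P}_{\cpi}(H^\star=\Inf_\tau(\D_\tau)\mid\D_\tau)\bigr],
\]
and taking $\sup_{\lambda\ge0}\inf_{\cpi,\tau,\Inf}$ gives the stated expression. Both terms in the expectation are integrable, the first by the finiteness constraint and the second because it is bounded by $1$.

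I expect the main obstacle to be the measure-theoretic conditioning at the random time $\tau$, rather than the algebra. Specifically, one must check that $\D_\tau$ is well defined and that the conditional law of $H^\star$ given $\D_\tau$ agrees, on $\{\tau=t\}$, with the posterior given $\D_t$ from Lemma~\ref{lemma:posterior kernel}. My first attempt would be the $\{\tau=t\}$ decomposition above together with the fact that $\{\tau=t\}\in\sigma(\D_t)$. The conditioning event then only restricts the trajectory set $Z$ in Lemma~\ref{lemma:posterior kernel}, whose identity holds for every measurable $Z$.
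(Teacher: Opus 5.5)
Your proposal is correct and follows essentially the same route as the paper. You dualize the correctness constraint, decompose the success probability over the events $\{\tau=t\}$, use the tower rule together with $\{\tau=t\}\in\sigma(\D_t)$ and Lemma~\ref{lemma:posterior kernel}, and recombine into $\lambda(1-\delta)+\E^{\cpi}[\tau-\lambda\,\prob(H^\star=\Inf_\tau(\D_\tau)\mid\D_\tau)]$. The differences are cosmetic: you condition on $\D_\tau$ first and localize to $\{\tau=t\}$ afterwards, whereas the paper localizes first. You also make explicit the integrability and the treatment of $\E^{\cpi}[\tau]<\infty$ as a domain restriction, which the paper leaves implicit.
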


\begin{proof}
    Since any feasible solution stops almost surely under assumption, we use $\cpi$ to mean decentralized policies and get
\begin{align}
\prob^{\cpi}\!\left(H^\star = \Inf_\tau(\D_\tau)\right)
&= \sum_{t \ge 1} \prob^{\cpi}_t\!\left(\Inf_t(\D_t)=H^\star,\ \tau=t\right)
\qquad \text{(law of total probability)}\\
&= \sum_{t \ge 1} 
\E^{\cpi}_t\!\left[
\mathbf{1}_{\{\tau=t\}}
\mathbf{1}_{\{H^\star=\Inf_t(\D_t)\}}
\right] \\
&= \sum_{t \ge 1} 
\E^{\cpi}_t\!\left[
\E_t[\mathbf{1}_{\{\tau=t\}}
\mathbf{1}_{\{H^\star=\Inf_t(\D_t)\}} \bigm|\D_t
] \right], \qquad  \text{(tower rule)}\\
&= \sum_{t \ge 1} 
\E^{\cpi}_t\!\left[
\mathbf{1}_{\{\tau=t\}}
\E^{\cpi}_t\!\left[
\mathbf{1}_{\{H^\star=\Inf_t(\D_t)\}}
\mid \D_t
\right]
\right]
\qquad (\{\tau=t\}\in\sigma(\D_t^i)) \\
&= \sum_{t \ge 1} 
\E^{\cpi}_t\!\left[
\mathbf{1}_{\{\tau=t\}}
\prob_t\!\left(H^\star=\Inf_t(\D_t)\mid \D_t\right)
\right].
\end{align}

The last line holds because of Lemma~\ref{lemma:posterior kernel}.
Then we introduce a Lagrange multiplier $\lambda \ge 0$. 
The corresponding Lagrangian function is given by
\[
\mathcal{L}(\cpi,\tau,I;\lambda)
=
\mathbb{E}^{\cpi}[\tau]
+
\lambda\Big((1-\delta)
-
\mathbb{P}^{\cpi}\!\left(I_\tau(\mathcal D_\tau)=H^\star\right)\Big).
\]
The dual function $g(\lambda)$ is defined as \[g(\lambda)
=
\inf_{\cpi,\tau,\Inf}
\mathcal{L}(\cpi,\tau,\Inf;\lambda).\] 
The dual problem is 
\begin{align}
    \sup_{\lambda\geq 0} g(\lambda)
&=
\sup_{\lambda\geq 0}\quad
\inf_{\cpi,\tau,\Inf}
\mathcal{L}(\cpi,\tau,\Inf;\lambda)
\end{align}

To simplify the Lagrangian function we get:
\begin{align}
\mathcal{L}(\cpi,\tau,\Inf;\lambda)
&=\mathbb{E}^{\cpi}[\tau]
+
\lambda\Big((1-\delta)
-
\mathbb{P}^{\cpi}\!\left(I_\tau(\mathcal D_\tau)=H^\star\right)\Big)\\
&=\mathbb{E}^{\cpi}[\tau]
+
\lambda\Big((1-\delta)
-
\sum_{t \ge 1} 
\E^{\cpi}\!\left[
\mathbf{1}_{\{\tau=t\}}
\prob_t\!\left(H^\star=\Inf_t(\D_t)\mid \D_t\right)
\right]\Big)\\
&=\lambda-\lambda\delta+\mathbb{E}^{\cpi}\left[\tau-\lambda\sum_{t\geq 1}
\mathbf{1}_{\{\tau=t\}} 
\prob_t\!\left(H^\star=\Inf_t(\D_t)\mid \D_t\right)
\Big)\right]\\
&=\lambda-\lambda\delta+\mathbb{E}^{\cpi}\left[\sum_{t\geq 1}\mathbf{1}_{\{t\leq\tau\}}
-\lambda\sum_{t\geq 1}
\mathbf{1}_{\{\tau=t\}} 
\prob_t\!\left(H^\star=\Inf_t(\D_t)\mid \D_t\right)
\Big)\right]\\
&=\lambda-\lambda\delta+\mathbb{E}^{\cpi}\left[\tau
-\lambda 
\prob_\tau\!\left(H^\star=\Inf_\tau(\D_\tau)\mid \D_\tau\right)
\Big)\right]
\end{align}

\end{proof}

Now we show how to embed the stopping rule into the policy by augmenting the action space as a stopping action.
At each time $t$, the joint action is $a_t = (a_t^1,\dots,a_t^N)\in \A$, where each agent chooses
$a_t^i \in \A^i$.

For each agent $i\in[N]$, define the augmented local action space
\[
\bar{\A}^i := \A^i \cup \{a_{\text{stop}}\},
\]
and the augmented joint action space
\[
\bar{\A} := \bar{\A}^1 \times \cdots \times \bar{\A}^N.
\]
We also define the induced stopping time under the augmented policy as
\[
\bar{\tau} := \inf\big\{t\ge 1:\ \exists i\in[N]\ \text{s.t.}\ a_t^i = a_{\text{stop}}\big\}.
\]

If any agent performed action  $a_t^i\neq a_{\text{stop}}$ and observed $x_{t+1}^i$ after its action decides to stop this is equivalent to choose action $a_{\text{stop}}$ at round $t+1$ leading to \(\bar{\tau}=t+1=\tau\).

Assumption~\ref{assumption:common_stopping} is what allows this construction:
Lemma~\ref{lemma:stop_action} builds an augmented policy $\{\bar\pi^i\}$
whose induced stopping time $\bar\tau$ coincides with $\tau$, which
requires $\{\tau \le t\}$ to be measurable with respect to every local
filtration. The following remark records how this condition is met in our
implementation.

\begin{remark}
\label{rem:public_termination}
Assumption~\ref{assumption:common_stopping} constrains only the measurability of $\{\tau \le t\}$, not the agents' information sets: it is
satisfied whenever the indicator $s_t = \mathbb{I}\{\exists i,\, a_t^i =
a_{\rm stop}\}$ is observed by all agents because each $a_t^i$ is selected
from $\mathcal{D}_t^i$ alone. Common stopping therefore requires each agent to transmit at most one bit per round and to learn only whether some agent has stopped; it needs no central coordinator and no sharing of observations or histories.
\end{remark}

\begin{lemma}
For every $(\cpi, \tau, \Inf)$ with $\tau \ge 1$ almost surely, there exists a policy 
$\bpi$ on $\bar{\mathcal A} = \bigcup_i (\mathcal{A}^i \cup \{a_{\text{stop}}\})$ with stopping rule
\[
\bar{\tau} := \inf\big\{t\ge 1:\ \exists i\in[N]\ \text{s.t.}\ a_t^i = a_{\text{stop}}\big\}.
\]
such that
\[
\mathcal{L}(\cpi,\tau,\Inf;\lambda)
=
\mathcal{L}(\bpi,\Inf;\lambda).
\]
\label{lemma:stop_action}
\end{lemma}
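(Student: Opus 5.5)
We construct $\bpi$ explicitly from $(\cpi,\tau)$ and then check that the Lagrangian is unchanged. By Assumption~\ref{assumption:common_stopping}, for every agent $i$ and every $t$ the event $\{\tau\le t\}$ lies in $\mathcal F^i_t=\sigma(\D^i_t)$. Hence there are measurable maps $\phi^i_t$ with $\mathbf 1\{\tau\le t\}=\phi^i_t(\D^i_t)$ almost surely. Define the augmented local policy by
\[
\bar\pi^i_t(\cdot\mid \D^i_t)=\phi^i_t(\D^i_t)\,\delta_{a_{\mathrm{stop}}}(\cdot)+\bigl(1-\phi^i_t(\D^i_t)\bigr)\,\pi^i_t(\cdot\mid\D^i_t).
\]
In words, agent $i$ plays $a_{\mathrm{stop}}$ exactly when its own history certifies that $\tau\le t$, and otherwise mimics $\pi^i$. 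This is a decentralized policy, since it uses only $\D^i_t$, and it is supported on $\bar\A^i$.

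The first step is to show $\bar\tau=\tau$ almost surely, and that on $\{t<\tau\}$ the laws of the trajectories coincide. The argument is an induction on $t$. Suppose the histories generated by $\bpi$ and $\cpi$ have the same law up to time $t$ on $\{\tau\ge t\}$; on this event no agent has stopped before $t$. At time $t$ there are two cases. If $\tau>t$, every $\phi^i_t=0$, so every agent draws from $\pi^i_t$ and the joint transition matches that of $\cpi$. If $\tau=t$, then $\phi^i_t=1$ for all $i$ simultaneously, so every agent plays $a_{\mathrm{stop}}$ and $\bar\tau=t$. Because $\{\tau\le t\}$ is determined by each local history, the agents stop in the same round, with no disagreement. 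The off-by-one convention (deciding after observing $x^i_{t+1}$ is the same as stopping at round $t+1$) is handled by the indexing in the definition of $\bar\tau$ stated just before the lemma. The hypothesis $\tau\ge1$ guarantees that $\bar\tau$ is well defined from the first round.

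Given this coupling, the joint law of $(\D_\tau,\tau)$ under $\cpi$ equals the joint law of $(\D_{\bar\tau},\bar\tau)$ under $\bpi$. By Lemma~\ref{lemma:posterior kernel}, the posterior $\prob_t(H^\star=\Inf_t(\D_t)\mid\D_t)$ is given by the policy-independent kernel $R_t$, so the same function of the trajectory appears in both cases. Using the simplified form of $\mathcal L$ from the dual-formulation lemma, both $\E[\tau]$ and $\E[\prob_\tau(H^\star=\Inf_\tau(\D_\tau)\mid\D_\tau)]$ therefore agree. This yields $\mathcal L(\cpi,\tau,\Inf;\lambda)=\mathcal L(\bpi,\Inf;\lambda)$, where the right-hand side uses $\bar\tau$.

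The main obstacle is the measure-theoretic part of the coupling step. Two things need care: choosing versions of $\phi^i_t$ that are consistent almost surely across agents, and showing that mixing in the stop action does not change the law of the pre-stopping trajectory. We would handle this by writing the trajectory densities as in the proof of Lemma~\ref{lemma:posterior kernel} and checking that the policy factor agrees on $\{t<\tau\}$. Any null set on which the $\phi^i_t$ disagree carries zero probability and does not affect the expectations.
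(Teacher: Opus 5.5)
Your proposal is correct and follows the paper's proof. In both, $\bpi$ is built so that each agent plays $a_{\mathrm{stop}}$ exactly when its local history certifies the common stopping time (via Assumption~\ref{assumption:common_stopping}) and otherwise copies $\pi^i_t$; this gives $\bar\tau=\tau$, and the two Lagrangians coincide. Your mixture $\phi^i_t\,\delta_{a_{\mathrm{stop}}}+(1-\phi^i_t)\pi^i_t$ is a properly normalized version of the paper's $\bar\pi^i_t$, which sets $\bar\pi^i_t(a_{\mathrm{stop}}\mid z^i)=\mathbf{1}\{z^i\in A^i_t\}$ while keeping $\pi^i_t$ on $\A^i$, and your inductive coupling justifies the step where the paper replaces $\E^{\bpi}$ by $\E^{\cpi}$ without argument.
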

\begin{proof}

   With Assumption~\ref{assumption:common_stopping} $\{\tau \leq t\} \in \sigma(\D_t^i)\quad \forall i$, we can define for each agent  for $z^i\in \mathcal{Z}_t^i$,
  
and  there exists a measurable set 
$A_t^i \in \mathcal{B}(\mathcal{Z}_t^i)$, where 
\(\{\tau = t\} = \{\D_t^i \in A_t^i\}\)
\[
\bar{\pi}_t^i(a_{\text{stop}} \mid z^i) 
= \mathbf{1}\{z^i \in A_t^i\}
\qquad
\text{and} \qquad\bar{\pi}_t^i(a \mid z^i) 
= \pi_t^i(a \mid z^i), 
\quad a \in \mathcal{A}^i
.
\]

By the construction of the $\bpi$, $$a_\text{stop}\Longleftrightarrow \tau=t \text{ for all } i \in [N].$$
Hence, \[\bigcup_i\{a^i_t =a_\text{stop}\}=\{\tau=t\}\]
Therefore, we have
\[
 \{\bar{\tau}= t\}
= \{\exists i: a_t^i = a_{\text{stop}}\} 
=\{\tau = t\} .
\]

\begin{align}
\mathcal{L}(\bpi,\Inf;\lambda)
&=\lambda(1-\delta)+\mathbb{E}^{\bpi}\left[\bar{\tau}
-\lambda 
\prob_{\bar{\tau}}\!\left(H^\star=\Inf_{\bar{\tau}}(\D_{\bar{\tau}})\mid \D_
{\bar{\tau}}\right)
\right]\\
&=\lambda(1-\delta)+\mathbb{E}^{\bpi}\left[\sum_{t\ge 1}\mathbf{1}_{\{t\leq\bar{\tau}\}}
-\lambda 
\mathbf{1}_{\{\bar{\tau}=t\}}\prob_{t}\!\left(H^\star=\Inf_t(\D_t)\mid \D_
{t}\right)
\right] \\
&=\lambda(1-\delta)+\mathbb{E}^{\bpi}\left[\sum_{t\ge 1}^{\bar{\tau}}1
-\lambda 
\mathbf{1}_{\{\bar{\tau}=t\}}\prob_{t}\!\left(H^\star=\Inf_{t}(\D_{t})\mid \D_
{t}\right)
\right] \\
&=\lambda(1-\delta)+\mathbb{E}^{\bpi}\left[\sum_{t\ge 1}^{\bar{\tau}}1
-\lambda 
\mathbf{1}_{\{\exists i: a^i_t = a_\text{stop}\}}\prob_{t}\!\left(H^\star=\Inf_{t}(\D_{t})\mid \D_
{t}\right)
\right]\\
&=\lambda(1-\delta)+\mathbb{E}^{\cpi}\left[\sum_{t\ge 1}^{\tau}1
-\lambda 
\mathbf{1}_{\{\tau=t\}}\prob_{t}\!\left(H^\star=\Inf_{t}(\D_{t})\mid \D_{t}\right)
\right]
\end{align}
\end{proof}

Now we will keep working in the extended action space, and define \(\tau := \inf\big\{t\ge 1:\ \exists i\in[N]\ \text{s.t.}\ a_t^i = a_{\text{stop}}\big\}
\) to simply notations by avoiding using the bar notation.

\begin{proposition}
For any $t \in \mathbb{N}$, the optimal inference rule selects the hypothesis with the largest posterior probability:
\[
\Inf_t(D_t) \in \arg\max_{H \in \mathcal{H}} \mathbb{P}_t(H^\star = H \mid \D_t),
\]
Moreover,
\[
\sup_{\lambda \ge 0}\quad \inf_{\pi\in\Pi_{\text{dec}},\Inf} \mathcal{L}_\lambda(\pi,\Inf)
=
\sup_{\lambda \ge 0} \inf_{\pi}
\left\{
\lambda (1-\delta) + \mathbb{E}^{\pi}\!\left[\tau - \lambda r_\tau(\D_\tau)\right]
\right\},
\]
where
\[
\tau := \inf\big\{t\ge 1:\ \exists i\in[N]\ \text{s.t.}\ a_t^i = a_{\text{stop}}\big\}
\qquad
r_t(z_t) := \max_{H \in \mathcal{H}} \mathbb{P}_t(H^\star = H \mid \D_t=z_t).
\]
\label{prop:map-inference-dual-reduction}
\end{proposition}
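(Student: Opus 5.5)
The plan starts from the Lagrangian of the dual formulation lemma. After the reduction of Lemma~\ref{lemma:stop_action}, it reads
\[
\mathcal{L}_\lambda(\pi,\Inf)=\lambda(1-\delta)+\mathbb{E}^{\pi}\Bigl[\sum_{t\ge1}\mathbf{1}_{\{t\le\tau\}}-\lambda\sum_{t\ge1}\mathbf{1}_{\{\tau=t\}}\,\mathbb{P}_t\bigl(H^\star=\Inf_t(\D_t)\mid\D_t\bigr)\Bigr].
\]
The key observation is that, for fixed $\pi$ and $\lambda\ge0$, the inference rule $\Inf$ enters only through the second sum. Moreover, it enters separately at each time $t$ through the function $\Inf_t$. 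The first term $\mathbb{E}^\pi[\tau]$ and the law of $(\tau,\D_t)$ do not depend on $\Inf$, because $\tau$ is now induced by the augmented actions. Hence minimizing over $\Inf$ amounts to maximizing, for each $t$,
\[
\mathbb{E}^\pi\bigl[\mathbf{1}_{\{\tau=t\}}\,\mathbb{P}_t(H^\star=\Inf_t(\D_t)\mid\D_t)\bigr].
\]

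The pointwise step goes as follows. By Lemma~\ref{lemma:posterior kernel}, the conditional probability $\mathbb{P}_t(H^\star=H\mid\D_t=z)$ is given by the policy-independent kernel $R_t$, and so it is a fixed measurable function of $z$ for each $H$. Since $\mathcal{H}$ is finite, for every $z$ we have
\[
\mathbb{P}_t(H^\star=\Inf_t(z)\mid\D_t=z)\le \max_{H}\mathbb{P}_t(H^\star=H\mid\D_t=z)=r_t(z),
\]
with equality when $\Inf_t(z)$ is an argmax. A measurable selection of the argmax exists: take the smallest index in a fixed enumeration of $\mathcal{H}$ among the maximizers, since each posterior is measurable. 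The weight $\mathbf{1}_{\{\tau=t\}}\ge0$ and $\lambda\ge0$, so the pointwise maximizer also maximizes the expectation. This is exactly Proposition~\ref{prop:optimal inf} applied on the event $\{\tau=t\}$. The maximizer is simultaneously optimal for every $t$ and every $\pi$, and it does not depend on $\pi$. This proves the first claim.

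For the second claim, I would write $\inf_{\pi,\Inf}=\inf_\pi\inf_\Inf$ and substitute the optimal rule. This gives
\[
\inf_{\Inf}\mathcal{L}_\lambda(\pi,\Inf)=\lambda(1-\delta)+\mathbb{E}^\pi\Bigl[\tau-\lambda\sum_t\mathbf{1}_{\{\tau=t\}}r_t(\D_t)\Bigr]=\lambda(1-\delta)+\mathbb{E}^\pi[\tau-\lambda r_\tau(\D_\tau)].
\]
Taking $\sup_{\lambda\ge0}$ on both sides then yields the identity.

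The main obstacle is justifying the interchange of the infimum over the whole rule $\Inf=(\Inf_t)_t$ with the sum and expectation. Two points need care. The sum over $t$ must be well defined, which holds because $\mathbb{E}^\pi[\tau]<\infty$ ensures $\tau<\infty$ a.s., so $\{\tau=t\}$ partitions the space. The per-$t$ optimization must also be decoupled: each summand is bounded by $\lambda$, and the choices of $\Inf_t$ across different $t$ are independent. I would handle this by showing the lower bound $\mathcal{L}_\lambda(\pi,\Inf)\ge\lambda(1-\delta)+\mathbb{E}^\pi[\tau-\lambda r_\tau]$ holds for every $\Inf$, and that it is attained by the MAP rule. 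Together these avoid any interchange argument.
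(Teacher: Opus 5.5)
Your proposal is correct and follows the same route as the paper. Fix the policy, optimize the inference rule pointwise on each event $\{\tau=t\}$ using the policy-independent posterior (Proposition~\ref{prop:optimal inf}), substitute $r_t$, and collapse the sum to $r_\tau(\D_\tau)$ before taking $\inf_\pi$ and $\sup_\lambda$. Your lower-bound-plus-attainment argument, with its explicit measurable argmax selection, is a more careful justification of the step where the paper directly interchanges $\inf_{\Inf}$ with the expectation and the sum over $t$.
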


\begin{proof}

Let's first optimizing over the inference rules. For \(t\in \mathbb{N},z\in \mathcal{Z}_t\). let's define \(r_t(z_t)=\max_{H\in \mathcal{H}}\prob_t(H^*=H \mid\D_t=z_t)\). If we fix \((\cpi,\tau)\), we will get 
\begin{align*}
\inf_\Inf \mathcal{L}(\cpi,\Inf;\lambda)
&=\lambda(1-\delta)+\inf_\Inf\mathbb{E}^{\cpi}\left[\sum_{t\ge 1}\mathbf{1}_{\{t\le\tau\}}
-\lambda 
\mathbf{1}_{\{\tau=t\}}\prob_{t}\!\left(H^\star=\Inf_{t}(\D_{t})\mid \D_{t}\right)
\right],\\
&=\lambda(1-\delta)+\mathbb{E}^{\cpi}\left[\inf_\Inf\sum_{t\ge 1} \mathbf{1}_{\{t\le\tau\}}
-\lambda 
\mathbf{1}_{\{\tau=t\}}\prob_{t}\!\left(H^\star=\Inf_{t}(\D_{t})\mid \D_{t}\right)
\right],\\
&=\lambda(1-\delta)+\mathbb{E}^{\cpi}\left[\sum_{t\ge 1} \mathbf{1}_{\{t\le\tau\}}
-\lambda 
\mathbf{1}_{\{\tau=t\}}\sup_{\Inf_t}\prob_{t}\!\left(H^\star=\Inf_{t}(\D_{t})\mid \D_{t}\right)
\right],\\
&=\lambda(1-\delta)+\mathbb{E}^{\cpi}\left[\sum_{t\ge 1} \mathbf{1}_{\{t\le\tau\}}
-\lambda 
\mathbf{1}_{\{\tau=t\}}r_t(\D_t)\right]\\
&=\lambda(1-\delta)+\mathbb{E}^{\cpi}\left[\tau
-\lambda 
r_t(\D_t)\right].
\end{align*}

The second-to-last equality follows from Proposition~\ref{prop:optimal inf}. Now  we can get \[
\sup_{\lambda\geq 0}\inf_{\cpi,\Inf} \mathcal{L}(\cpi,\Inf;\lambda) = \sup_{\lambda\geq 0}\inf_{\cpi} \lambda-\lambda\delta + \mathbb{E}^{\cpi}\left[\tau
-\lambda 
r_\tau(\D_\tau)\right]
\]
\end{proof}

\clearpage
We now optimize over policies. Recall we now defined $\A^i$ includes the stopping action, \(a\in\A\) and 
$\tau = \inf\{t : a_t = a_{\text{stop}}\}\Leftrightarrow\tau = \inf\big\{t\ge 1:\ \exists i\in[N]\ \text{s.t.}\ a_t^i = a_{\text{stop}}\big\}$. For $t \in \mathbb{N}$, $z \in \mathcal{Z}_t$ 
use $\pi$ as an simplified notation for any possible decentralized policy, then the optimal value to go satisfies
\[
V_t(z;\lambda) := \inf_{\pi:\tau \ge t} \lambda - \lambda\delta 
+ \mathbb{E}^{\pi}\!\left[\underbrace{(\tau - t)}_{\text{expected future sample}} - \lambda r_\tau(\D_\tau)\Biggm| \D_t = z\right].
\]

Also define the following optimal $Q$-function for $z \in \mathcal{Z}_t$, $a \neq a_{\text{stop}}$
\[
Q_t(z,a;\lambda) := 1 + \int_{\mathcal{X}} 
V_{t+1}(\underbrace{z,a,x'}_{z'};\lambda)\,\bar{P}_t(dx' \mid z,a),
\]
where $\bar{P}_t$ is the posterior predictive distribution of the next observation. Every additional step will give a cost of  1 in the Q value. We also define stopping action Q value as 
\[
Q_t(z,a_{\text{stop}};\lambda) := \lambda\bigl(1-\delta-r_t(z)\bigr).
\]
If we define $\lambda$-optimal policy \(\pi_\lambda^\star =(\pi_t^\star(z;\lambda) )_t\) as  
\[
\pi_t^\star(z;\lambda) \in \arg\min_{a\in \A} Q_t(z,a;\lambda)
\]
We have then the following result.

\begin{proposition}
    $\pi_\lambda^\star$ is a $\lambda$-optimal policy. 
Furthermore, the optimal value for $z \in \mathcal{Z}_t$ satisfies
\[
V_t(z;\lambda) = \min_{a\in\A} Q_t(z,a;\lambda). \tag{21}
\]
\label{prop:optimal_policy}
\end{proposition}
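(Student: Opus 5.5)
We prove this by a dynamic-programming argument on the augmented action space, exactly as in the single-agent optimal-stopping reduction of \citet{russo2025}, and then check that the minimizer is realizable in $\Pi_{\mathrm{dec}}$.

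\textbf{Step 1: Bellman equation.} Fix $\lambda\ge0$, $t$, and $z\in\mathcal{Z}_t$. Any policy with $\tau\ge t$ either stops at $t$ or does not. If it stops, the cost-to-go is $\lambda(1-\delta)-\lambda r_t(z)=Q_t(z,a_{\text{stop}};\lambda)$; here the posterior in $r_t$ is policy independent by Lemma~\ref{lemma:posterior kernel}. If it continues with joint action $a$, we pay one unit. The next observation is distributed according to the posterior predictive $\bar P_t(\cdot\mid z,a)=\int P_{M,t}(\cdot\mid z,a)R_t(dM\mid z)$, again policy independent by Lemma~\ref{lemma:posterior kernel}. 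We then face the same problem from $z'=(z,a,x')$, so the remaining cost is at least $V_{t+1}(z';\lambda)$ pointwise. Taking the tower rule and infima gives $V_t\ge\min_a Q_t$.

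For the reverse inequality, we concatenate the action $a$ with $\epsilon$-optimal continuations chosen measurably for each $z'$. This yields $V_t\le\min_a Q_t+\epsilon$, and hence (21).

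\textbf{Step 2: optimality of $\pi^\star_\lambda$.} We show by a verification argument that the value of $\pi^\star_\lambda$ from $z$ equals $V_t(z;\lambda)$. Define $M_s=V_s(\D_s;\lambda)+(s-t)$ along the trajectory of $\pi^\star_\lambda$. By (21) and the greedy choice of actions, $M_{s\wedge\tau}$ is a martingale. Since $V\ge\lambda(1-\delta)-\lambda$ is bounded below and $\mathbb{E}[\tau]<\infty$ on feasible policies, optional stopping applies, and the terminal value $M_\tau=\tau-t+\lambda(1-\delta-r_\tau)$ gives the claim. If $\mathbb{E}[\tau]=\infty$, the cost is infinite, which contradicts finiteness of $V_t$; that $V_t$ is finite follows by comparing with the policy that stops immediately.

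\textbf{Step 3 (main obstacle): decentralization.} The $\arg\min$ is taken over joint actions $a$ as a function of the global history $z$, whereas $\Pi_{\mathrm{dec}}$ requires each $a^i_t$ to depend only on $\D^i_t$. Moreover, $V_t$ as defined is an infimum over decentralized policies, and that set is not closed under concatenation with arbitrary global-history continuations. A fully decentralized Bellman recursion would require a common-information or belief-over-histories state.

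We would handle this in one of two ways, trying the first before the second:
\begin{itemize}
\item Interpret the proposition as a relaxation: $\pi^\star_\lambda$ is optimal within the centralized class, and its value lower-bounds the decentralized value. This justifies using centralized critics under CTDE.
\item Show that when the minimizer is realizable by local rules, meaning each coordinate of the $\arg\min$ is $\sigma(\D^i_t)$-measurable, the decentralized infimum attains it. This uses Lemma~\ref{lemma:stop_action} for the stopping coordinate, since $\{Q_t(z,a_{\text{stop}})\le\min_{a\ne a_{\text{stop}}}Q_t\}$ is exactly the common stopping event.
\end{itemize}

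Care is needed in stating precisely which of these the proposition asserts.
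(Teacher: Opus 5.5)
Your Steps 1--2 take the same route as the paper. You split on stopping versus continuing at $t$, bound the continuation cost by $1+\int V_{t+1}\,d\bar P_t$, and argue that the greedy minimizer attains this bound. You are more careful than the paper on both points. The paper gets equality in the continuation case by replacing the inner conditional expectation with $V_{t+1}(\mathcal{D}_{t+1})$ inside the infimum. That step tacitly assumes the continuation can be optimized separately for each $\mathcal{D}_{t+1}$, which is exactly your $\epsilon$-optimal measurable concatenation. The paper also asserts, rather than verifies, that $\pi^\star_\lambda$ attains $V_t$.

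Your Step 3 concern is legitimate, and the paper does not address it. The paper's $\pi^\star_t(z;\lambda)$ is an $\arg\min$ over joint actions as a function of the global history $z$. The infimum--expectation interchange above is valid only for policies allowed to depend on all of $\mathcal{D}_{t+1}$. So what the paper's argument actually establishes is your option (a), with $V_t$ understood as the value over global-history policies. If $V_t$ is taken over $\Pi_{\mathrm{dec}}$, only the inequality $V_t\ge\min_a Q_t$ survives in general, and $\pi^\star_\lambda$ need not even lie in $\Pi_{\mathrm{dec}}$. You should commit to (a) rather than hedge. Option (b) then follows with no extra argument: if $\pi^\star_\lambda\in\Pi_{\mathrm{dec}}$, its value equals the centralized optimum, which lower-bounds every decentralized policy.

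One step in Step 2 needs repair. You cannot invoke ``$\mathbb{E}[\tau]<\infty$ on feasible policies,'' because feasibility of $\pi^\star_\lambda$ is not yet known. The argument ``infinite cost contradicts finiteness of $V_t$'' is circular, since it presupposes that the cost of $\pi^\star_\lambda$ equals $V_t$. Use the martingale itself instead:
$\mathbb{E}[(s\wedge\tau)-t]=V_t(z)-\mathbb{E}[V_{s\wedge\tau}(\mathcal{D}_{s\wedge\tau})]\le V_t(z)+\lambda\delta\le\lambda$.
This uses $V\ge-\lambda\delta$ and $V_t(z)\le Q_t(z,a_{\mathrm{stop}};\lambda)\le\lambda(1-\delta)$. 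Monotone convergence then gives $\mathbb{E}[\tau-t]\le\lambda$, and thanks to the unit step cost this holds for any tie-breaking rule. Finally, dominated convergence with $|M_{s\wedge\tau}|\le\lambda+(\tau-t)$ yields $\mathbb{E}[M_\tau]=V_t(z)$.
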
 

\begin{proof}
     Fix $D_t = z$, $z \in \mathcal{Z}_t$. Assume the optimal stopping action stops at 
$\tau = t$ for such $z$. Then
\[
V_t(z;\lambda) = \lambda - \lambda\delta - \lambda r_t(z) 
= Q_t(z,a_{\text{stop}};\lambda).
\]
Otherwise, suppose the optimal stopping rule satisfies $\tau > t$, then
\begin{align*}
    V_t(z;\lambda) 
&= \inf_{\pi:\tau>t} \lambda - \lambda\delta 
+ \mathbb{E}^\pi\!\left[\tau - t - \lambda r_\tau(D_\tau)\mid D_t=z\right]\\
&= \inf_{\pi:\tau>t} \lambda - \lambda\delta 
+ 1 +\mathbb{E}^\pi\!\left[\tau - (t+1) - \lambda r_\tau(D_\tau)\mid D_{t}=z\right]\\
&=  \inf_{\pi:\tau>t} 1+\mathbb{E}^\pi\left[\mathbb{E}^\pi\left[\lambda - \lambda\delta 
+\tau - (t+1) - \lambda r_\tau(D_\tau)\mid D_{t+1}\right] \mid \D_{t}=z\right]\\
&=  \inf_{\pi:\tau>t} 1+\mathbb{E}^\pi\left[\mathbb{E}^\pi\left[V_{t+1}(\D_{t+1};\lambda) \right]\mid \D_{t}=z\right]\\
&= \inf_{\pi:\tau>t} 
1 + \mathbb{E}^\pi_a\!\left[\int_{\mathcal{X}} 
V_{t+1}(z,a,x';\lambda)\,\bar P_t(dx'|z,a)\right],
\end{align*}

We then clearly obtain the lower bound
\[
V_t(z;\lambda) \ge 
\min\!\left\{\lambda(1-\delta-r_t(z)),\;\min_{a\in A} Q_t(z,a;\lambda)\right\}.
\]

If \(\tau=t\), then \(\lambda(1-\delta-r_t(z)
)\le\min_{a\neq \text{stop}} Q_t(z,a;\lambda)\); the value to go for the optimal policy is exactly \(\lambda(1-\delta-r_t(z)
)\).

If \(\tau\neq t\), then \(\lambda(1-\delta-r_t(z)
)\ge\min_{a\neq \text{stop}} Q_t(z,a;\lambda)\) , the best policy value should go to \(Q_t(z,\pi_t(z);\lambda) = \min_{a\in \A}Q_t(z,a,\lambda) = V(z;\lambda)\).
\end{proof}

\paragraph{Reward-form dual.}
Equivalently, the constrained problem can be written in reward-maximization form as
\[
\inf_{\lambda\ge 0}\ \sup_{\pi,\Inf}\ V_\lambda(\pi,\Inf),
\]
where
\[
V_\lambda(\pi,\Inf)
:=
-\E^\pi[\tau]
+\lambda\Big(
\prob^\pi(\Inf_\tau(\D_\tau)=H^\star)-1+\delta
\Big).
\]
Optimizing over the inference rule gives
\[
r_t(\D_t):=\max_{H\in\mathcal H}\prob_t(H^\star=H\mid \D_t),
\]
and therefore
\[
\inf_{\lambda\ge0}\ \sup_{\pi}
\E^\pi\!\left[
-\tau+\lambda\bigl(r_\tau(\D_\tau)-1+\delta\bigr)
\right].
\]

We then define the reward model by
\[
r_{t,\lambda}(\D_t,a)
=
-\mathbf 1\{a\neq a_{\mathrm{stop}}\}
+
\lambda\,\mathbf 1\{a=a_{\mathrm{stop}}\}\bigl(\max_{H\in\mathcal H}\prob_t(H^\star=H\mid \D_t)\bigr).
\]
Accordingly, the $Q$-function satisfies
\[
Q_{t,\lambda}(\D_t,a)
=
r_{t,\lambda}(\D_t,a)
+
\mathbf 1\{a\neq a_{\mathrm{stop}}\}
\E_{x_{t+1}\mid(\D_t,a)}
\left[
\max_{a'} Q_{t+1,\lambda}\bigl((\D_t,a,x_{t+1}),a'\bigr)
\right].
\]

\subsection{Q-Learning Approach}
\label{sec:appendix_vdn}
From the dynamic programming formulation above, we obtain the optimal joint
action--value function associated with the optimal policy $\pi^\star$.
We denote this value by $Q_{\mathrm{tot}}$.
We next seek a decomposition of $Q_{\mathrm{tot}}$ into per-agent utilities
$Q^i$. To enable decentralized greedy action selection, we adopt the
\emph{individual--global maximization} (IGM) assumption, which ensures that
the optimal joint action can be recovered from individual greedy actions.
In particular, we assume there exists a mixing function such that
$Q_{\mathrm{tot}}$ is monotone in each $Q^i$, i.e.,
\[
\frac{\partial Q_{\mathrm{tot}}}{\partial Q^i} \ge 0, \qquad \forall i.
\]
Under this condition, decentralized greedy selection with respect to the
individual utilities is consistent with the joint optimum. In practice, we
use learned approximations $\hat Q^i$ to parameterize this decomposition,
rather than the true value functions.

Since $\tau$ is a common stopping time adapted to each local filtration in Assumption~\ref{assumption:common_stopping},
on the event $\{\tau=t\}$ all agents must stop, i.e., $a_t^i=a_{\mathrm{stop}}$ for all $i$.
Because decentralized execution is greedy with respect to the learned per-agent utilities,
we impose the following consistency constraint: on $\{\tau=t\}$, we require
\[
\Bigl(\hat Q_t^i(z_t^i,a_{\mathrm{stop}};\lambda)
\ge \hat Q_t^i(z_t^i,a;\lambda)\Bigr),
\qquad \forall a\in\A^i,\ a\neq a_{\mathrm{stop}}.
\]
Equivalently, 
\[
a_{\mathrm{stop}}\in \arg\max_{a^i\in\A^i}
\hat Q_t^i(z_t^i,a^i;\lambda).
\]
Suppose We adopt a VDN-style factorization,
\[
\hat Q_{\mathrm{tot},t}(z,\mathbf a;\lambda)=\sum_{i=1}^N \hat Q_t^i(z_t^i,a_t^i;\lambda),
\]
so that joint greedy minimization is consistent with decentralized greedy minimization.

\subsection{Policy Learning Approach}
\label{sec:policy_learn}
An alternative to Q-learning is to learn a decentralized policy using an actor--critic formulation, where the critic evaluates the actors' actions using global information and joint actions. Since execution remains decentralized, each agent \(i\) selects its action based only on its local history \(\D_t^i\). During training, however, the critic may use additional global information to improve credit assignment.

For each agent $i\in[N]$, let the decentralized actor be
\[
\pi_\theta^i(\cdot \mid \D_t^i),
\]
and \(\theta\) is the collection of actor's parameter.
\paragraph{Critic.}
For each agent $i$, we define a centralized training critic
\[
Q_t(\D_t,a_t^i,a_t^{-i};\lambda),
\]
which estimates the future soft cost when agent $i$ takes action $a_t^i$ and the other agents take $a_t^{-i}$.
If no agent stops at time $t$, we pay cost $1$ and continue. If some agent stops, we terminate with reward
\[
\lambda\bigl(r_t(\D_t)-1+\delta\bigr).
\]
Thus, the one-step target is
\[
y_t^i=
c_t
+
\gamma\,\mathbf{1}\{\forall  j,\ a_t^j\neq a_{\mathrm{stop}}\}
\Big(
Q_{\phi^{\rm tar}}(\D_{t+1},a_{t+1}^i,a_{t+1}^{-i};\lambda)
)
\Big),
\]
where
\[
c_t=
\begin{cases}
-1, & \text{if no agent stops at time } t,\\[4pt]
\lambda\bigl(r_t(\D_t)-1+\delta\bigr), & \text{otherwise.}
\end{cases}
\]
The critic loss is
\[
\mathcal{L}_{Q^i}(\phi^i)
=
\E\Big[
\big(
Q^i_{\phi^i}(\D_t,a_t^i,a_t^{-i};\lambda)-y_t^i
\big)^2
\Big].
\]

\paragraph{Actor.}
The actor is trained with a deterministic policy-gradient objective using the centralized critics. Given the local history \(\D_t^i\), each actor produces a relaxed action \(\tilde a_t^i=\pi_{\theta_i}^i(\D_t^i)\), which is used for the actor update. The objective is
\[
J(\theta)
=
\mathbb{E}\!\left[
\sum_{i=1}^N
\min_{j\in\{1,2\}}
Q_{\phi_j}\bigl(\D_t,\tilde a_t^i,\tilde a_t^{-i};\lambda\bigr)
\right],
\]
where \(\tilde a_t=(\tilde a_t^1,\ldots,\tilde a_t^N)\) denotes the joint relaxed action generated by the current actors. Since we maximize \(J(\theta)\), the actor is encouraged to choose decentralized actions that receive high value under the centralized critics. During execution, each agent still selects actions using only its local history \(\D_t^i\).

\paragraph{Stopping.}
Since $\tau$ is a common stopping time, when stopping is optimal all agents should prefer the stopping action. Hence, when \(\{t=\tau\}\) we want
\[
a_{\mathrm{stop}}
\in
\arg\max_{a^i\in\A^i}
Q_t(\D_t,a^i,a_t^{-i};\lambda),
\qquad \forall i\in[N].
\]
In the soft setting, this means that the policy should assign high probability to $a_{\mathrm{stop}}$.

\subsection{Posterior Confidence Reward for Capacity Identification}
Given a bandit problem with $K$ arms and $N$ players, the objective is identifying the true capacity vector $C^*$ up to $1-\delta$ confidence. We define the set of hypothesis $\mathcal{H}$ is all the possible capacity configuration for $N$ players and $K$ arms, $C^* = (c_i)_{i\in [K]} , c_i \in \{1,\dots, N\}$. We further define the reward in terms of the probability of correctly identifying the true capacity vector. Formally, 
\begin{equation*} 
\max_{c_{1},\dots, c_{K} \in [N]} 
\; \mathbb{P}\!\left(C^* = 
\bigl[c_{1}(\D_\tau), \dots, c_{K}(\D_\tau)\bigr]\right),
\end{equation*}
where $c_{a}$ is a random variable taking values past global history data and output the capacity estimation of arm $a$.
Essentially we can just write
\[
\max_{c_1,\dots, c_K\in [N]} 
\; \mathbb{P}\!\left(C^* = 
\bigl[c_1,\dots,c_K\bigr]\right) = \prod_{a\in[K]} \max_{c\in [N]} \mathbb{P}(C_a^\star =c),
\]

Since the capacity hypotheses across arms are independent, we can rewrite this as
\begin{equation*} 
\max_{\hat{c}_{\tau,1},\dots, \hat{c}_{\tau,K} \in [N]} 
\prod_{a \in [K]} \mathbb{P}\!\left(c_a^* = \hat{c}_{\tau,a}\right)
= \prod_{a \in [K]} \max_{\hat{c}_{\tau,a} \in [N]} 
\mathbb{P}\!\left(c_a^* = \hat{c}_{\tau,a}\right).
\end{equation*}

Taking logarithms yields
\begin{equation*} 
\sum_{a \in [K]} 
\log \Bigl(\max_{\hat{c}_{\tau,a} \in [N]} \mathbb{P}\!\left(c_a^* = \hat{c}_{\tau,a}\right)\Bigr).
\end{equation*}

Normalizing by $K$ and defining the reward from the inference network as
\begin{equation*} 
\frac{1}{K} \sum_{a \in [K]} 
\log \Bigl(\max_{j \in [N]} \mathbb{P}\!\left(c_a^* = \hat{c}_{\tau,a}\right)\Bigr).
\end{equation*}

\section{Experiment Details}
\label{sec:exp_detial_apx}
\subsection{Algorithm}
The pseudocode for ICMAPE is provided in Algorithms~\ref{alg:multicpe_td3_training} and~\ref{alg:multicpe_inference}.
\begin{algorithm}[ht]
\caption{\textsc{ICMAPE-TD3 Training}}
\label{alg:multicpe_td3_training}
\begin{algorithmic}[1]
\STATE \textbf{Input:} prior $\mathcal{P}$, confidence level $\delta$, number of agents $N$, maximum rollout length $T$.
\STATE Initialize replay buffer $\mathcal{B}$, actors $\{\pi_\theta^i\}_{i=1}^N$, inference network $\Inf_\psi$, twin critics $Q_{\phi_1},Q_{\phi_2}$, target networks, and multiplier $\zeta$.
\WHILE{training is not over}
    \STATE Sample environment $M\sim \mathcal{P}$ with hypothesis $H^\star$.
    \STATE Initialize global trajectory $\D_0$ and local trajectories $\{\D_0^i\}_{i=1}^N$.
    \STATE $t \leftarrow 0$.
    \REPEAT
        \FOR{each agent $i=1,\dots,N$}
            \STATE Select action $a_t^i \sim \pi_\theta^i(\cdot\mid \D_t^i)$ with exploration noise.
        \ENDFOR
        \STATE Execute joint action $a_t=(a_t^1,\dots,a_t^N)$ and observe feedback $x_{t+1}$.
        \STATE Update global trajectory $\D_{t+1}$ and local trajectories $\{\D_{t+1}^i\}_{i=1}^N$.
        \STATE Compute posterior $p_\psi(H\mid \D_{t+1})$ and confidence reward $r_{t+1}=\max_H p_\psi(H\mid \D_{t+1})$.
        \STATE Set $s_t=\mathbf{1}\{\exists i,\ a_t^i=a_{\mathrm{stop}}\}$ and define cost $c_t$ as in Eq.~\ref{eq:cost}.
        \STATE Store $(\D_t,a_t,x_{t+1},\D_{t+1},s_t,H^\star)$ in $\mathcal{B}$.
        \STATE $t\leftarrow t+1$.
    \UNTIL{$s_{t-1}=1$ or $t=T$}

    \STATE Sample minibatch $B\sim\mathcal{B}$.
    \STATE Update inference network $\Inf_\psi$ using $\mathcal{L}_{\mathrm{inf}}$ in Eq.~\ref{eq:inf_loss}.
    \STATE Compute TD3 targets using target actors and $\min_{\ell\in\{1,2\}} Q_{\phi_\ell'}$.
    \STATE Update critics $Q_{\phi_1},Q_{\phi_2}$ using $\mathcal{L}_{\mathrm{critic}}$ in Eq.~\ref{eq:critic_loss}.
    \IF{delayed actor update step}
        \STATE Update actors $\{\pi_\theta^i\}_{i=1}^N$ using $\mathcal{L}_{\mathrm{actor}}$ in Eq.~\ref{eq:actor_obj}.
        \STATE Apply soft updates to the target networks.
    \ENDIF
    \STATE Update multiplier $\zeta$ using the batch posterior confidence.
\ENDWHILE
\end{algorithmic}
\end{algorithm}

\begin{algorithm}[ht]
\caption{\textsc{ICMAPE Inference}}
\begin{algorithmic}[1]
\label{alg:multicpe_inference}
\STATE \textbf{Input:} trained actors $\{\pi_\theta^i\}_{i=1}^N$, trained inference network $\Inf_\psi$, unknown environment, maximum rollout length $T$.
\STATE Initialize global trajectory $\D_0$ and local trajectories $\{\D_0^i\}_{i=1}^N$
.
\STATE $t \leftarrow 0$.
\REPEAT
    \FOR{each agent $i=1,\dots,N$}
        \STATE Select action $a_t^i \sim \pi_\theta^i(\cdot\mid \D_t^i)$.
    \ENDFOR
    \STATE Execute joint action $a_t=(a_t^1,\dots,a_t^N)$ and observe feedback $x_{t+1}$.
    \STATE Update global trajectory $\D_{t+1}$ and local trajectories $\{\D_{t+1}^i\}_{i=1}^N$.
    \STATE Set $s_t=\mathbf{1}\{\exists i,\ a_t^i=a_{\mathrm{stop}}\}$.
    \STATE $t\leftarrow t+1$.
\UNTIL{$s_{t-1}=1$ or $t=T$}
\STATE Let $\tau=t$ and return
\[
\widehat H_\tau
=
\arg\max_{H\in\mathcal{H}}
p_\psi(H\mid \D_\tau).
\]
\end{algorithmic}
\end{algorithm}

\subsection{Baseline Descriptions}
\label{sec:baseline_descriptions}

\paragraph{Random Policy.}
\label{baseline:random}
In this baseline, each agent selects actions uniformly at random at every time step. A separate inference module monitors the collected data and terminates the process once the estimated probability of correctly identifying all arm capacities exceeds $1 - \delta$. This baseline serves as a lower bound on performance without coordinated exploration.

\paragraph{Independent Q-Learning.}
\label{baseline:indep}
In this baseline, each agent learns independently using an independent Q-learning framework \citep{matignon2012}. We refer to this baseline as INDEP in the experiments. Specifically, each agent maintains its own Q-network and updates it using a standard temporal-difference objective (Double DQN). As in our method, a separate inference module predicts the hypothesis, while stopping is controlled by the agents' learned stopping action.

\paragraph{Value Decomposition Networks.}
\label{baseline:vdn}
We adapt Value Decomposition Networks (VDN)~\citep{sunehag2017} to our pure exploration setting. The training objective minimizes a temporal-difference loss using the decomposed joint value function \(Q_{\mathrm{tot}}=\sum_{i=1}^N Q^i\); details are provided in Section~\ref{sec:appendix_vdn}. As above, inference and inference reward is handled by a separate module, while termination is controlled by the learned stopping action.

\paragraph{Environment Reward Guided Exploration (Bandit Only).}
\label{baseline:env_reward}
For the multi-armed bandit capacity identification task, we additionally include an environment-reward-guided exploration baseline in the A3P5 setting, where agents are trained using observed bandit rewards rather than inference confidence as the training signal. We include this baseline to demonstrate the designated failure of reward-driven exploration in pure-exploration settings, where maximizing environment rewards does not align with the goal of hypothesis identification.

\paragraph{Centralized Coordination (Bandit Only).}
\label{baseline:centralized_coord}
To quantify the gap between centralized and decentralized action selection, we added a centralized joint-policy baseline in A3P5. The centralized policy observes the full global trajectory and directly selects one of 243 joint arm assignments, plus one stopping action. We evaluated both TD3 and DQN as centralized baselines.

\paragraph{Thompson Sampling Guided Exploration (Signal Localization Only).}
\label{baseline:ts}
Thompson Sampling (TS) is only applicable when the hypothesis space aligns with the action space, as in signal-source localization, where hypotheses correspond to spatial locations that agents can move toward. In the TS baseline, each agent constructs a posterior over hypotheses from its collected data, $\mathcal{D}_t^i$, and samples a hypothesis from this posterior. The sampled hypothesis, together with a predefined action-priority rule, determines the agent's next data-collection action. Agents collect observations until the inference network stops the process. See Section~\ref{sec:agent_wise_posterior_ts} for details.

\paragraph{GP Prior Map Guided Sampling (Nitrate Only).}
\label{baseline:gp_prior}
For the nitrate concentration experiment, we include an additional baseline motivated by the availability of historical training data. Similar in spirit to the TS baseline in the signal localization setting, we sample 3{,}000 points from the training set (2001--2018) to construct a GP prior map of the concentration field. Agents then sample independently from this fixed GP prior map without active coordination or inference-driven stopping. This baseline assesses whether a static prior built from historical data suffices for identifying the highest-concentration region, or whether active multi-agent exploration provides meaningful gains.

\clearpage
\subsection{Signal Localization Environment Details}
Table~\ref{tab:signal_env_params} are the parameters we used to run the signal localization task and generate example Figure~\ref{fig:signal_map_noise}.

\begin{table}[ht]
\centering
\caption{Environment parameters for signal source localization.}
\label{tab:signal_env_params}
\begin{tabular}{ll}
\toprule
Parameter & Value \\
\midrule
$q_i$ & $0.95$ \\
$d_{\text{min}}$ & 10 \\
$\lambda$ & $0.3$ \\
$\sigma$ & $0.07$ \\
$n$ & $25$ \\
$m$ & $5$ \\
$\Delta_{\mathrm{move}}$ & $5$ \\
\bottomrule
\end{tabular}
\end{table}

\begin{figure}[ht]
    \centering
    \includegraphics[width=0.6\linewidth]{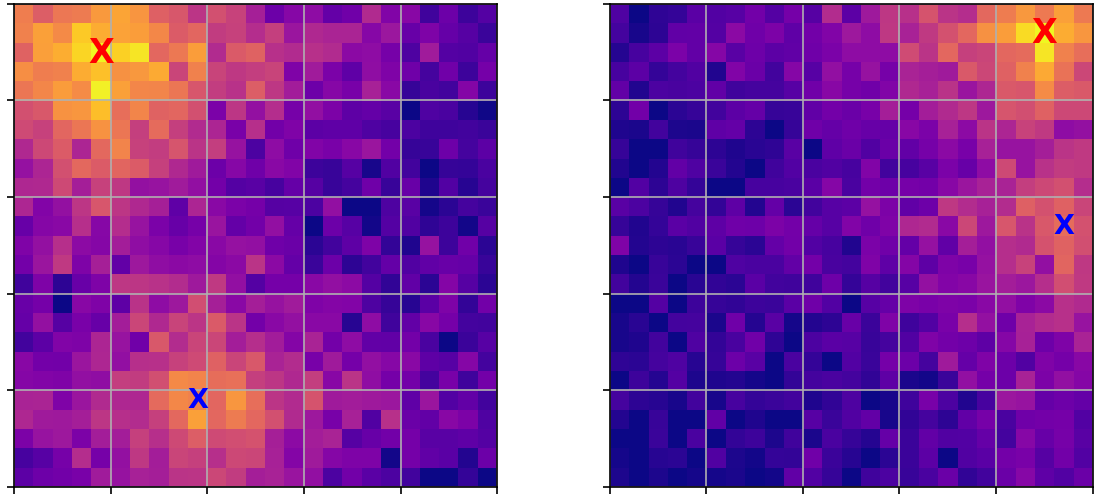}
    \caption{Example of Agent 3's signal field $Y^{(3)}$. Red $\times$: true source; blue $\times$: distractor.}
    \label{fig:signal_map_noise}
\end{figure}

\subsection{Maryland Nitrate Concentration Monitoring Details}
\label{app:nitrate}

\subsubsection{Data and Spatial Grid}

\label{app:nitrate_data_grid}

We construct the nitrate monitoring environment from Maryland nitrate concentration measurements obtained from the Water Quality Portal (WQP)~\citep{waterqualityportal2026marylandnitrate}. The dataset contains sample-level water-quality observations collected by federal, state, and local monitoring programs, including the sampling location, sampling date, measured characteristic, concentration value, and measurement unit. We use observations from 2001--2023, with years 2001--2018 used for training and years 2019--2023 held out for evaluation. To define a finite sensing domain, we discretize the Maryland region into an $n \times n$ spatial grid. Let $\mathcal{X}$ denote the set of all grid cells, with $|\mathcal{X}| = n^2$. Each cell corresponds to a candidate spatial location at which agents may collect nitrate concentration measurements. Since not all grid cells correspond to valid Maryland monitoring locations, we apply a validity mask and retain only cells that lie within the region of interest and have historical monitoring support. We denote the resulting set of valid sensing locations by $\mathcal{X}_{\mathrm{valid}} \subseteq \mathcal{X}$, with $|\mathcal{X}_{\mathrm{valid}}| = N_{\mathrm{valid}}$. Invalid cells are excluded from the agents' action space and are assigned zero concentration values when constructing spatial fields. For each year, the raw monitoring observations are used to construct an annual nitrate field over the valid grid cells. Specifically, each observation is assigned to its corresponding spatial location, and the resulting irregularly sampled measurements are used as input to the Gaussian-process field-generation procedure described in Appendix~\ref{app:nitrate_gp}. This discretized representation allows us to formulate nitrate hotspot identification as a finite pure-exploration and hypothesis testing problem, where the hypothesis set $\mathcal{H}$ is the set of valid grid cells, i.e., $\mathcal{H} = \mathcal{X}_{\mathrm{valid}}$.

\subsubsection{GP-Based Field Generation}
\label{app:nitrate_gp}

For each year, we generate spatial nitrate concentration fields using Gaussian process (GP) regression fitted to historical monitoring observations from that year. Since the raw measurements are irregularly distributed across space, the GP provides a smooth spatial interpolation over the discretized Maryland grid. Each generated field defines one latent environment instance for the active-sensing task.

At the beginning of each episode, we first sample a year from the training or evaluation split. We then draw a subset of raw nitrate measurements from that year and fit a GP in log-transformed concentration space. Specifically, if $y(x)$ denotes the nitrate concentration observed at location $x$, we fit the GP to $\log(1+y(x))$ to reduce the effect of heavy-tailed concentration values and ensure nonnegative predictions after transformation back to the original scale. The GP is fit using a Matérn kernel with fixed hyperparameters. After fitting, we evaluate the GP posterior mean and standard deviation over all grid cells in $\mathcal{X}$.

Let $\mu_y(x)$ and $\sigma_y(x)$ denote the GP posterior mean and standard deviation in log-concentration space for year $y$ at grid cell $x$. We construct the latent nitrate field by converting the posterior mean back to the original concentration scale:
\[
    f_y(x) = \exp(\max\{\mu_y(x),0\}) - 1.
\]
Invalid grid cells are masked out and assigned zero concentration. The true target location for the episode is then defined as the grid cell with the highest concentration,
\[
    \theta^\star_y = \arg\max_{x \in \mathcal{X}_{\mathrm{valid}}} f_y(x),
\]
where $\mathcal{X}_{\mathrm{valid}} \subseteq \mathcal{X}$ denotes the set of valid Maryland sensing locations.

To obtain diverse environment instances, we generate multiple GP-derived fields for each year by repeatedly subsampling the raw observations and refitting the GP. This procedure produces different plausible nitrate concentration maps from the same annual monitoring record, allowing us to evaluate whether an active-sensing policy can generalize across both years and spatial field realizations.

Figure~\ref{fig:nitrate_gp_maps} illustrates how the GP-estimated nitrate concentration fields vary across years, showing the average concentration per site for three representative years spanning the training and evaluation periods.

\begin{figure}[ht]
    \centering
    \begin{subfigure}[b]{0.8\textwidth}
        \includegraphics[width=\textwidth]{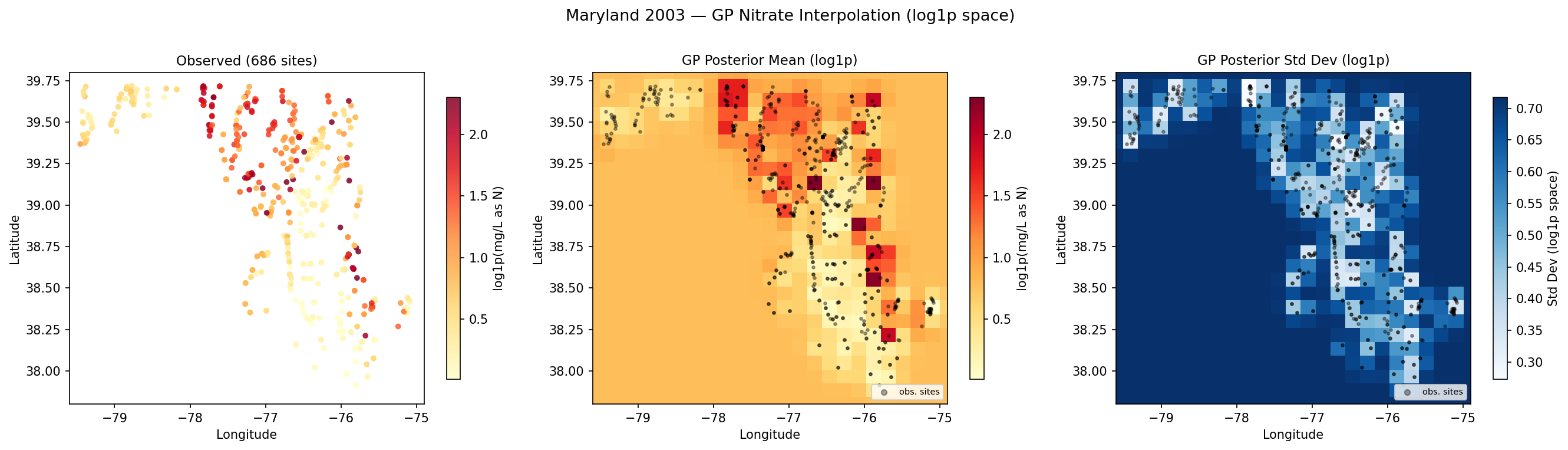}
        \caption{Year 2003 (training)}
    \end{subfigure}\\[6pt]
    \begin{subfigure}[b]{0.8\textwidth}
        \includegraphics[width=\textwidth]{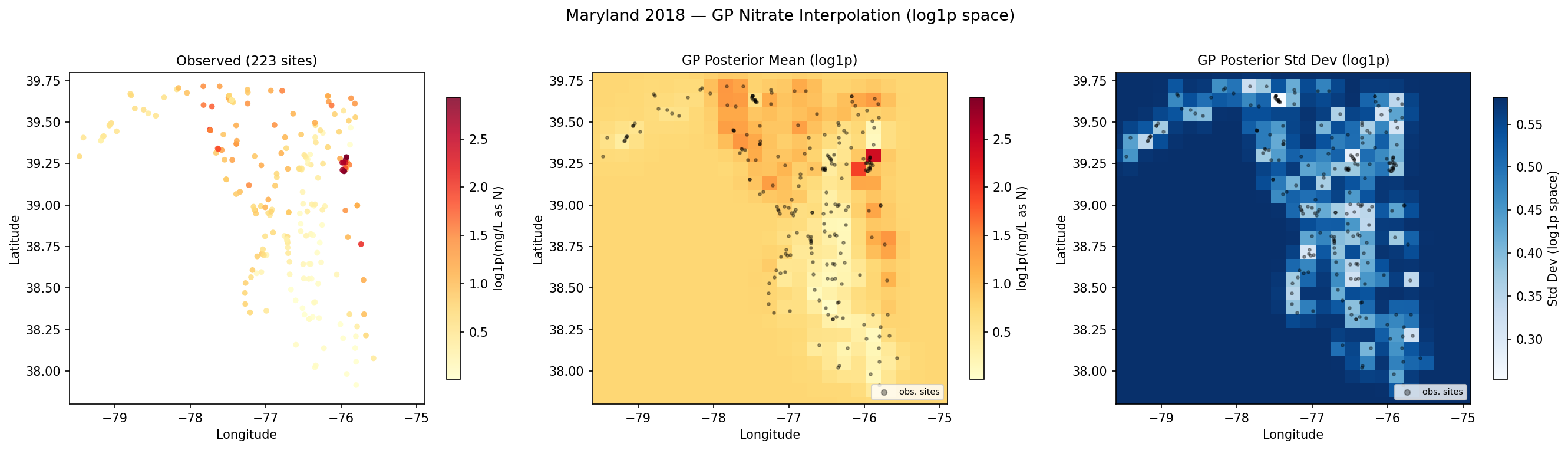}
        \caption{Year 2018 (training)}
    \end{subfigure}\\[6pt]
    \begin{subfigure}[b]{0.8\textwidth}
        \includegraphics[width=\textwidth]{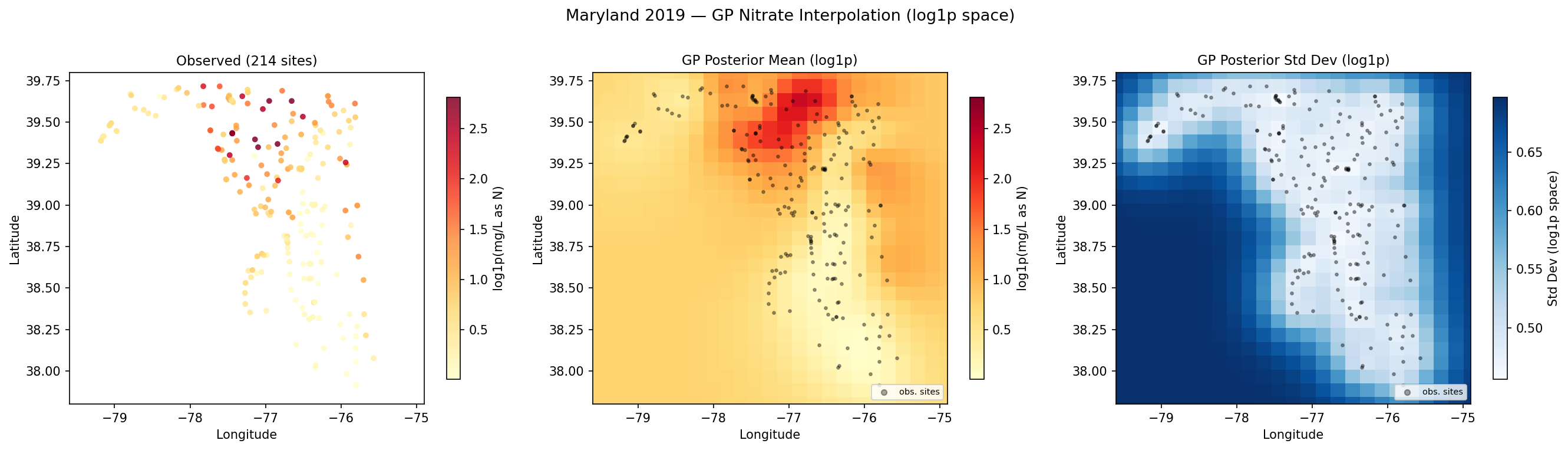}
        \caption{Year 2019 (evaluation)}
    \end{subfigure}
    \caption{GP-estimated nitrate concentration fields across Maryland for three representative years. Each map is constructed from year-round average concentrations at monitoring sites for that year. Spatial patterns shift notably across years, illustrating the inter-year variability that motivates training on multiple years and evaluating generalization on held-out years.}
    \label{fig:nitrate_gp_maps}
\end{figure}

\begin{table}[ht]
    \centering
    \small
    \begin{tabular}{rrrr}
    \toprule
    Year & Raw Data Points & Avg. Var. & Avg. Sigma \\
    \midrule
    2000 & 10,693 & 0.265 & 0.515 \\
    2001 & 10,146 & 0.287 & 0.536 \\
    2002 & 11,063 & 0.249 & 0.499 \\
    2003 & 11,542 & 0.264 & 0.514 \\
    2004 & 11,769 & 0.421 & 0.649 \\
    2005 & 10,326 & 0.238 & 0.488 \\
    2006 & 8,726  & 0.289 & 0.538 \\
    2007 & 8,294  & 0.282 & 0.531 \\
    2008 & 9,708  & 0.214 & 0.462 \\
    2009 & 8,865  & 0.219 & 0.468 \\
    2010 & 5,091  & 0.362 & 0.602 \\
    2011 & 5,532  & 0.319 & 0.565 \\
    2012 & 2,712  & 0.391 & 0.626 \\
    2013 & 3,855  & 0.317 & 0.563 \\
    2014 & 5,382  & 0.265 & 0.515 \\
    2015 & 5,651  & 0.298 & 0.546 \\
    2016 & 5,490  & 0.320 & 0.566 \\
    2017 & 1,696  & 0.533 & 0.730 \\
    2018 & 5,082  & 0.276 & 0.525 \\
    \bottomrule
    \end{tabular}
    \caption{Per-year data statistics for GP field generation. Avg. Var. denotes the mean GP posterior variance over valid Maryland grid cells in log-transformed concentration space, and Avg. Sigma is its square root.}
    \label{tab:nitrate_year_stats}
    \end{table}

\subsubsection{Multi-Agent Sampling Dynamics}
\label{app:nitrate_dynamics}

After a GP-derived nitrate field is generated, agents interact with the field through sequential sampling. We consider a team of $M$ agents operating over a finite horizon $T$. At each timestep $t$, each agent selects one valid grid cell from $\mathcal{X}_{\mathrm{valid}}$ to sample. The underlying nitrate field $f_y$ is fixed within an episode but is unknown to the agents. Agents can only obtain information about the field through noisy measurements collected at the locations they visit.

When agent $i$ selects location $x_t^i$, the environment returns noisy nitrate observations generated from the GP posterior at that cell. In our implementation, each query produces multiple noisy readings. Specifically, observations are sampled in log-concentration space as
\[
    \tilde{o}_{t,j}^i
    \sim
    \mathcal{N}\!\left(
        \mu_y(x_t^i),
        \min\{\sigma_y(x_t^i), \sigma_{\max}\}^2
    \right),
    \qquad j=1,\ldots,n_{\mathrm{obs}},
\]
where $\mu_y(x)$ and $\sigma_y(x)$ are the GP posterior mean and standard deviation for year $y$, and $\sigma_{\max}$ is a cap on the observation noise. The observations are then transformed back to the original concentration scale by
\[
    o_{t,j}^i = \exp(\tilde{o}_{t,j}^i)-1.
\]
The noise cap prevents highly uncertain grid cells from producing extremely noisy observations, which stabilizes training while preserving spatial uncertainty from the GP posterior.

Each agent maintains its own sampling history,
\[
    D_t^i = \{(x_s^i, o_{s,1}^i,\ldots,o_{s,n_{\mathrm{obs}}}^i)\}_{s=1}^{t},
\]
and the decentralized actor conditions only on this local history when selecting future actions.

The episode terminates either when the horizon $T$ is reached or when a stopping action is selected. The stopping action is shared at the team level: once any agent chooses to stop, the episode ends for all agents. The final sampling history is then passed to the inference network, which predicts the most likely high-concentration region.

\subsubsection{Target Definition and Evaluation}
\label{app:nitrate_target_eval}

For each GP-generated nitrate field $f_y$, we define the target as the set of top-$k$ highest-concentration valid grid cells,
\(
    \mathcal{H}^\star_y
\).
We use a top-$k$ target rather than only the single maximum cell because several high-concentration locations may have very similar nitrate levels, making the exact top-1 location sensitive to small GP or observation noise. At the end of each episode, the inference network outputs a confidence score $\widehat{p}(x)$ for each valid cell, and the predicted location is $\widehat{\theta} = \arg\max_{x \in \mathcal{X}_{\mathrm{valid}}} \widehat{p}(x)$. An episode is counted as successful if $\widehat{\theta} \in \mathcal{H}^\star_y$. We report top-$k$ identification accuracy averaged over generated fields and evaluation years, together with the average number of samples used before termination to measure sample efficiency.

Table~\ref{tab:nitrate_env_params} summarizes the main variables and parameters used in the nitrate monitoring enviroment.
\begin{table}[ht]
    \centering
    \small
    \begin{tabular}{lll}
    \toprule
    Symbol / Parameter & Value & Description \\
    \midrule
    $\mathcal{X}$ & $n \times n$ grid & Discretized Maryland sensing domain \\
    $n$ & $20$ & Grid resolution along each spatial dimension \\
    $\mathcal{X}_{\mathrm{valid}}$ & valid grid cells & Cells inside the region of interest with historical monitoring support \\
    $N_{\mathrm{valid}}$ & $|\mathcal{X}_{\mathrm{valid}}|$ & Number of valid sensing locations \\
    $M$ & $6$ & Number of agents \\
    $T$ & $40$ & Maximum episode horizon \\
    $k$ & $5$ & Number of high-concentration target cells \\
    $n_{\mathrm{obs}}$ & $5$ & Number of noisy readings returned per query \\
    $\mathcal{Y}_{\mathrm{train}}$ & 2001--2018 & Training years \\
    $\mathcal{Y}_{\mathrm{test}}$ & 2019--2023 & Held-out evaluation years \\
    $N_{\mathrm{fit}}$ & $100$ & Number of raw monitoring observations used to fit each training GP field \\
    $f_y$ & GP-derived field & Latent nitrate concentration field for year $y$ \\
    $\mathcal{H}^\star_y$ & top-$k$ cells & True high-concentration hotspot set for year $y$ \\
    $\sigma_{\max}$ & $0.5$ & Cap on GP posterior standard deviation in log space \\
    $N_{\mathrm{GP}}$ & $100$ & Number of GP-generated maps sampled per year for evaluation \\
    $N_{\mathrm{points}}$ & $1000$ & Number of spatial points sampled from each GP-generated map for evaluation \\
    $N_{\mathrm{train}}$ & $400{,}000$ & Maximum number of training steps \\
    \bottomrule
    \end{tabular}
    \caption{Main variables and parameters used in the Maryland nitrate concentration monitoring environment.}
    \label{tab:nitrate_env_params}
    \end{table}

\subsection{Model Architecture}
\label{sec:architecture_appendix}
\subsubsection{Capacity Identification Model Architecture}
\label{sec:capacity_detail_appendix}

For the bandit capacity identification task, the model operates on sequential trajectory data collected by the agents.. The architecture is based entirely on sequence encoders that process the history of observed actions and rewards. We use the LSTM-based encoder with conditional pooling described in Section~\ref{sec:lstm_conditional_pool} to summarize the trajectory up to the current timestep.

\paragraph{Actor Network.}
The actor network parameterizes the decentralized exploration policy for each agent. Given an agent's local trajectory $\D_t^i$, the input sequence is first mapped into an embedding space through a linear layer followed by a GELU activation. The embedded sequence is then passed through the LSTM sequence encoder described in Section~\ref{sec:lstm_conditional_pool}.

Let $\widetilde{h}_t^i$ denote the pooled trajectory representation for agent $i$. The actor maps this representation to logits over the action space with a linear layer.
where $A$ is the number of bandit arms and the additional action corresponds to the stopping action $a_{\mathrm{stop}}$.
During training, we use Gumbel--Softmax sampling with temperature $\tau$ to obtain differentiable action representations for TD3-style policy optimization. At execution time, each agent acts using only its own local trajectory $\D_t^i$, preserving decentralized execution.

\paragraph{Critic Network.}
The critic is implemented as a twin LSTM-based critic. The global trajectory $\D_t$ is embedded with a linear layer followed by a GELU activation, processed by an LSTM encoder, and pooled using the attention-based pooling layer from Section~\ref{sec:lstm_conditional_pool}. The critic also conditions on the joint action through a masked one-hot encoding: for each queried agent $i$, the agent's own action is masked out, leaving only the actions of the other agents, denoted by $\widetilde{a}_t^{-i}$. The pooled trajectory representation $\widetilde{h}_t$ and the masked action representation are concatenated and passed through a two-layer MLP to produce the Q-value estimate:
\[
Q_{\phi_\ell}^i(\D_t,a_t^{-i})
=
f_{\phi_\ell}\bigl([\widetilde{h}_t;\widetilde{a}_t^{-i}]\bigr),
\qquad \ell\in\{1,2\}.
\]
The two critics use separate LSTM encoders and separate output heads, and TD3 targets are computed using the minimum of the two target critic values.

\paragraph{Inference Network.}
The inference network predicts the underlying capacity vector from the global trajectory $\D_t$. It uses the same LSTM-based sequence encoder described in Section~\ref{sec:lstm_conditional_pool}. The input trajectory is first embedded through a linear layer followed by a GELU activation. The embedded sequence is then processed by the LSTM encoder, and attention-based pooling with a timestep mask is used to obtain a fixed-dimensional trajectory representation $\widetilde{h}_t$.

The pooled representation is mapped through a linear output layer to produce logits over possible capacities for each arm:
\[
o_\psi(\D_t)
=
W_{\mathrm{out}}\widetilde{h}_t+b_{\mathrm{out}},
\qquad
o_\psi(\D_t)\in\mathbb{R}^{A\times N},
\]
where $A$ is the number of arms and $N$ is the number of players. 
During training, the inference network is supervised using the true capacity vector $C^\star$.

\subsubsection{Signal Localization Model Architecture}
\label{sec:signal_loc_detail_appendix}

The signal localization model uses a different architecture from the bandit model because the input contains both spatial observations and temporal location histories. Specifically, each agent observes a local belief map on an $n\times n$ grid, while the model also receives the location history of all agents over time. We process these two sources of information separately and then fuse them to produce each agent's action distribution.

\paragraph{Actor Network.}
The actor is implemented as a per-agent CNN--LSTM architecture. For each agent $i$, the local belief map is first processed by a player-specific CNN encoder, producing a spatial embedding $z_{\mathrm{grid}}^i \in \mathbb{R}^{d}$. In parallel, the full location history of all agents is reshaped into a sequence of joint locations and passed through a player-specific LSTM, producing a temporal location embedding $z_{\mathrm{loc}}^i \in \mathbb{R}^{d}$ at the current timestep using LSTM with conditional pooling mentioned in Section~\ref{sec:lstm_conditional_pool}.

The two embeddings are then concatenated and passed through a player-specific fusion module, implemented as a linear projection followed by a GELU activation.
Finally, a player-specific output layer maps $z_{\mathrm{fuse}}^i$ to logits over $A+1$ actions, where the additional action corresponds to the stopping action. The policy distribution is obtained by applying a softmax to these logits. During TD3 training, we use Gumbel--Softmax samples to obtain differentiable action representations for the actor update.

\paragraph{Critic Network.}
The critic for the signal localization environment is implemented as a twin centralized critic. Unlike the actor, which conditions only on each agent's local trajectory, the critic has access to global trajectory information during training. Specifically, it takes as input the belief maps for all agents, the joint location history, and the joint action.

For each critic head, the stacked belief maps are first processed by a CNN encoder. Since the critic observes the belief maps of all agents, the input is represented as a multi-channel grid, where the channels correspond to the agents' belief maps. The CNN encoder produces a spatial embedding $z_{\mathrm{grid}} \in \mathbb{R}^d$. In parallel, the location history of all agents is reshaped into a sequence of joint locations and passed through an LSTM, producing a temporal embedding $z_{\mathrm{loc}} \in \mathbb{R}^d$.

To construct an agent-specific critic input, the spatial and temporal embeddings are shared across agents and concatenated with a one-hot encoding of the queried agent identity. The critic also receives an action feature vector encoding the actions of the other agents. Following our bandit critic design, we mask out the queried agent's own action so that the critic for agent $i$ conditions on $a^{-i}$ rather than directly on $a^i$.

The resulting critic representation is
\[
z_{\mathrm{critic}}^i
=
\left[
z_{\mathrm{grid}};
z_{\mathrm{loc}};
e_i;
a^{-i}
\right],
\]
where $e_i$ is the one-hot encoding of agent $i$. This representation is passed through a feed-forward fusion layer followed by a ReLU activation, and then through a linear output layer to produce the Q-value estimate for each agent.

To reduce overestimation bias, we use two independent critic heads, as in TD3. Each critic head has its own CNN encoder, location LSTM, fusion layer, and output layer. Thus, the critic outputs
\[
\{Q_{\phi_1}^i, Q_{\phi_2}^i\}_{i=1}^N,
\]
which are used to compute TD3 targets by taking the minimum over the two critic estimates.

\paragraph{Inference Network.}
The inference network for signal localization predicts the hidden source location from the aggregated grid representation. Unlike the actor and critic, which use both spatial observations and location histories, the inference network is implemented as a fully convolutional network over the grid input. Given a multi-channel grid $G_t \in \mathbb{R}^{C \times n \times n}$ constructed from the global trajectory $\D_t$, the network outputs logits over all spatial locations:
\[
\ell_\psi(G_t) \in \mathbb{R}^{n \times n}.
\]
The architecture consists of two $3\times 3$ convolutional layers with ReLU activations, followed by a $1\times 1$ convolution that maps the hidden feature maps to a single logit per grid cell.

The posterior distribution over candidate source locations is obtained by applying a softmax over the flattened grid:
\[
p_\psi(H=(u,v)\mid \D_t)
=
\frac{
\exp(\ell_\psi(G_t)_{u,v})
}{
\sum_{u'=1}^{n}\sum_{v'=1}^{n}
\exp(\ell_\psi(G_t)_{u',v'})
}.
\]
The final estimate is given by the maximum a posteriori prediction:
\[
\widehat H_t
=
\arg\max_{(u,v)}
p_\psi(H=(u,v)\mid \D_t).
\]
During training, the inference network is supervised using the true source location $H^\star$, and its posterior confidence is used to define the learning signal for the exploration policy.

\subsection{Nitrate Concentration Architecture}

For the nitrate monitoring task, we use a CNN-LSTM architecture for both the centralized critic and the inference network. The main difficulty is that useful information is naturally organized by spatial cell, while the raw sampling history is organized as a sequence of agent-time observations. For example, observations from the same cell may appear at different timesteps and may be collected by different agents. A standard LSTM or Transformer over the raw sequence must learn to associate these scattered observations with the same spatial cell and aggregate them across both time and agents, which can be inefficient and unstable. The CNN-LSTM architecture addresses this by first constructing cell-level spatial summaries and then using a convolutional encoder to extract spatial features.

\subsubsection{CNN-LSTM Architecture}
\label{app:nitrate_cnn_lstm}

To address this, we first convert the joint sampling history into grid-level summary statistics. For each grid cell, we aggregate all observations collected at that location across agents and timesteps, including statistics such as the normalized mean, standard deviation, maximum observed value, sampling count, and whether the cell has been sampled. This produces a structured $n \times n$ spatial representation of the team's collected information. A CNN encoder is then applied to this grid tensor to extract spatial features that capture local concentration patterns and sampling coverage.

In parallel, an LSTM encoder processes each agent's action history to summarize temporal sampling behavior. The CNN-based spatial representation and the LSTM-based history representation are then combined to form a global representation of the current sampling state. In the centralized critic, this representation is used to output action-value estimates for each agent. In the inference network, the same type of CNN-LSTM backbone is used to predict confidence scores over the valid grid cells, indicating which locations are likely to belong to the high-concentration target set.

We use the CNN-LSTM architecture for the critic and inference network, where centralized training permits access to the full joint sampling history. During execution, the actor remains decentralized and conditions only on each agent's local sampling history.

\subsection{Recurrent Sequence Encoder with Conditioned Pooling}
\label{sec:lstm_conditional_pool}
Each network in our framework takes as input a trajectory $x_{1:t}$ such as observations or state--action sequences. At a given timestep $t$, the sequence  $x_{1:t}$ is first mapped into an embedding space using a linear layer, and then processed by a recurrent encoder:
\[
h_{1:t} = \lstm(\emb(x_{1:t})).
\]

Depending on the environment, this encoder can either be shared across agents when agents are homogeneous; or instantiated separately when agents have different observation spaces. In either case, the overall architecture remains the same. To summarize the sequence of hidden state into a fixed-dimensional representation, we use a \emph{conditioned temporal pooling} operator, which lets the model adaptively focus on relevant parts of the trajectory.

\paragraph{Conditioned Temporal Pooling}

We define the pooled representation at time $t$ as
\(
c_t = \sum_{s=1}^t \alpha_s h_s,
\)
where the weights $\alpha_s$ are computed using a learned query-key interaction
\(
\alpha_s \propto \exp\!\big(\frac{\langle q_{\text{in}}, W_k h_s \rangle}{\sqrt{d}}\big).
\)
Here, $q_{\text{in}}$ is a learned query vector and $W_k$ is a linear projection. In practice, sequences are padded to a fixed horizon, and we apply a mask during pooling so that the representation at time $t$ only uses information from steps $1$ through $t$, ignoring padded entries and preventing leakage from future timesteps. The resulting context vector $c_t$ is then passed through a gating mechanism, 
\(
\tilde{h}_t = c_t \odot \sigma(W_g q_{\text{out}}),
\)
where $q_{\text{out}}$ is another learned query vector and $\sigma(\cdot)$ is a nonlinear activation function. This conditioned pooling allows the model to emphasize the most informative parts of the trajectory, rather than relying only on the final hidden state or uniformly averaging over timesteps. Each network (actor, critic, and inference) uses its own instance of the sequence encoder described above, sharing only the same architecture but not parameters.

\subsection{Posterior Thompson Sampling Baseline}

\subsubsection{Agent-wise Posterior Construction}
\label{sec:agent_wise_posterior_ts}

We describe the posterior used by each agent in the posterior Thompson-sampling baseline. 
Each agent constructs a belief using its own observations and the observed trajectories of the other agents. 
Let $s^\star\in[n]\times[n]$ denote the true source location, and let $d^\star\in[n]\times[n]$ denote the distractor location. 
For agent $i$, let $u_t^i$ denote its position at time $t$, and let $\mathcal V_t^i=\{u_\tau^i:\tau\le t\}$ denote the cells visited by agent $i$ up to time $t$.

\paragraph{Coverage likelihood.}
We first define the negative-evidence contribution induced by an agent's trajectory. 
For an agent $j\in\{1,2\}$ that can miss the true source with probability $\delta_j$, observing that agent $j$ visited a cell $u_\tau^j$ without identifying the source downweights the hypothesis that $s^\star=u_\tau^j$. 
We write this coverage likelihood as
\[
\ell_j(s,u_\tau^j)
=
\begin{cases}
\delta_j, & u_\tau^j=s,\\
1, & u_\tau^j\neq s.
\end{cases}
\]
The cumulative coverage likelihood from a set of agents $\mathcal J$ is
\[
L_{\mathcal J}(s)
=
\prod_{j\in\mathcal J}
\prod_{\tau\le t}
\ell_j(s,u_\tau^j).
\]

\paragraph{Posteriors for agents 1 and 2.}
Agents $1$ and $2$ are binary detectors of the true source. 
For $i\in\{1,2\}$, agent $i$ maintains a posterior over the true source location,
\[
p_t^i(s)
=
\mathbb P(s^\star=s\mid \mathcal D_t^i,\mathcal V_t^{-i}),
\]
where $\mathcal D_t^i$ is agent $i$'s local binary observation history and $\mathcal V_t^{-i}$ denotes the observed trajectories of the other agents. 
The prior is initialized uniformly:
\[
p_0^i(s)=\frac{1}{n^2},
\qquad s\in[n]\times[n].
\]

Let $y_t^i\in\{0,1\}$ denote the binary observation of agent $i$. 
If agent $i$ visits the true source cell, it misses the target with probability $\delta_i$. 
Thus, if $u_t^i=s$,
\[
P(y_t^i=1\mid s)=1-\delta_i,
\qquad
P(y_t^i=0\mid s)=\delta_i.
\]
If $u_t^i\neq s$, we assume a small false-positive probability $\epsilon_i$ (where $\epsilon_{1,2}=0$):
\[
P(y_t^i=1\mid s)=\epsilon_i,
\qquad
P(y_t^i=0\mid s)=1-\epsilon_i.
\]

Agent $i$ also uses the trajectories of the other binary detector and agent $3$ as negative evidence. 
Therefore, for $i\in\{1,2\}$, define
\[
\mathcal J_i=\{1,2,3\}\setminus\{i\}.
\]
The posterior used by agent $i$ is
\[
p_t^i(s)
=
\frac{
p_0^i(s)
\left[
\prod_{\tau\le t}P(y_\tau^i\mid s)
\right]
L_{\mathcal J_i}(s)
}{
\sum_{s'}
p_0^i(s')
\left[
\prod_{\tau\le t}P(y_\tau^i\mid s')
\right]
L_{\mathcal J_i}(s')
}.
\]
Equivalently, the recursive update is
\[
p_t^i(s)
\propto
p_{t-1}^i(s)
P(y_t^i\mid s)
\prod_{j\in\mathcal J_i}\ell_j(s,u_t^j),
\qquad i\in\{1,2\}.
\]
This posterior downweights cells that were visited by other agents without source identification, while still using agent $i$'s own binary observations as direct evidence.

\paragraph{Posterior for agent 3.}
Agent $3$ observes a continuous signal field affected by both the true source and an unknown distractor. 
Therefore, agent $3$ maintains a joint posterior over source-distractor pairs:
\[
p_t^3(s,d)
=
\mathbb P(s^\star=s,d^\star=d\mid \mathcal D_t^3,\mathcal V_t^1,\mathcal V_t^2).
\]
The prior is initialized uniformly over valid source-distractor pairs:
\[
p_0^3(s,d)\propto 1,
\qquad s\neq d,
\]
and
\[
p_0^3(s,d)=0
\qquad \text{if } s=d.
\]

For a candidate pair $(s,d)$, the expected signal at cell $u$ is
\[
m_{s,d}(u)
=
\max\left\{
\exp\!\left(-\lambda \operatorname{dist}(u,s)\right),
0.7\exp\!\left(-\lambda \operatorname{dist}(u,d)\right)
\right\}.
\]
Here, the first term corresponds to the true source and the second term corresponds to the distractor with intensity $0.7$. 
Let $W_t^3$ denote the local observation window of agent $3$ at time $t$. 
For each cell $u\in W_t^3$, agent $3$ observes a continuous value $y_t^3(u)$, modeled as
\[
y_t^3(u)\mid s,d
\sim
\mathcal N\!\left(m_{s,d}(u),\sigma^2\right).
\]
Assuming conditionally independent observation noise across cells in the local window, the likelihood of agent $3$'s observation is
\[
P(y_t^3\mid s,d)
=
\prod_{u\in W_t^3}
\frac{1}{\sqrt{2\pi\sigma^2}}
\exp\left(
-\frac{\left(y_t^3(u)-m_{s,d}(u)\right)^2}{2\sigma^2}
\right).
\]

Agent $3$ also uses the observed trajectories of agents $1$ and $2$ as negative evidence. 
Since agents $1$ and $2$ miss the true source with probabilities $\delta_1$ and $\delta_2$, respectively, their coverage likelihood is
\[
L_{\{1,2\}}(s)
=
\prod_{j=1}^2
\prod_{\tau\le t}
\ell_j(s,u_\tau^j).
\]
Combining agent $3$'s continuous signal likelihood with this coverage information, the posterior for agent $3$ is
\[
p_t^3(s,d)
=
\frac{
p_0^3(s,d)
\left[
\prod_{\tau\le t}P(y_\tau^3\mid s,d)
\right]
L_{\{1,2\}}(s)
}{
\sum_{s'}\sum_{d'}
p_0^3(s',d')
\left[
\prod_{\tau\le t}P(y_\tau^3\mid s',d')
\right]
L_{\{1,2\}}(s')
}.
\]
Equivalently, the recursive update is
\[
p_t^3(s,d)
\propto
p_{t-1}^3(s,d)
P(y_t^3\mid s,d)
\ell_1(s,u_t^1)
\ell_2(s,u_t^2).
\]

Since the final goal is to identify the true source rather than the distractor, agent $3$ marginalizes over the distractor:
\[
p_t^3(s)
=
\sum_d p_t^3(s,d).
\]

\paragraph{Posterior sampling.}
After updating its posterior, each agent samples a candidate true source location. 
For agents $1$ and $2$,
\[
\tilde{s}_t^i\sim p_t^i(s),
\qquad i\in\{1,2\}.
\]
For agent $3$,
\[
\tilde{s}_t^3\sim p_t^3(s),
\qquad
p_t^3(s)=\sum_d p_t^3(s,d).
\]
The sampled location $\tilde{s}_t^i$ is then used to determine the next movement action of agent $i$.

\subsubsection{Deterministic Movement Rule After Posterior Sampling}
\label{sec:deterministic_movement_ts}

Because agents cannot directly query arbitrary cells, each agent moves one step toward its sampled source using a fixed priority order.

Agent $1$ uses the order
\[
(\mathrm{down},\mathrm{left},\mathrm{up},\mathrm{right}),
\]
agent $2$ uses
\[
(\mathrm{right},\mathrm{down},\mathrm{up},\mathrm{left}),
\]
and agent $3$ uses
\[
(\mathrm{right},\mathrm{up},\mathrm{left},\mathrm{down}).
\]
Let $\operatorname{Move}(p,a)$ be the position obtained by taking action $a$ from position $p$, with boundary clipping. 
Agent $i$ selects the first action $a$ in its priority list such that
\[
\operatorname{dist}(\operatorname{Move}(p_t^i,a),\tilde{s}_t^i)
<
\operatorname{dist}(p_t^i,\tilde{s}_t^i).
\]
If no action decreases the distance, the agent stays. 
The different priority orders break symmetry and encourage different movement patterns across agents.

\subsubsection{Stopping Rule}
\label{sec:posterior_ts_stopping}

Action selection is decentralized and posterior-based, but stopping uses the same aggregated inference module as the main method. 
At each time $t$, the global trajectory $\mathcal D_t$ is passed to the inference module, which outputs a posterior $q_\psi(s\mid\mathcal D_t)$ over source locations. 
The inferred source is $\hat{s}_t=\arg\max_s q_\psi(s\mid\mathcal D_t)$, and the confidence is $r_t=\max_s q_\psi(s\mid\mathcal D_t)$.

The baseline stops at $\tau=\inf\{t:\max_s q_\psi(s\mid\mathcal D_t)\ge 1-\delta\}$ and outputs $\hat{s}_\tau=\arg\max_s q_\psi(s\mid\mathcal D_\tau)$. 
Thus, this baseline differs from the learned policy in the action-selection rule.

\subsection{Implementation Details}
\label{sec:Implementation Details apx}
\paragraph{Seed Option}
All experiments are conducted using three randomly selected seeds: 215, 219, and 321.

\paragraph{Software dependencies.}
All experiments are implemented in Python. We use PyTorch~\citep{paszke2019pytorch} for neural network training, Gymnasium~\citep{towers2024gymnasium} for environment implementation, Hydra~\citep{Yadan2019Hydra} for configuration management, and Weights \& Biases~\citep{wandb} for experiment tracking and visualization. Data processing and analysis are performed using NumPy~\citep{harris2020array}, Pandas~\citep{mckinney2010data}, Matplotlib~\citep{hunter2007matplotlib}, and Seaborn~\citep{waskom2021seaborn}. The main package versions used in our experiments are:
\[
\texttt{numpy}=2.2.6,\quad
\texttt{pandas}=2.3.1,\quad
\texttt{gymnasium}\geq 1.0.0,\quad
\texttt{torch}=2.7.1,\quad
\texttt{matplotlib}=3.10.
\]

\paragraph{Hardware.}
All experiments are run on NVIDIA L40S GPUs with 48GB of GPU memory. Capacity identification experiments take approximately 60 hours to complete, while signal localization experiments finish within approximately 15 hours. The observed GPU utilization during training is typically around 40--50\%.

\section{Additional Experiment Analysis Detailes}
\label{sec:additional_exp_apx}

\subsection{Capacity Identification.}
\subsubsection{Training Dynamics}
We set the confidence level to $\delta = 0.1$, corresponding to a target accuracy of $1-\delta = 0.9$.  In the A3P5 setting, both TD3 and VDN are able to reach the target accuracy early in training. However, once the reward structure begins to favor shorter trajectories more strongly, TD3 adapts more effectively by reducing the average stopping time.  This suggests that TD3 learns an effective sampling strategy that reduces stopping time after reaching the target accuracy. In contrast, VDN does not exhibit a similar stable reduction, resulting in higher stopping times despite achieving comparable accuracy.

In the A5P3 setting, the capacity identification problem becomes more challenging due to the larger hypothesis space. In this case, only TD3 is able to reliably reach the target accuracy and improve the sampling strategy. 
\begin{figure}[ht]
    \centering
        \includegraphics[width=\linewidth]{bandit_results_plot.pdf}
    \caption{Training dynamics in A3P5 and A5P3.
        For each setting, we report the average stopping time and posterior confidence.
        Curves are averaged over three random seeds and smoothed with an exponential moving average parameter of $0.2$.
    }
    \label{fig:training_dynamics_bandit_appendix}
\end{figure}
\FloatBarrier

\subsubsection{Posterior Calibration}
\label{sec:posterior_cal}
To assess how well the inference module's output probabilities align with
empirical accuracy, we compute the Expected Calibration Error \citep{naeini2015obtaining} on
A3P5 over $10{,}000$ trajectories,  pooling the posterior predictions from all time steps, obtaining $\mathrm{ECE} = 0.0065$. The gap between mean
predicted confidence and empirical accuracy is at most $0.02$ in every bin,
as reported in Table~\ref{tab:calibration}.

\begin{table}[t]
  \caption{Posterior calibration in A3P5. Predicted confidence closely tracks
  empirical accuracy across all ten bins.}
  \label{tab:calibration}
  \centering
  \small
  \begin{tabular}{lcccr}
    \toprule
    Confidence Bin & Mean Confidence & Empirical Accuracy & $|\Delta|$ & Samples ($n$) \\
    \midrule
    $[0.0, 0.1)$ & 0.03 & 0.03 & 0.00 & 69{,}777 \\
    $[0.1, 0.2)$ & 0.14 & 0.13 & 0.01 & 13{,}284 \\
    $[0.2, 0.3)$ & 0.23 & 0.22 & 0.01 & 8{,}793 \\
    $[0.3, 0.4)$ & 0.34 & 0.33 & 0.01 & 6{,}280 \\
    $[0.4, 0.5)$ & 0.46 & 0.44 & 0.02 & 8{,}143 \\
    $[0.5, 0.6)$ & 0.53 & 0.53 & 0.00 & 1{,}127 \\
    $[0.6, 0.7)$ & 0.66 & 0.68 & 0.02 & 2{,}599 \\
    $[0.7, 0.8)$ & 0.76 & 0.74 & 0.02 & 5{,}806 \\
    $[0.8, 0.9)$ & 0.85 & 0.85 & 0.00 & 10{,}361 \\
    $[0.9, 1.0)$ & 0.95 & 0.95 & 0.00 & 8{,}535 \\
    \bottomrule
  \end{tabular}
\end{table}

\FloatBarrier
\subsubsection{Additional Analysis}

We further analyze the stopping behavior of different methods using survival plots for the bandit environments (Figure~\ref{fig:survival_bandit}). In both the 3-arm 5-player (A3P5) and 5-arm 3-player (A5P3) settings, the TD3-based architecture exhibits a clear exponential decay in survival probability, indicating that the model learns to terminate exploration efficiently once sufficient confidence is achieved. The plots report the average survival probability across three random seeds for each method.

\begin{figure}[ht]
    \centering
    \includegraphics[width=0.8\linewidth]{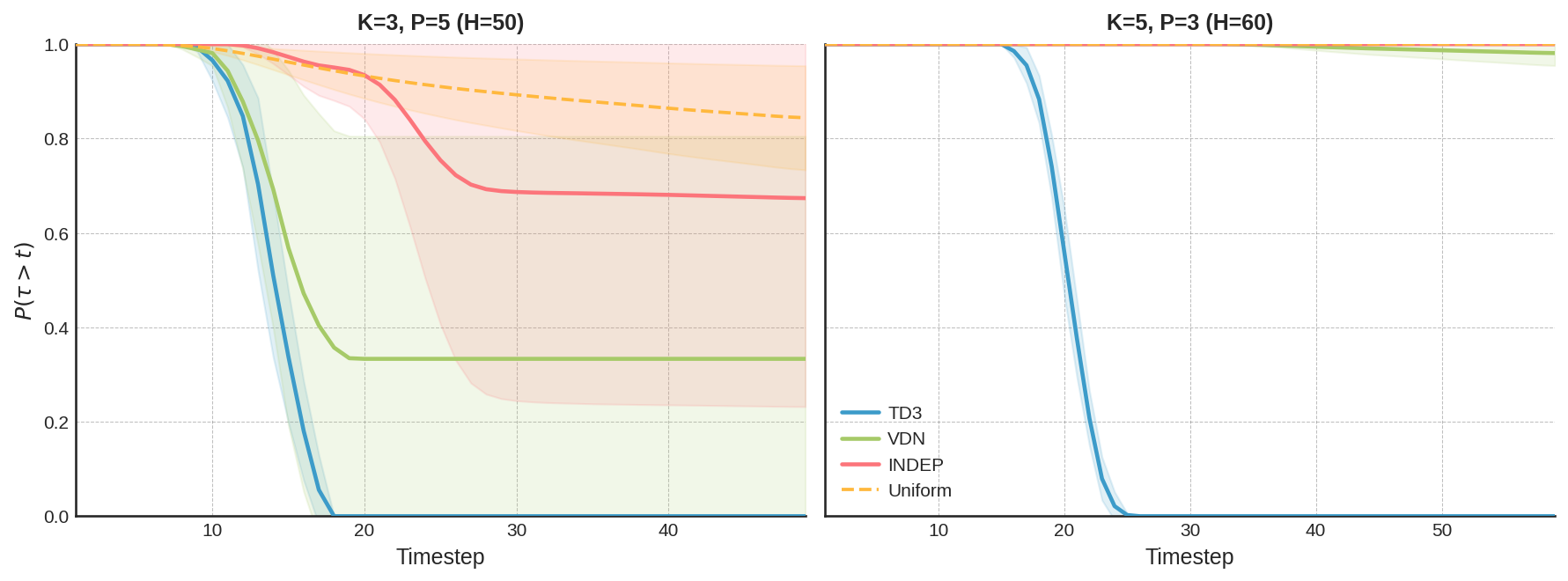}
    \caption{Survival plots for the bandit environments. The survival probability measures the fraction of runs that have not yet selected the stopping action at each timestep. Curves are averaged over three random seeds for each method.}
    \label{fig:survival_bandit}
\end{figure}

\subsubsection{Model Robustness.}
\label{paragraph:model_robustness_bandit}

We evaluate the robustness of our framework under different prior distributions over the environment parameters. In the default setting, both arm capacities and arm means are sampled uniformly. To test robustness with respect to the arm-mean prior, we replace the uniform prior with a scaled Beta distribution parameterized by $(\alpha,\beta)$ and supported on $[0.3,1]$. When $\alpha > \beta$, the distribution places more mass on larger arm means, resulting in environments with more informative feedback and consequently higher inference accuracy.

Figure~\ref{fig:beta_heatmap} reports the average inference accuracy and average stopping time under different Beta priors. Row~(a) corresponds to the A3P5 setting, while Row~(b) corresponds to the A5P3 setting. In each row, the left panel shows inference accuracy and the right panel shows the average stopping time.

\begin{figure}[ht]
    \centering

    \begin{subfigure}[t]{0.45\linewidth}
        \centering
        \includegraphics[width=\linewidth]{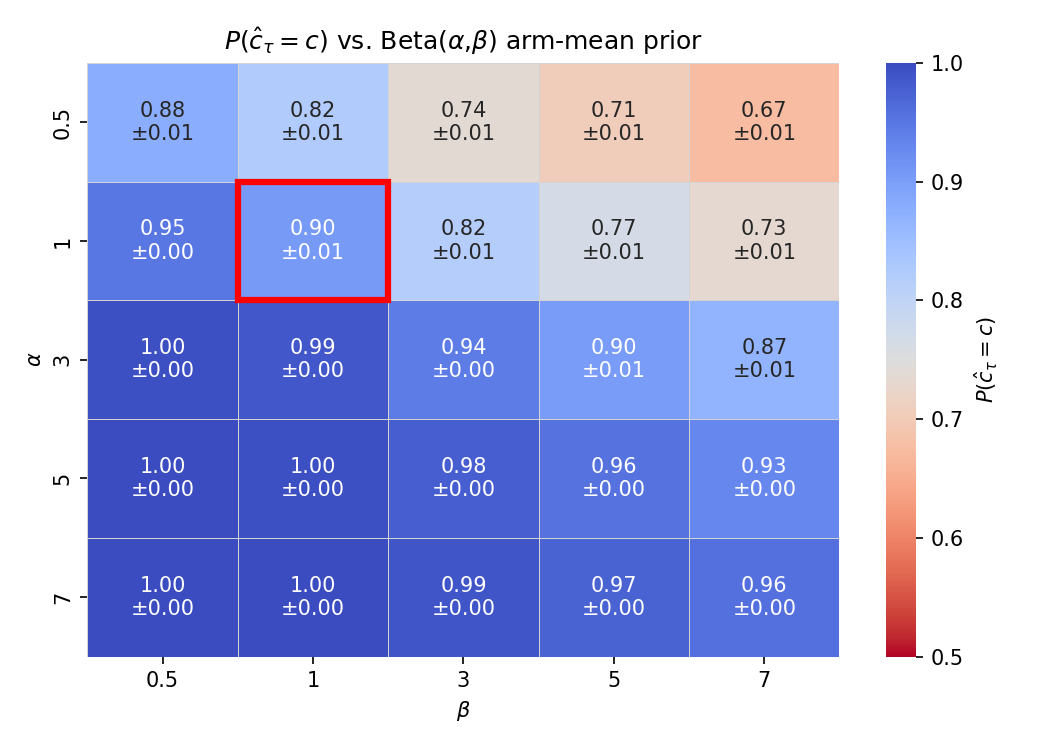}
        \caption{Inference accuracy in A3P5.}
    \end{subfigure}
    \hfill
    \begin{subfigure}[t]{0.45\linewidth}
        \centering
        \includegraphics[width=\linewidth]{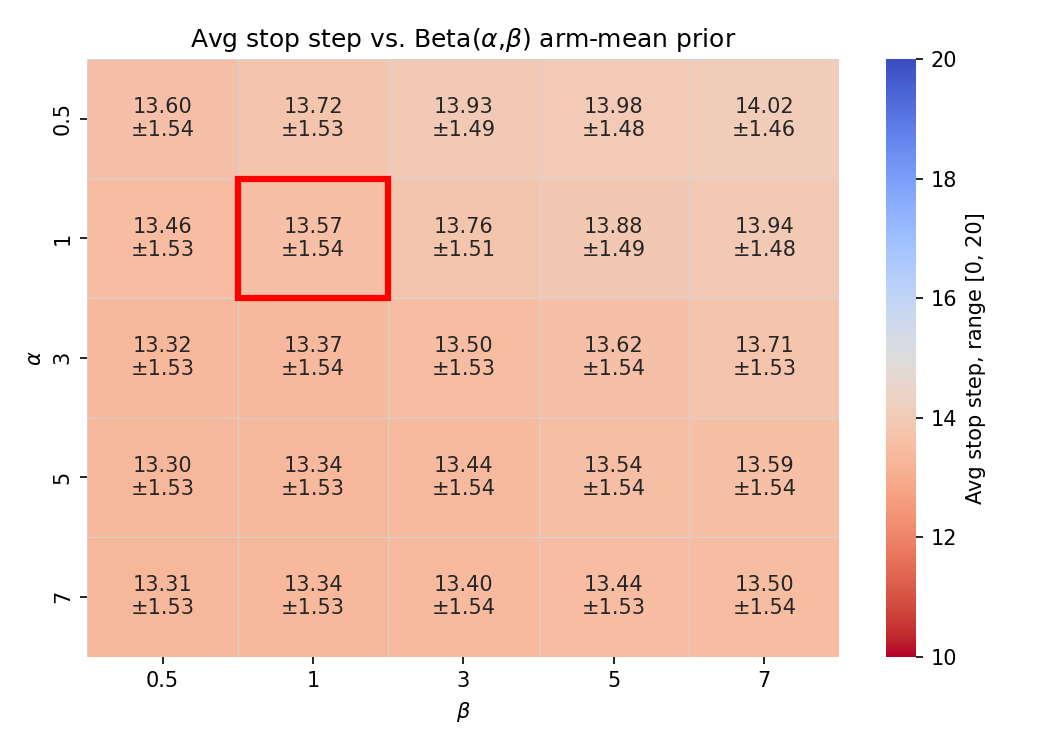}
        \caption{Average stopping time in A3P5.}
    \end{subfigure}

    \vspace{0.4cm}

    \begin{subfigure}[t]{0.45\linewidth}
        \centering
        \includegraphics[width=\linewidth]{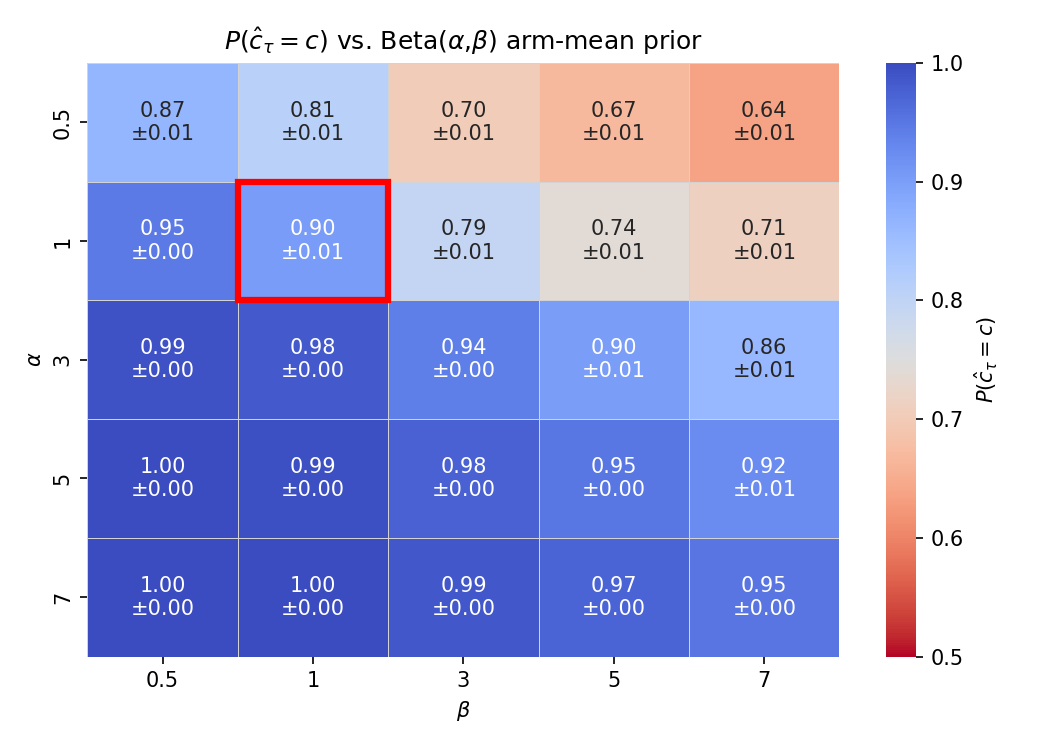}
        \caption{Inference accuracy in A5P3.}
    \end{subfigure}
    \hfill
    \begin{subfigure}[t]{0.45\linewidth}
        \centering
        \includegraphics[width=\linewidth]{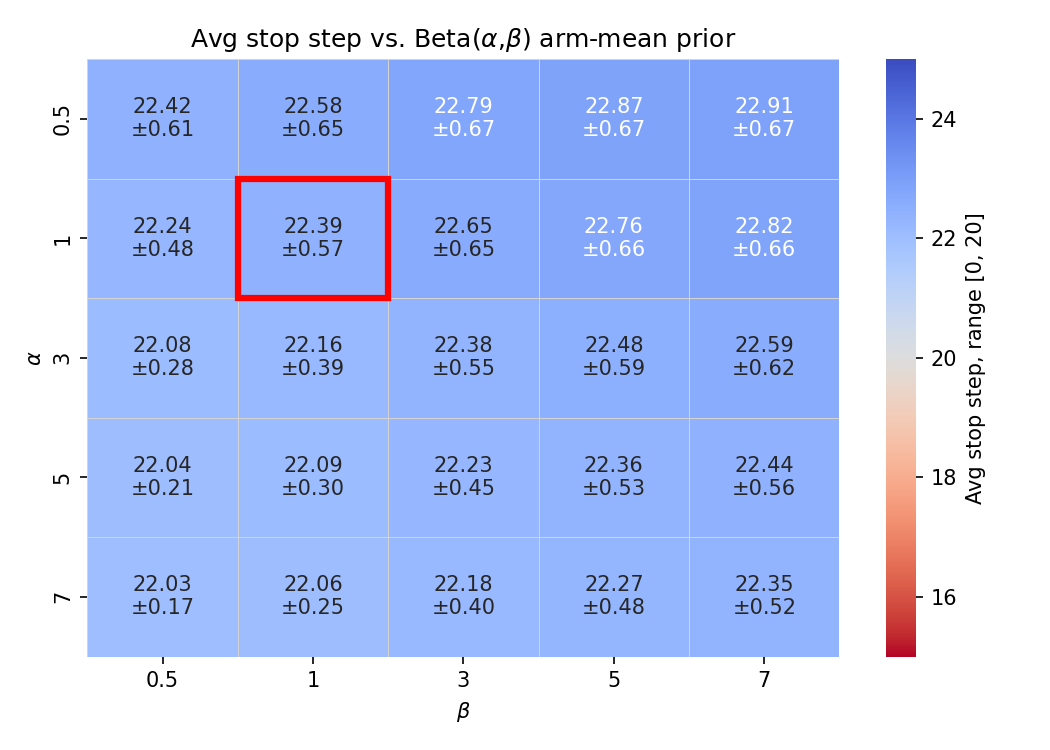}
        \caption{Average stopping time in A5P3.}
    \end{subfigure}

    \caption{Robustness analysis under different scaled Beta priors over arm means. Panels (a) and (c) show inference accuracy, while panels (b) and (d) show average stopping time. The top row corresponds to the A3P5 setting and the bottom row corresponds to the A5P3 setting.}
    \label{fig:beta_heatmap}
\end{figure}
We additionally study robustness with respect to the arm-capacity prior. Instead of sampling capacities uniformly, we generate a probability vector over capacities using a symmetric Dirichlet distribution,
\[
p \sim \mathrm{Dir}(\theta \mathbf{1}_K),
\]
and then sample each arm capacity according to the resulting distribution. We consider $\theta \in \{0.1, 0.5, 1, 5, 10\}$, where smaller values of $\theta$ produce more skewed capacity distributions, while larger values approach a uniform distribution. Figure~\ref{fig:dirichlet_robustness} summarizes the resulting inference accuracy under different Dirichlet priors.

\begin{figure}[ht]
    \centering

    \includegraphics[width=0.45\linewidth]{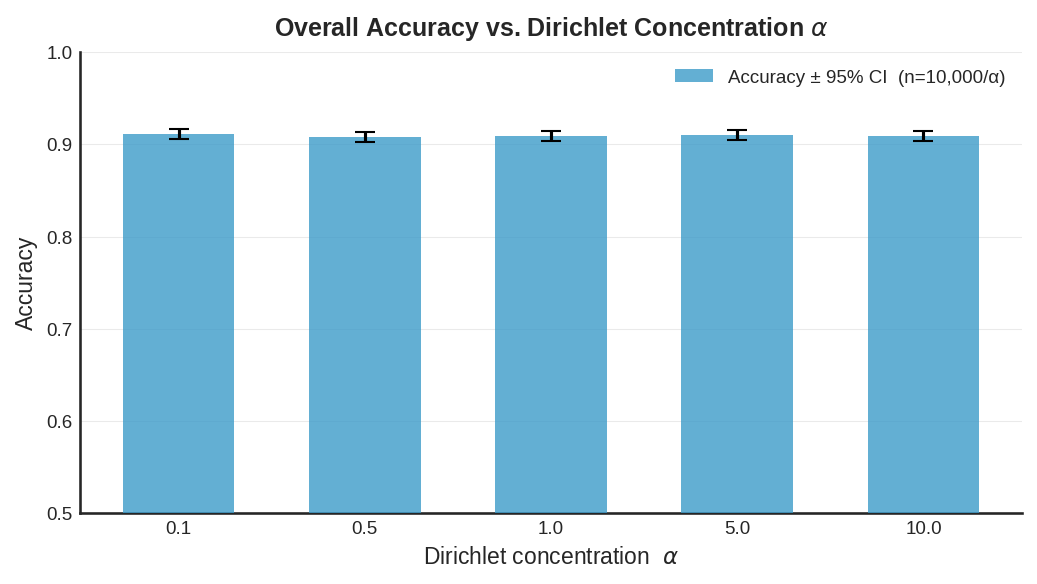}
    \hfill
    \includegraphics[width=0.45\linewidth]{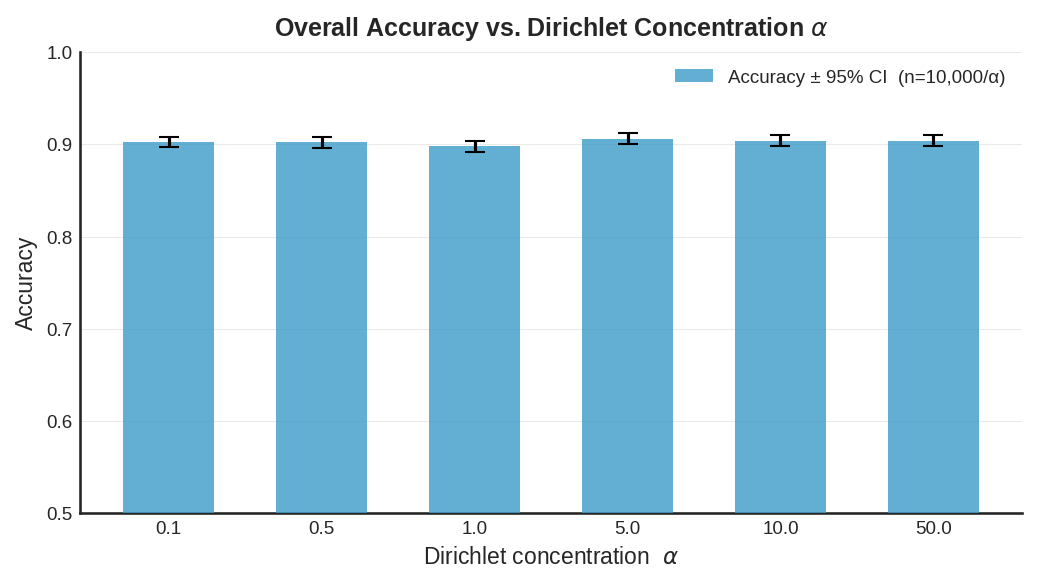}

    \caption{Robustness analysis under different Dirichlet priors over arm capacities. Left: A3P5 setting. Right: A5P3 setting.}
    \label{fig:dirichlet_robustness}
\end{figure}
\clearpage

\subsubsection{Coordination Strategy}
\begin{table}[htbp]
\centering
\small
\caption{Most common joint exploration action at each step across 10{,}000
sampled environments (A3P5) under the greedy ICMAPE-TD3 policy.
At step $t$, only environments still active (stopping time $\geq t$) are
counted ($n_{\text{active}}$).
The joint action $(a^{(1)}, a^{(2)}, a^{(3)}, a^{(4)}, a^{(5)})$ lists the
arm index (0--2) pulled by each player in the modal joint action.
Steps beyond $t{=}15$ are omitted as all environments have stopped.}
\label{tab:joint_action_freq}
\begin{tabular}{clrrc}
\toprule
$t$ & Joint action & Count & $n_{\text{active}}$ & \% \\
\midrule
 0 & $(2,2,2,2,2)$ & 10{,}000 & 10{,}000 & 100.0 \\
 1 & $(2,2,2,2,2)$ & 10{,}000 & 10{,}000 & 100.0 \\
 2 & $(2,2,2,2,2)$ & 10{,}000 & 10{,}000 & 100.0 \\
 3 & $(2,0,2,2,2)$ & 10{,}000 & 10{,}000 & 100.0 \\
 4 & $(2,0,0,2,2)$ & 10{,}000 & 10{,}000 & 100.0 \\
 5 & $(0,0,0,0,0)$ & 10{,}000 & 10{,}000 & 100.0 \\
 6 & $(0,0,0,0,1)$ & 10{,}000 & 10{,}000 & 100.0 \\
 7 & $(1,0,0,0,1)$ & 10{,}000 & 10{,}000 & 100.0 \\
 8 & $(1,0,0,1,1)$ & 10{,}000 & 10{,}000 & 100.0 \\
 9 & $(1,1,1,1,1)$ & 10{,}000 & 10{,}000 & 100.0 \\
10 & $(1,2,1,1,1)$ &  9{,}031 & 10{,}000 &  90.3 \\
11 & $(1,2,2,1,1)$ &  6{,}279 &  9{,}031 &  69.5 \\
12 & $(1,2,2,1,1)$ &  4{,}510 &  6{,}279 &  71.8 \\
13 & $(1,2,2,2,1)$ &  4{,}236 &  5{,}752 &  73.6 \\
14 & $(0,0,0,0,0)$ &  3{,}872 &  4{,}236 &  91.4 \\
15 & $(0,0,0,0,0)$ &    364   &    364   & 100.0 \\
\bottomrule
\end{tabular}
\end{table}

\begin{table}[htbp]
\centering
\small
\caption{Frequency of joint stop-action combinations across 10{,}000 sampled
environments (A3P5).
A combination is recorded when one or more players simultaneously select the stop action at the terminal step.
\emph{\% vol.} is the fraction of voluntary-stop episodes;
\emph{\% all} is the fraction of all 10{,}000 environments.}
\label{tab:stop_combinations}
\begin{tabular}{lrcc}
\toprule
Combination & Count & \% vol.\ stops & \% all envs \\
\midrule
\multicolumn{4}{l}{\textit{1 player stopping simultaneously}} \\
\quad $\{P_5\}$                         & 1{,}778 & 17.8 & 17.8 \\
\quad $\{P_4\}$                         & 1{,}056 & 10.6 & 10.6 \\
\quad $\{P_1\}$                         &   947   &  9.5 &  9.5 \\
\quad $\{P_3\}$                         &    10   &  0.1 &  0.1 \\
\midrule
\multicolumn{4}{l}{\textit{2 players stopping simultaneously}} \\
\quad $\{P_1, P_5\}$                    & 2{,}723 & 27.2 & 27.2 \\
\quad $\{P_4, P_5\}$                    &   813   &  8.1 &  8.1 \\
\quad $\{P_1, P_4\}$                    &   382   &  3.8 &  3.8 \\
\quad $\{P_3, P_5\}$                    &    18   &  0.2 &  0.2 \\
\quad $\{P_2, P_4\}$                    &    14   &  0.1 &  0.1 \\
\quad $\{P_1, P_3\}$                    &     2   &  0.0 &  0.0 \\
\quad $\{P_3, P_4\}$                    &     1   &  0.0 &  0.0 \\
\midrule
\multicolumn{4}{l}{\textit{3 players stopping simultaneously}} \\
\quad $\{P_1, P_4, P_5\}$              & 1{,}806 & 18.1 & 18.1 \\
\quad $\{P_1, P_3, P_5\}$              &   125   &  1.2 &  1.2 \\
\quad $\{P_2, P_4, P_5\}$              &    25   &  0.2 &  0.2 \\
\quad $\{P_3, P_4, P_5\}$              &     3   &  0.0 &  0.0 \\
\quad $\{P_1, P_2, P_4\}$              &     1   &  0.0 &  0.0 \\
\quad $\{P_1, P_3, P_4\}$              &     1   &  0.0 &  0.0 \\
\midrule
\multicolumn{4}{l}{\textit{4 players stopping simultaneously}} \\
\quad $\{P_1, P_3, P_4, P_5\}$        &   257   &  2.6 &  2.6 \\
\quad $\{P_1, P_2, P_4, P_5\}$        &     5   &  0.1 &  0.1 \\
\quad $\{P_2, P_3, P_4, P_5\}$        &     1   &  0.0 &  0.0 \\
\quad $\{P_1, P_2, P_3, P_5\}$        &     1   &  0.0 &  0.0 \\
\midrule
\multicolumn{4}{l}{\textit{5 players stopping simultaneously}} \\
\quad $\{P_1, P_2, P_3, P_4, P_5\}$   &    31   &  0.3 &  0.3 \\
\bottomrule
\end{tabular}
\end{table}

\clearpage

\subsection{Signal Source Localization}

We provide additional results for the signal source localization environment, complementing the final performance results reported in the main text.

\begin{figure}[ht]
\centering
\includegraphics[width=0.98\linewidth]{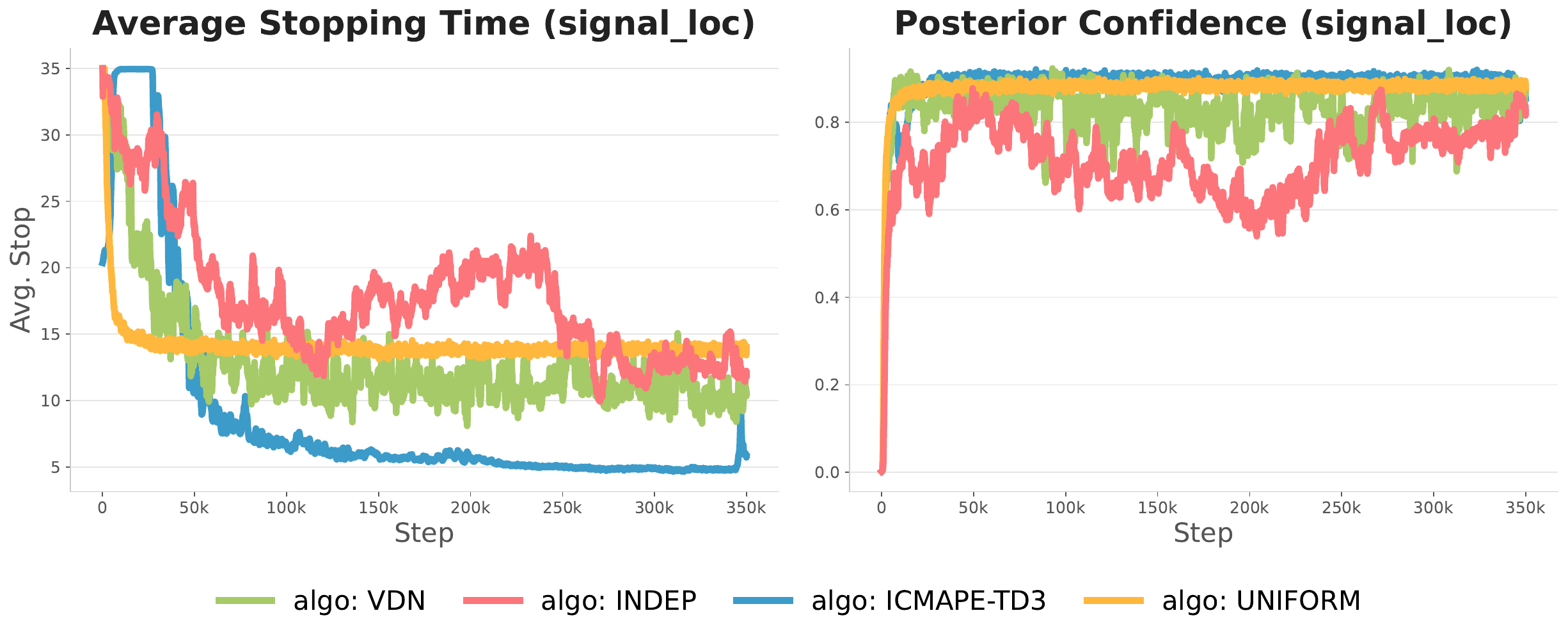}
\caption{Signal Source Localization Training Dynamics. Curves are averaged over three random seeds and smoothed with an exponential moving average parameter of $0.2$.}
\label{fig:training_dynamics_signal}
\end{figure}

We further analyze the stopping behavior of different methods in the signal source localization environment using survival plots (Figure~\ref{fig:survival_robot}). 
\begin{figure}[ht]
    \centering
    \includegraphics[width=0.5\linewidth]{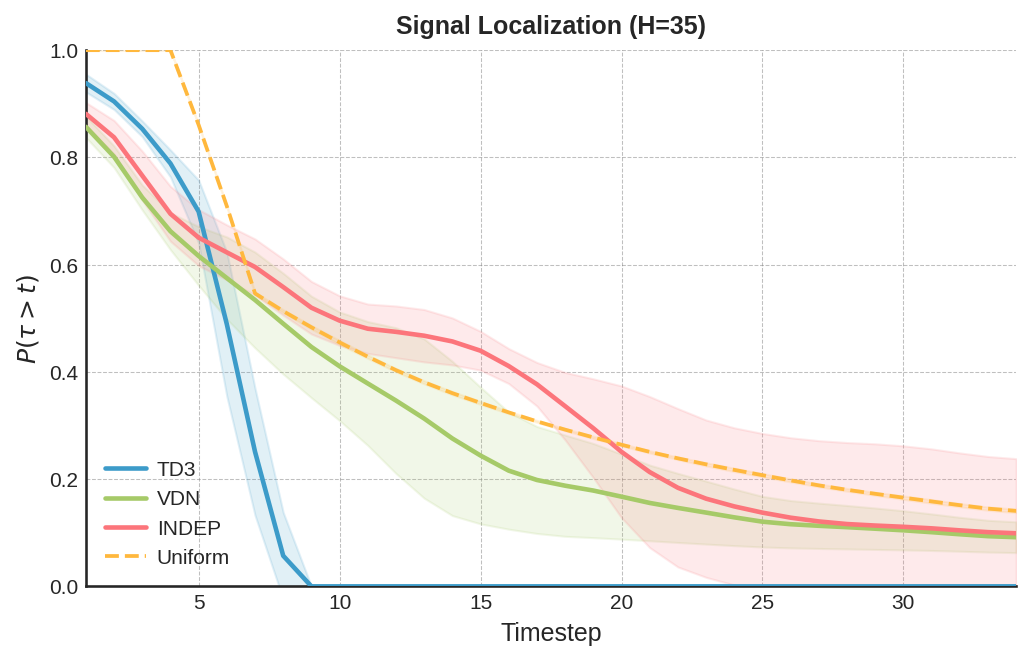}
    \caption{Survival plots for the signal source localization environment. The survival probability measures the fraction of runs that have not yet selected the stopping action at each timestep. Curves are averaged over three random seeds for each method.}
    \label{fig:survival_robot}
\end{figure}

\paragraph{Model Robustness.}

We also evaluate model robustness in the signal source localization environment. In the default setting, the minimum Manhattan distance between the true signal source and the distractor is set to 10, and source distractor pairs are sampled uniformly at random. To study the effect of source distractor separation, we fix the Manhattan distance between the two points to values ranging from 10 to 48. For each distance value, we sample 10,000 instances and report the resulting inference accuracy in Figure~\ref{fig:dist_acc}.

\begin{figure}[ht]
    \centering
    \includegraphics[width=0.5\linewidth]{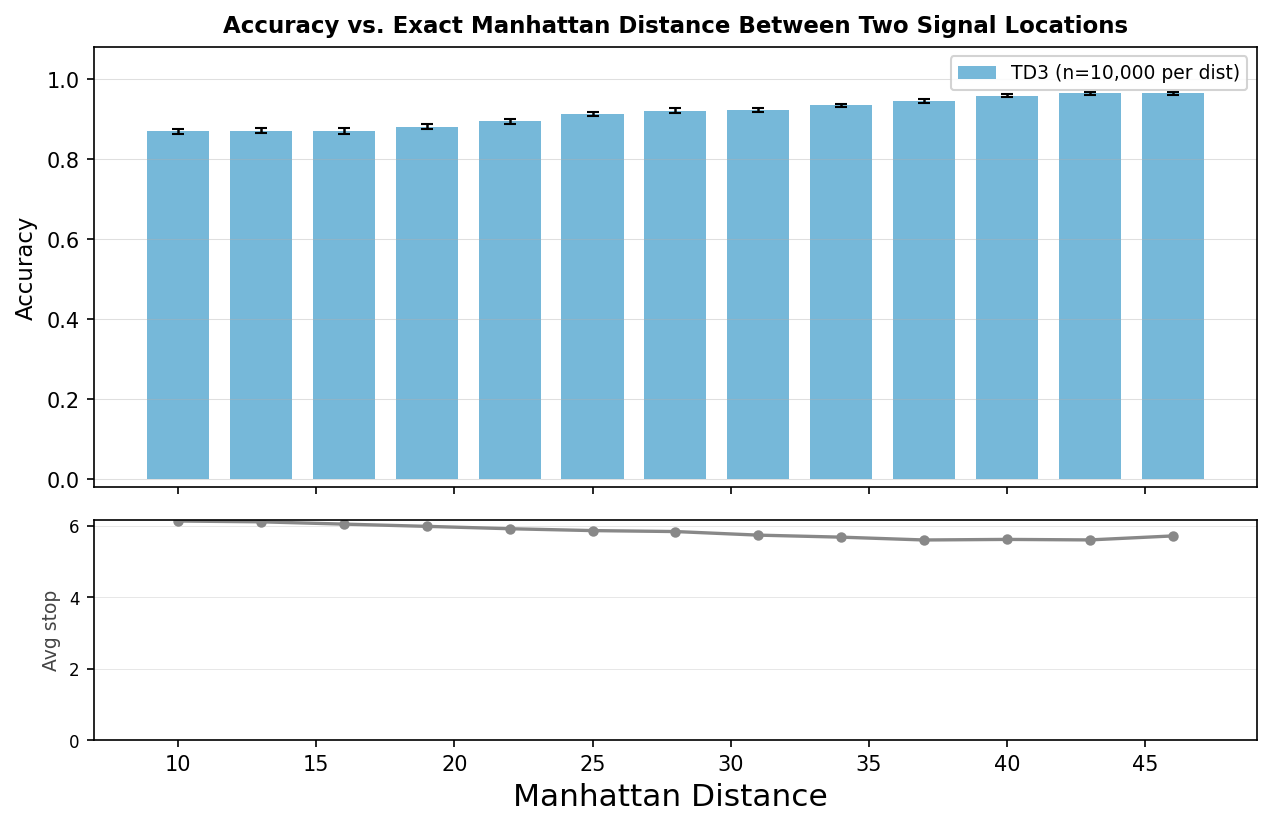}
    \caption{Inference accuracy as a function of the Manhattan distance between the true signal source and the distractor. Larger distances generally make the two sources easier to distinguish. Results are computed over 10,000 sampled instances for each distance value.}
    \label{fig:dist_acc}
\end{figure}

Similar to the robustness analysis over arm means in the capacity identification environment, we also test robustness with respect to the distractor intensity distribution. Specifically, we sample the distractor intensity from a scaled Beta distribution supported on $[0,0.7]$ with different $(\alpha,\beta)$ parameters. Figure~\ref{fig:signal_intensity_beta} reports the inference accuracy under these different distractor-intensity priors.

\begin{figure}[ht]
    \centering
    \includegraphics[width=0.5\linewidth]{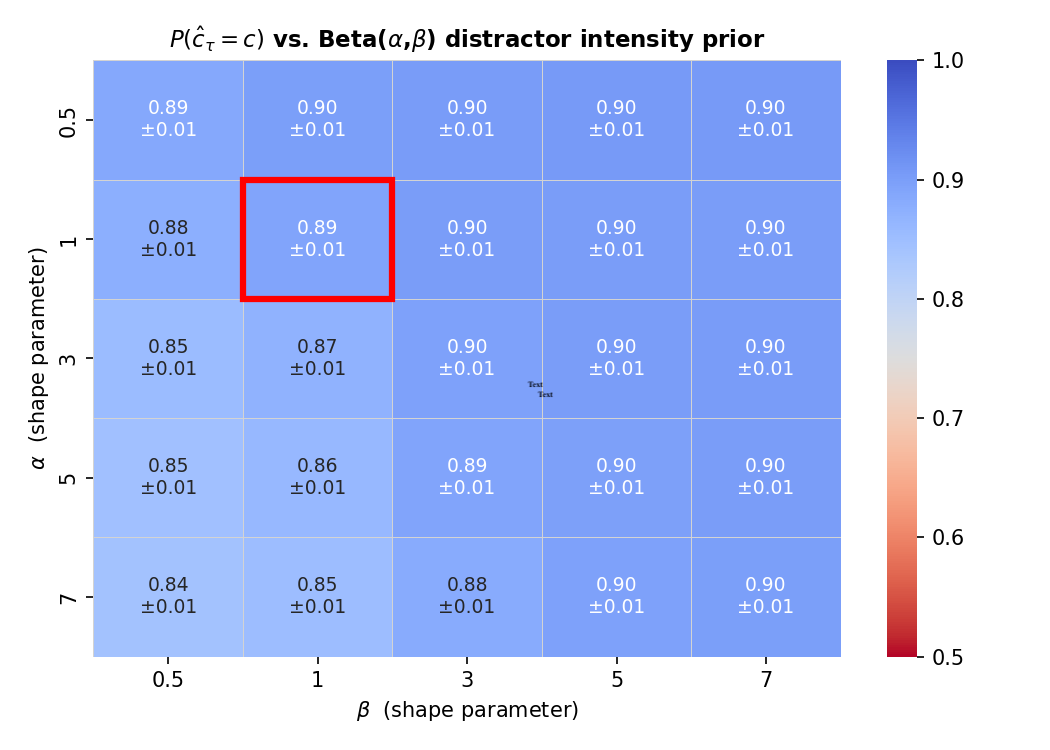}
    \caption{Robustness analysis under different scaled Beta priors over distractor intensity in the signal source localization environment. The heatmap reports inference accuracy for different choices of $(\alpha,\beta)$, with distractor intensities scaled to lie in $[0,0.7]$.}
    \label{fig:signal_intensity_beta}
\end{figure}
\FloatBarrier

\subsubsection{Coordination Strategy}
\begin{table}[htbp]
\centering
\small
\caption{Most common action for each agent at each step across active environments under the greedy ICMAPE-TD3 policy in the signal localization
(10{,}000 samples).
Each cell in the table reports the agent action and its frequency among the
$n_{\text{active}}$ environments still running at that step.
Action labels: \textbf{up} $=$ up \textbf{dn} $=$ down,
\textbf{lf} $=$ left, \textbf{rt} $=$ right; \textbf{STOP} $=$ stop action.
Agent 1 and agent 2 maintains a fully deterministic trajectory throughout}
\label{tab:robot_per_player_action_freq}
\begin{tabular}{ccccc}
\toprule
$t$ & Agent 1 & Agent 2 & Agent 3 & $n_{\text{active}}$ \\
\midrule
0 & dn (100.0\%) & dn (100.0\%) & rt (\phantom{0}99.9\%) & 10{,}000 \\
1 & dn (100.0\%) & dn (100.0\%) & rt (\phantom{0}94.6\%) &  9{,}994 \\
2 & rt (100.0\%) & dn (100.0\%) & rt (\phantom{0}92.6\%) &  9{,}459 \\
3 & up (100.0\%) & dn (100.0\%) & up (\phantom{0}95.2\%) &  8{,}759 \\
4 & up (100.0\%) & dn (100.0\%) & up (\phantom{0}93.0\%) &  8{,}338 \\
5 & up (100.0\%) & dn (100.0\%) & up (\phantom{0}87.5\%) &  7{,}755 \\
6 & rt (100.0\%) & dn (100.0\%) & lf (\phantom{0}86.0\%) &  6{,}783 \\
7 & rt (100.0\%) & \textbf{STOP} (100.0\%) & up (\phantom{0}87.3\%) &  5{,}835 \\
\bottomrule
\end{tabular}
\end{table}

\begin{table}[htbp]
\centering
\small
\caption{Frequency of joint stop-action combinations across 10{,}000 sampled
environments in the robot wildfire task (Signal Localization).
\emph{\% vol.} is the fraction of voluntary-stop episodes;
\emph{\% all} is the fraction of all 10{,}000 environments.}
\label{tab:robot_stop_combinations}
\begin{tabular}{lrcc}
\toprule
Combination & Count & \% vol.\ stops & \% all envs \\
\midrule
\multicolumn{4}{l}{\textit{1 robot stopping simultaneously}} \\
\quad $\{P_2\}$ & 5{,}095 & 50.9 & 50.9 \\
\quad $\{P_3\}$ & 4{,}165 & 41.6 & 41.6 \\
\midrule
\multicolumn{4}{l}{\textit{2 robots stopping simultaneously}} \\
\quad $\{P_2, P_3\}$ & 740 & 7.4 & 7.4 \\
\bottomrule
\end{tabular}
\end{table}

\FloatBarrier

\section{Limitations and Broader Impact}
\label{sec:limitations_broader_impact}
Several limitations suggest directions for future work. Our current framework is trained using an off-policy actor--critic method. While this provides sample reuse and stable value learning, future work could explore on-policy alternatives such as PPO \cite{schulman2017ppo} to better understand how different reinforcement learning objectives affect decentralized information acquisition and coordination strategy in our setting. Another limitation is that our experiments focus on fixed hypothesis spaces, where the underlying hypothesis remains constant throughout an episode. A possible direction is to study more complex environments in which the hypothesis evolves over time, requiring agents to continually adapt their exploration strategies. More broadly, our work studies hypothesis identification from a prior distribution over environments. Future research could extend this setting toward more general forms of environment understanding, where agents are not restricted to identifying a pre-specified hypothesis, but instead learn richer representations of the sampled environment that support a wider range of downstream inference tasks.

\end{document}